\documentclass[final,1p,times]{elsarticle}

\usepackage{amsmath,amssymb,amsthm}
\usepackage{booktabs}
\usepackage{graphicx}
\usepackage{url}
\usepackage{hyperref}
\hypersetup{hypertexnames=false}
\usepackage{enumitem}

\usepackage{tabularx}
\usepackage{array}
\usepackage{rotating}

\usepackage{multirow}
\usepackage{float}
\usepackage{xcolor}

\newtheorem{definition}{Definition}
\newtheorem{proposition}{Proposition}
\newtheorem{lemma}{Lemma}

\newtheorem*{definition*}{Definition}
\newtheorem*{example*}{Example}

\newcommand{\repeateddefinitionname}{}

\newtheorem*{repeateddefinition}{\repeateddefinitionname}

\usepackage{tikz}
\usetikzlibrary{positioning,arrows.meta,fit,backgrounds,calc}
 
\newcommand{\CC}{\textsf{CollaborationCase}}
\newcommand{\OC}{\textsf{OrchestrationCase}}
\newcommand{\Msg}{\textsf{Message}}
\newcommand{\TPa}{\ensuremath{\mathcal{T}_{\mathsf{Pa}}}}

\usepackage[utf8]{inputenc}
\usepackage[english]{babel}
\usepackage[T1]{fontenc}

\makeatletter
\if@twocolumn
  \newenvironment{widetable}{\begin{table*}[t]}{\end{table*}}
  \newenvironment{widesidewaystable}{\begin{sidewaystable*}[p]}{\end{sidewaystable*}}
\else
  \newenvironment{widetable}{\begin{table}[!ht]}{\end{table}}
  \newenvironment{widesidewaystable}{\begin{sidewaystable}[p]}{\end{sidewaystable}}
\fi
\makeatother

\journal{Information Systems}

\begin{document}

\begin{frontmatter}

\title{A Framework for Object-Centric Predictive Monitoring of Collaborative Processes}

\author[ort]{Daniel Calegari\corref{cor1}}
\ead{calegari@ort.edu.uy}
\cortext[cor1]{Corresponding author.}

\author[udelar]{Andrea Delgado}
\ead{adelgado@fing.edu.uy}

\author[udelar]{Leonel Pe\~{n}a}
\ead{lpena@fing.edu.uy}

\author[udelar]{Mart\'{i}n Rubio}
\ead{mrubio@fing.edu.uy}

\affiliation[ort]{organization={Universidad ORT Uruguay}, addressline={Cuareim 1451}, 
            city={Montevideo},
            postcode={11100}, 
            state={Montevideo}, country={Uruguay}}
\affiliation[udelar]{organization={Instituto de Computación, Facultad de Ingeniería, Universidad de la República}, addressline={J. Herrera y Reissig 565}, 
            city={Montevideo},
            postcode={11300}, 
            state={Montevideo}, country={Uruguay}}

\begin{abstract}
Predictive Process Monitoring (PPM) of collaborative, inter-organizational processes requires reasoning over multiple interdependent entities, including participants, messages, local executions, and the global collaboration case. Existing approaches extend traditional event logs with collaboration attributes but retain a single-case perspective, leaving much of this structure implicit. Object-centric process mining (OCPM) provides an alternative by representing these entities as first-class objects with explicit relations and multiple notions of case.
This study connects collaborative PPM and OCPM through three contributions: (i) a formal semantic mapping from extended collaborative event logs to an OCED-conformant object-centric representation, serialized in OCEL 2.0; (ii) a reformulation of collaborative prediction tasks as object-centric prediction tasks; and (iii) a reproducible converter and prediction pipeline implementing the proposed mapping.
We evaluate the framework on four public collaborative event logs and a fifth derived from the BPI Challenge 2013 incident-management log by executing the fourteen reformulated tasks using five predictive strategies across tabular, sequential, and graph-native encodings. We further discuss the benefits, limitations, and threats to the approach's validity. The representation makes collaboration structure explicit and makes it natural to state prediction targets based on object relations that fall outside the case-centric taxonomy, at the cost of increased relational complexity and dependence on object-centric tooling.
\end{abstract}

%%Graphical abstract (also submitted as a separate file: graphical_abstract/graphical_abstract.pdf)
%\begin{graphicalabstract}
%    \begin{center}
%    \includegraphics[width=\linewidth]{graphical_abstract/graphical_abstract.pdf}
%    \end{center}
%\end{graphicalabstract}

%\input{highlights}

\begin{keyword}
Collaborative business processes \sep
Predictive process monitoring \sep Object-centric process mining \sep OCED \sep OCEL 2.0 \sep Distributed systems
\end{keyword}

\end{frontmatter}

% ==================
\section{Introduction}\label{sec:introduction}
% ==================
 
Collaborative inter-organizational business processes span multiple autonomous participants who coordinate toward a shared goal through message exchanges, while each participant executes its own internal (local) orchestration. Compared to single-organization orchestrations, these settings raise several challenges, e.g., fragmented visibility, privacy constraints, asynchronous interactions, and the correlation of events across participants, which have motivated a growing body of work on process mining for collaborative and inter-organizational processes \citep{aalst2011collabproc,aalst2021federated,delgado2025predictcollab}.

Predictive Process Monitoring (PPM) \citep{maggi2014predictive,difrancescomarino2022ppm} uses historical execution data and observations of running cases (event logs) to predict future states or outcomes of ongoing process instances, such as the next activity, a case outcome, or the remaining time. 
These predictions can support proactive management by allowing organizations to anticipate delays, deviations, or coordination problems. Traditional PPM techniques \citep{tax2017predictive,teinemaa2019outcome,verenich2019survey}, nevertheless, assume a single-case notion: each event belongs to a single case, and the prediction is obtained from the prefix of that case. 
This assumption fits intra-organizational orchestrations, but it is restrictive in collaborative processes, where an instance involves a global collaboration, several local executions, the participants involved, and the messages exchanged among them.

Existing collaborative PPM techniques \citep{delgado2025predictcollab} address this problem by extending the event log with collaboration attributes. In Predict-Collab, each event in a global case carries attributes that identify the enacting participant, the element type (internal task, send, or receive), and, for message events, the sending and receiving participants. Although this representation is sufficient for adapting conventional PPM techniques, the collaborative structure remains implicit: participants and message exchanges are encoded as event attributes within traces identified by a single case identifier. Consequently, participant- and message-centered analyses require additional projections and preprocessing steps, which may affect traceability and make it harder to consistently obtain different collaborative viewpoints.

Object-centric process mining (OCPM) \citep{aalst2019objectcentric,aalst2020discovering} was developed precisely to lift the single-case assumption. 
In object-centric event data, events may involve multiple objects of different types, and analysis can be performed from different viewpoints. At the data-modeling level, the Object-Centric Event Data (OCED) \citep{OCEDstandard} standardization initiative seeks to establish a common conceptual and semantic foundation for representing, exchanging, and processing such data. In particular, the OCED Core Model captures the fundamental notions of events, objects, their attributes, and the explicit relations among them, i.e., qualified event-to-object and object-to-object relations. 
In this study, OCEL~2.0 \citep{berti2024ocel} is adopted as the concrete, tool-supported serialization of the object-centric representation, since it supports and extends the OCED Core Model. These capabilities operationalize the OCED concepts needed in collaborative process settings. Participants, local cases (orchestrations), communication interactions, and global collaboration cases can therefore be modeled as explicitly related objects rather than as attributes attached to events or traditional cases.

A recent object-centric research manifesto \citep{seidel2026manifesto} consolidates the terminology and behavioral characteristics of object-centric process management and identifies open challenges for object-centric monitoring and prediction. These include defining prediction targets in the absence of a unique case notion, selecting appropriate analysis perspectives, and ensuring the reproducibility of prediction pipelines. 
In parallel, several object-centric PPM approaches have been proposed \citep{gherissi2023ocppm,adams2022framework,galanti2023predictive}.

Despite this conceptual alignment, the relationship between collaborative PPM and object-centric PPM has not yet been investigated in depth \citep{delgado2025predictcollab}. Existing collaborative PPM techniques rely on extended collaborative event logs but lack an explicit mapping to an object-centric representation. Conversely, object-centric predictive approaches have primarily been evaluated on intra-organizational data and do not explicitly address collaboration-specific entities, relations, and prediction targets. Consequently, the following research questions arise:
\begin{description}
  \item[RQ1.] How can extended collaborative event logs be transformed into a semantically consistent object-centric representation?
  \item[RQ2.] How can collaborative PPM tasks be reformulated as object-centric PPM tasks without loss of label information?
  \item[RQ3.] Can these reformulated tasks be implemented and executed in an object-centric PPM tool, and at what computational cost?
  \item[RQ4.] What benefits and limitations does the object-centric representation exhibit relative to the extended collaborative representation?
\end{description}

This work addresses these questions by proposing and assessing an object-centric representation of collaborative processes for PPM. First, we define a semantic mapping from extended collaborative event logs to an object-centric representation conforming to the OCED Core Model \citep{OCEDstandard}. The representation is serialized in OCEL~2.0 \citep{berti2024ocel} and specified through formal mapping rules with machine-checkable consistency criteria. We also provide a reproducible converter with automated verification. 

Second, we provide formal object-centric definitions of the collaborative prediction tasks. We take Predict-Collab \citep{delgado2025predictcollab}, which introduced an extended collaborative event log and a case-centric taxonomy of fourteen collaborative prediction tasks. We retain that taxonomy and reformulate each of its tasks in terms of the object-centric representation.
The reformulation also makes targets over objects and their relations expressible, which a single flattened case cannot state.

Third, we demonstrate the definitions end-to-end on five collaborative event logs using a native object-centric pipeline built on the \texttt{ocpa} library \citep{adams2022ocpa}, which imports OCEL~2.0 natively and implements the object-centric feature extraction framework of \citep{adams2022framework}, so that features are extracted without flattening the log. Four of the logs are the publicly available logs from the collaborative baseline; the fifth is derived from an incident-management log and is an order of magnitude larger, so the demonstration is not confined to curated data. Over three encodings built on \texttt{ocpa} --- tabular, sequential, and graph --- we exercise five representative predictive strategies, namely Random Forest, XGBoost, Transformer, LSTM, and graph neural networks (GNNs), without proposing a new predictive model or claiming predictive superiority for any of them. The converter, the pipeline, and the complete results are available at \cite{OCPPMcollab}.

Finally, we characterize the benefits and limitations of the representation relative to the extended collaborative one, including how prediction targets outside the case-centric taxonomy arise naturally from the object structure.

The evaluation shows that the mapping and the reformulation hold on all five logs: the constructed representation satisfies every consistency criterion and the exported logs validate against the OCEL~2.0 schema, the object-centric targets reproduce their source-level labels on every evaluated prefix, and the fourteen tasks execute end-to-end for all five predictors. It also shows where executability ceases to be informative. The reformulated categorical and structural targets are learned above a trivial baseline on all but one task--log--predictor entry, whereas on the real-life log, the numeric ones are not, and on these five logs the execution cost of the pipeline is predictable from prefix count only for the graph-native encoding. We report these observations descriptively, since the fold-to-fold dispersion of the measurements does not support ranking the predictors against one another.

The rest of this paper is organized as follows. Section~\ref{sec:background} presents background information and related work on collaborative processes, object-centric process mining, and PPM. Section~\ref{sec:mapping} defines the object-centric representation and the mapping. Section~\ref{sec:tasks} reformulates the prediction tasks.
Section~\ref{sec:implementation} describes how the representation, the task labels, and the prediction pipeline are realized.
Section~\ref{sec:evaluation} presents the empirical validation of the proposed representation and tasks.
Section~\ref{sec:discussion} discusses benefits and limitations. Section~\ref{sec:threats} analyzes threats to validity. Finally, Section~\ref{sec:conclusion} draws conclusions and outlines future work.
% =======================
\section{Background and Related Work}\label{sec:background}
% =======================

 % =============================================
\subsection{Collaborative business processes}\label{sec:bg-collab}
% =============================================

A collaborative business process is enacted jointly by multiple autonomous participants---organizations or organizational roles---each of which executes an internal orchestration and coordinates with the others through message exchanges \cite{delgado2025predictcollab}. As an application example, we use the healthcare collaborative process introduced in \cite{Lorenzo22} and depicted in Figure \ref{fig:healthcare}. The process illustrates a healthcare scenario in which a patient, a gynecologist, a hospital, and a laboratory collaborate to treat a gynecological disease. 

\begin{figure}[!ht]
    \centering
    \includegraphics[width=\linewidth]{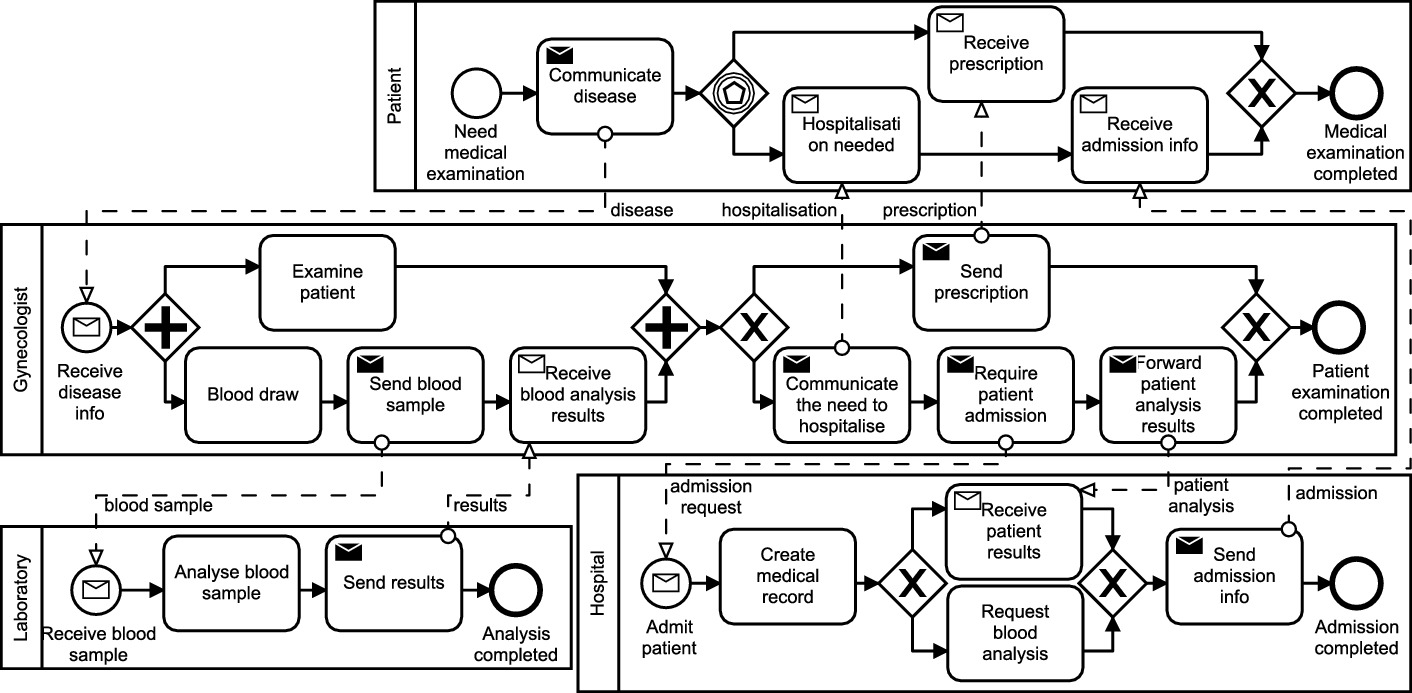}
    \caption{A BPMN 2.0 healthcare business process collaboration from \cite{Lorenzo22}}
    \label{fig:healthcare}
\end{figure}

A \emph{collaboration case} is one end-to-end execution of the collaboration,
spanning the events of all involved participants, e.g., a full execution of the healthcare scenario in Figure \ref{fig:healthcare}. Within a collaboration case, an \emph{orchestration case} is a single, identifiable execution of the local process enacted by a participant, e.g., a single run of the Laboratory process. 
An \emph{activity instance} is an event performed by exactly one participant within a single orchestration case, e.g., a specific blood analysis. 
A \emph{resource} is the individual actor---a person, a team, or a software system---that carries out an activity instance on behalf of a participant, e.g., the laboratory technician who runs a specific blood analysis. Resources and participants are distinct granularities: a participant identifier denotes an organization or an organizational role participating in the collaboration, whereas a resource identifier denotes the actor performing it. 
A \emph{message} is a unit of inter-participant coordination that manifests in the log as up to two events (a send and a receive), each carrying its own-side endpoint and, where the source records it, the counterparty endpoint, e.g., send and receive blood-sample messages carrying the blood sample.

Several challenges arise in log building and PPM in distributed settings, including fragmented data visibility, strict privacy requirements, divergent organizational objectives, and asynchronous events across loosely coupled systems \cite{aalst2011collabproc,aalst2021federated,delgado2025predictcollab}. We deliberately leave integration challenges out of scope, such as correlating message events. In this sense, an integrated log could be obtained by merging participant events under a global case identifier and extending the XES format \citep{ieee2016xes} with collaboration attributes \citep{xescollabextension,delgado2025predictcollab}: each event records the enacting participant (\texttt{participant}), its element type (\texttt{elemType} $\in$ \{task, SendTask, ReceiveTask\}), and, for message events, whichever of \texttt{fromParticipant} and \texttt{toParticipant} the source system captured. These collaboration attributes are participant-level: the format prescribes no actor identifier, so a resource becomes visible only when the source log carries one as an ordinary event attribute, typically the standard XES \texttt{org:resource} \citep{ieee2016xes}. We refer to this representation as the \emph{extended collaborative event log}, or \emph{collaborative event log} for short (see full definition in \ref{app:formal}). 

\begin{definition*}[\textbf{Collaborative event log}]
An extended \textbf{collaborative event log} (adapted from \cite{delgado2025predictcollab}) is a tuple
\[
L = (E_L, \mathcal{C}, \mathcal{P}, \mathcal{A}, \mathit{case}, \mathit{act}, \mathit{time}_L, \mathit{part}, \mathit{elem}, \mathit{from}, \mathit{to})
\]
where $E_L$ is a finite set of events; $\mathcal{C}$, $\mathcal{P}$, and $\mathcal{A}$ are sets of case identifiers, participant identifiers, and activity labels; $\mathit{case}\colon E_L \to \mathcal{C}$, $\mathit{act}\colon E_L \to \mathcal{A}$, $\mathit{time}_L\colon E_L \to \mathbb{T}$, $\mathit{part}\colon E_L \to \mathcal{P}$, and $\mathit{elem}\colon E_L \to \{\textit{task}, \textit{SendTask}, \textit{ReceiveTask}\}$ are total functions; and $\mathit{from}, \mathit{to}\colon E_L \rightharpoonup \mathcal{P}$ are partial functions defined only when $\mathit{elem}(e) \neq \textit{task}$, and possibly undefined even then, when the source does not record that endpoint. The timestamp domain $\mathbb{T}$ is totally ordered.
\end{definition*}

\begin{example*}[\textbf{Collaborative event log}]
Table \ref{table:relevantDataCase459} presents an excerpt from a collaborative event log for the healthcare collaborative process, showing case \texttt{case\_459}, in which the \texttt{fromParticipant} and \texttt{toParticipant} attributes were omitted for simplicity. This case describes one execution of the activities performed by a specific Patient, a Gynecologist, and the Laboratory during a consultation with the Patient. The execution of case \texttt{case\_459} (column \texttt{case}) registered the events corresponding to the activities (column \texttt{activity}) in the order they were performed, i.e., from top to bottom in the table with corresponding timestamps (column \texttt{timestamp}), and the extension data \texttt{elemType} (column \texttt{elemType}) and corresponding participant (column \texttt{participant}). 
\end{example*}

\begin{widetable}
\setlength{\tabcolsep}{2pt}
\caption{Relevant data from the collaborative event log in the example case \texttt{case\_459}.}%
\label{table:relevantDataCase459}
{\footnotesize
\begin{tabular}
{c|l|l|l|l}
\toprule
\textbf{case} & \textbf{timestamp} & \textbf{activity} & \textbf{elemType} & \textbf{participant} \\
\hline
\texttt{case\_459} & 2021-06-17 07:17:07 & Communicate disease            & SendTask    & Patient      \\
\texttt{case\_459} & 2021-06-17 07:17:10 & Receive disease info           & ReceiveTask & Gynecologist \\
\texttt{case\_459} & 2021-06-17 07:17:13 & Examine patient                & task        & Gynecologist \\
\texttt{case\_459} & 2021-06-17 07:17:18 & Blood draw                     & task        & Gynecologist \\
\texttt{case\_459} & 2021-06-17 07:17:21 & Send blood sample              & SendTask    & Gynecologist \\
\texttt{case\_459} & 2021-06-17 07:17:25 & Receive blood sample           & ReceiveTask & Laboratory   \\
\texttt{case\_459} & 2021-06-17 07:17:25 & Analyse blood sample           & task        & Laboratory   \\
\texttt{case\_459} & 2021-06-17 07:17:27 & Send results                   & SendTask    & Laboratory   \\
\texttt{case\_459} & 2021-06-17 07:17:30 & Receive blood analysis results & ReceiveTask & Gynecologist \\
\texttt{case\_459} & 2021-06-17 07:17:33 & Send prescription              & SendTask    & Gynecologist \\
\texttt{case\_459} & 2021-06-17 07:17:38 & Receive prescription           & ReceiveTask & Patient      \\
\bottomrule
\end{tabular}
}
\end{widetable}

% =============================================
\subsection{Object-centric process mining}\label{sec:bg-ocpm}
% =============================================
 
Object-centric process mining starts from the observation that real event data frequently involves multiple interacting entities, so forcing each event into exactly one case distorts the data: \emph{flattening} an object-centric log onto a single case notion can replicate events (convergence) or interleave unrelated behavior (divergence) \citep{aalst2019objectcentric,aalst2020discovering}. In the object-centric paradigm, events are correlated with all objects they involve, multiple notions of case coexist as object types, and analysis explicitly chooses its perspective. A recent object-centric research manifesto \citep{seidel2026manifesto} consolidates the terminology and behavioral characteristics of object-centric process management. It formulates a research agenda whose open challenges include the explicit modeling of case notions, perspective selection, object-centric feature extraction, and the reproducibility and definition of object-centric prediction. 
We adopt the term \emph{viewpoint} for the object type (or set of object types) from whose perspective a prediction target or a flattened log is defined, following its use in object-centric predictive process monitoring \citep{galanti2023predictive}, where an object-centric log is unfolded into single-case traces around a chosen viewpoint object type; the same notion is termed the \emph{leading type} in \citep{adams2022framework}.

At the conceptual level, we adopt the object-centric event data (OCED) core model (OCED Core Model) \citep{OCEDstandard} as the reference for our representation. 
The OCED Core Model is a deliberately minimal metamodel, decoupled from any particular storage format, that captures the concepts common to object-centric event data: atomic events, each of exactly one event type and one timestamp; objects, each of exactly one object type; event and object attributes, represented as typed values owned by their carrying event or object; directed binary \emph{object relations} between objects (O2O relations), each qualified by an object relation type; and a qualified relation, referred to as \emph{observes}, that correlates events with the objects they involve (E2O relations). In its baseline interpretation, attributes and object relations are \emph{static}, i.e., described as a single view rather than as a pair of related objects that identify time-indexed values and an object relation, so changes in relationships over time are not expressible at the core level \citep{OCEDstandard}. This last property constrains how transient states, such as an in-flight message, can be represented (Section~\ref{sec:mapping}). For readability, throughout the paper, we keep the abbreviations E2O for the \emph{observes} relation of the core model and O2O for its \emph{object relations}. Figure~\ref{fig:ocedmetamodel} presents the OCED core metamodel.

\begin{figure}[!ht]
    \centering
    \includegraphics[width=\linewidth]{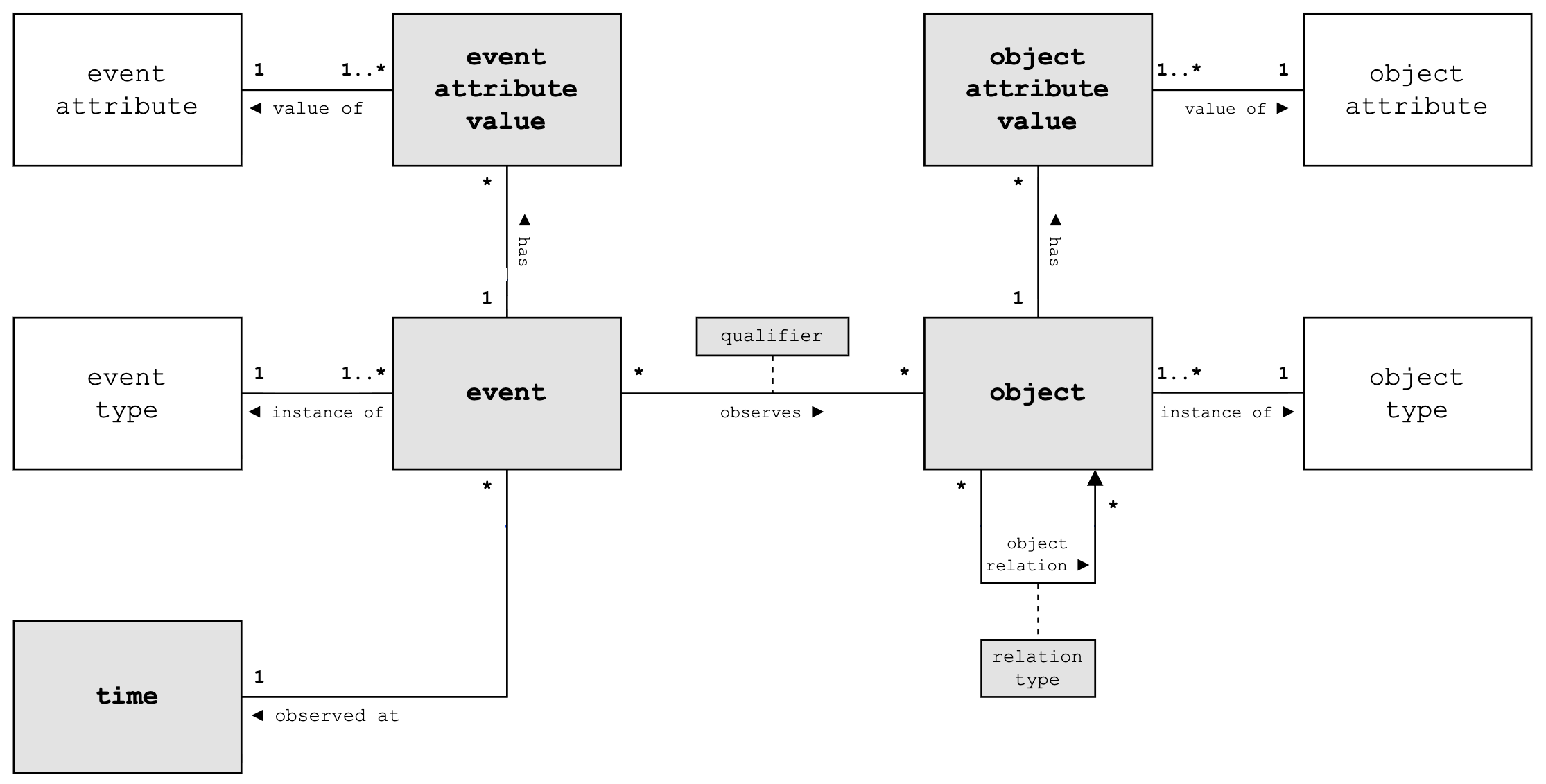}
    \caption{OCED Core Model from \cite{OCEDstandard}}
    \label{fig:ocedmetamodel}
\end{figure}

For concrete serialization and tool support, we target OCEL~2.0 \citep{berti2024ocel}, a concrete specification and serialization format for object-centric event logs, in which events reference multiple objects from typed object classes. OCEL~2.0 varies from and extends the OCED-MM Base Model, and \citep{OCEDstandard} names as the main differences a per-type attribute schema that relates admissible attributes to each event type and object type, and timestamped object attribute values for a dynamic view of the objects' history. It is worth noting that, as observed there, omitting both yields a reduced OCEL~2.0 metamodel that is similar to the OCED Core Model up to the naming of relationships. 

In \cite{berti2024ocel}, the metamodel is supported by a formalization that is crucial for understanding and using OCEL~2.0 (see full definition in \ref{app:formal}).

\begin{definition*}[\textbf{Object-centric event log}]
Let $\mathbb{U}_{\mathit{ev}}$, $\mathbb{U}_{\mathit{etype}}$, $\mathbb{U}_{\mathit{obj}}$, $\mathbb{U}_{\mathit{otype}}$, $\mathbb{U}_{\mathit{attr}}$, $\mathbb{U}_{\mathit{val}}$, $\mathbb{U}_{\mathit{time}}$ (with smallest element $0$), and $\mathbb{U}_{\mathit{qual}}$ be the pairwise disjoint universes of events, event types, objects, object types, attribute names, values, timestamps, and qualifiers. An \textbf{object-centric event log} $\mathcal{L}$\cite{berti2024ocel} is a tuple 
\[(E, O, EA, OA, \mathit{evtype}, \mathit{time}, \mathit{objtype}, \mathit{eatype}, \mathit{oatype}, \mathit{eaval}, \mathit{oaval}, \mathit{E2O}, \mathit{O2O})
\]
with $E \subseteq \mathbb{U}_{\mathit{ev}}$, $O \subseteq \mathbb{U}_{\mathit{obj}}$, $\mathit{evtype}\colon E \to \mathbb{U}_{\mathit{etype}}$, $\mathit{time}\colon E \to \mathbb{U}_{\mathit{time}}$, $EA \subseteq \mathbb{U}_{\mathit{attr}}$, $\mathit{eatype}\colon EA \to \mathbb{U}_{\mathit{etype}}$, $\mathit{eaval}\colon (E \times EA) \rightharpoonup \mathbb{U}_{\mathit{val}}$, $\mathit{objtype}\colon O \to \mathbb{U}_{\mathit{otype}}$, $OA \subseteq \mathbb{U}_{\mathit{attr}}$, $\mathit{oatype}\colon OA \to \mathbb{U}_{\mathit{otype}}$, $\mathit{oaval}\colon (O \times OA \times \mathbb{U}_{\mathit{time}}) \rightharpoonup \mathbb{U}_{\mathit{val}}$, $\mathit{E2O} \subseteq E \times \mathbb{U}_{\mathit{qual}} \times O$, and $\mathit{O2O} \subseteq O \times \mathbb{U}_{\mathit{qual}} \times O$.
\end{definition*}

% =============================================
\subsection{Predictive process monitoring}\label{sec:ppm}
% =============================================
 
Predictive process monitoring (PPM) \citep{maggi2014predictive,difrancescomarino2022ppm} trains models on historical event logs to forecast properties of running cases from their prefixes. 
Typical targets include the next activity and its attributes, case outcomes, and remaining execution time; established techniques range from classical classifiers over engineered prefix encodings to recurrent and attention-based neural architectures \citep{tax2017predictive,teinemaa2019outcome,verenich2019survey}. These works define the prediction target over the prefix of a single case, and the encoding step consumes a flat sequence of events for that case, with no explicit notion of participants, messages, or inter-organizational structure.

Few studies address PPM in collaborative processes. In \citep{delgado2025predictcollab}, the authors consider a case-centric approach in which encoding flattens events and injects the collaboration structure via event attributes. 
The proposal defines a taxonomy of outcome-based, numeric values and next-event collaborative prediction types, summarized in Table~\ref{tab:predtypes}. It implements Predict-Collab, an extension of ProcessTransformer \citep{bukhsh2021processtransformer} that consumes collaboration attributes and provides a subset of the defined predictions. Their formalization of the extended log and of the prediction functions serves as the template for our own formal definitions. 

\begin{widetable}
\setlength{\tabcolsep}{3pt}
\caption{Prediction types for collaborative processes from \cite{delgado2025predictcollab}.}%
\label{tab:predtypes}
{\footnotesize
\begin{tabular}
{l|l|l}
\toprule
\bf Category & \bf Type Id & \bf Prediction \\
\midrule
Next Event & NE-NEPa
& 
Next event that is likely to occur in a participant. \\ 
& NE-NEPr
& Next event that is likely to occur in the process. \\ 
& NE-NPaA
& Next participant that is likely to act. \\ 
& NE-NPaM
& Next participant that is likely to send/receive a message. \\ 
& NE-NMPa
& Next message that is likely to occur in a participant.\\
& NE-NMPr
& Next message that is likely to occur in the process.\\ 
\midrule
Numeric value & NV-NMPa
& 
Number of remaining/total messages of a participant.\\ 
& NV-NMPr
& Number of remaining/total messages in the process.\\ 
& NV-PaT
& Participant remaining/duration time. \\ 
& NV-PrT
& Process remaining/duration time (every participant ends). \\ 
& NV-TNE
& Time until the next event. \\
& NV-TNM
& Time until the next message to send/receive. \\ 
\midrule
Outcome-based & OB-P
&
If a participant will participate in a case. \\
& OB-M
& If a particular message will be sent/received.\\
\bottomrule
\end{tabular}
}
\end{widetable}

Object-centric predictive process monitoring transfers PPM targets to object-centric event data. Approaches differ mainly in how they encode process execution prior to prediction. Tabular and sequential encodings are standard in case-centric PPM \citep{teinemaa2019outcome,difrancescomarino2022ppm}; work \citep{adams2022framework} transfers them to object-centric event data together with a graph-based encoding, and these three families structure the discussion below. Although there are many object-centric PPM approaches, we found no prior work that provides a mapping from collaborative or inter-organizational logs to object-centric representations, nor task definitions for collaborative prediction. This paper does not introduce a new prediction model; it provides the representation and task definitions that enable existing prediction techniques to be applied to collaborative processes.

\paragraph{Tabular encoding}
Traditional machine learning algorithms, such as linear regression, random forests, and gradient boosting, require a flat feature vector per prediction point. 
Work \citep{adams2022framework} defines a general framework for extracting and encoding features from object-centric event data, avoiding traditional log-level flattening, and implements it in the \texttt{ocpa} library \citep{adams2022ocpa}. The \texttt{ocpa} tabular extraction aggregates features from multiple objects linked to an event. For a given event, the encoder extracts static and dynamic attributes from all associated objects of various types and combines them into a single feature vector. This allows the model to predict the next step or remaining time based on a snapshot of the aggregated state of all interacting objects at that specific timestamp.

In \citep{galanti2023predictive}, the authors study object-centric predictive analytics with feature-based models over object interactions. Moreover, work \citep{fioretto2025comparative} compares predictive monitoring across object-centric and classical event logs and reports that the relative merits of the two regimes remain unclear.

\paragraph{Hierarchical and sequential encoding}
Sequence-based deep learning models rely on temporal sequences. While they expect sequential input, they also need to comprehend which objects interact at each step of the process. The sequential encoding of the framework in \citep{adams2022framework}, implemented in \texttt{ocpa} \citep{adams2022ocpa}, builds event sequences in which the feature vector for each state (step) has a hierarchical structure. This tensor maps the specific object types involved in the event along with their dynamic attributes (which evolve). This is fed directly into sequential models such as LSTMs that learn to capture the temporal evolution of different object types within the sequence.

Sequential encodings need not be native: in \citep{gherissi2023ocppm}, an LSTM-based approach (PPM-OC) flattens an OCEL log for a chosen object type and predicts the next activity, the next event time, and the remaining time, thereby enriching prefixes with object-derived features. Work \citep{gherissi2025embeddings} evaluates the original sequential ProcessTransformer \citep{bukhsh2021processtransformer} as a baseline over such a flattened, single-object-type sequence, alongside an LSTM baseline and its own graph-attention-embedded LSTM, which outperforms both on most object types and prediction targets. The sequential (self-attention) Transformer architecture has therefore already been applied to object-centric event data, but only over a flattened, single-object-type sequence that discards the multi-object structure; applying it to an object-centric feature set stacked into per-case sequences instead, as opposed to flattening it to a single object type or replacing it with a graph, remains unreported.

\paragraph{Graph-based encoding}
Instead of flattening or sequencing the data, this approach preserves the exact topology of the object-centric event log. These native graph structures are fed directly into Graph Neural Networks (GNNs), which leverage message-passing architectures to learn from complex many-to-many relationships without losing any structural information. The graph-based encoding in \citep{adams2022framework} extracts graph structures in which both events and objects are interconnected nodes and feeds them into a graph convolutional network for remaining-time prediction.

In \citep{adams2023preserving}, the authors show that preserving complex object-centric graph structures can improve machine-learning tasks in process mining. Work \citep{smit2024hoeg} proposes HOEG, a heterogeneous object-event graph encoding for remaining-time prediction. Work \citep{galanti2024gnn} critically examines whether graph neural networks actually help object-centric predictive analytics and finds that richer models do not automatically improve predictions. Work \citep{adams2024improving} quantifies the contribution of object-centric process mining to PPM by comparing flattened and native pipelines over the \texttt{ocpa} ecosystem, decomposing the observed gains into the impact of flattening and the impact of object-centric innovations, and reports an improvement in remaining-time prediction by combining graph neural networks with object-centric features that have no counterpart in flattened logs. Work \citep{gherissi2025embeddings} proposes an object-centric directly-follows graph embedded by a graph attention network and fed to an LSTM. Work \citep{gherissi2025framework} compares GAT, GCN, and Graph Transformer encodings of process-execution graphs extracted from OCELs, reporting that the attention-based encodings, GAT and Graph Transformer, yield the most accurate predictions of remaining time and total event count; there, attention operates over the object-event graph topology rather than over an event sequence, unlike the sequential Transformer of \citep{bukhsh2021processtransformer}. Work \citep{deleoni2025global} addresses global predictive monitoring of object-centric processes.

Table~\ref{tab:relatedwork} summarizes these approaches, which usually operate
on OCEL logs. None of these works address collaborative logs or collaboration-specific targets, such as predicting the next participant or the next message.

\begin{widetable}
\setlength{\tabcolsep}{3pt}
\caption{Object-centric PPM approaches: encoding, model, and prediction targets}%
\label{tab:relatedwork}
{\scriptsize\renewcommand{\arraystretch}{1.1}
\begin{tabular}{@{}llp{0.20\linewidth}p{0.28\linewidth}p{0.24\linewidth}@{}}
\toprule
\textbf{Approach} & \textbf{Ref} & \textbf{Encoding} & \textbf{Model} & \textbf{Prediction types} \\
\midrule
(Adams, 2022) & \cite{adams2022framework}
& Tabular, sequential, and graph
& Regression, LSTM, and GNN
& Remaining time \\

(Gherissi, 2023) & \cite{gherissi2023ocppm}
& Flattening for a selected object type
& LSTM (PPM-OC)
& Next activity, next event time, remaining time \\

(Galanti, 2023) & \cite{galanti2023predictive}
& Tabular over object interactions
& Feature-based predictive models
& KPI and outcome \\

(Adams, 2023) & \cite{adams2023preserving}
& Native graph
& GNN, compared against graph embeddings and flattening
& Next activity, next timestamp, remaining time \\

(Galanti, 2024) & \cite{galanti2024gnn}
& Tabular, sequential, and graph
& CatBoost, LSTM, and GNN; the GNN does not outperform
& KPI \\

(Smit, 2024) & \cite{smit2024hoeg}
& Native heterogeneous object-event graph (HOEG)
& Heterogeneous GNN
& Remaining time \\

(Adams, 2024) & \cite{adams2024improving}
& Native vs.\ flattened, with object-centric features
& GNN
& Remaining time \\

(Gherissi, 2025a) & \cite{gherissi2025embeddings}
& Graph embeddings over an OCDFG, and a flattened single-object-type sequence for its baselines
& GAT and LSTM; compared against LSTM and ProcessTransformer baselines on the flattened sequence
& Next activity, next event time \\

(Gherissi, 2025b) & \cite{gherissi2025framework}
& Native graph, and Graph2Vec/FGSD baselines
& GAT, GCN, and Graph Transformer, each paired with linear, ensemble, and MLP predictors
& Remaining time, total event count \\

(de Leoni, 2025) & \cite{deleoni2025global}
& Global graph of concurrent process executions
& GAT
& Global KPI \\
\bottomrule
\end{tabular}
}
\end{widetable}

% ==============================================
\section{An Object-Centric Representation of Collaborative Processes}
\label{sec:mapping}
% ==============================================

We propose an object-centric representation of collaborative processes through a model-to-model transformation that maps a collaborative event log to a representation conforming to the OCED Core Model \citep{OCEDstandard}. We first introduce the conceptual correspondence between the elements of both models, summarized in Table~\ref{tab:mapping}, and then present rules defining this transformation, along with consistency criteria that the resulting object-centric representation is guaranteed to satisfy. For implementation purposes, the resulting representation is serialized in OCEL~2.0, following the formalization of \cite{berti2024ocel}, as detailed in Section~\ref{sec:impl-serialization}. Formal definitions and proofs are provided in \ref{app:formal}. Other representations are also possible and may enable different predictive features; their investigation is left for future work.

\subsection{Design decisions}\label{sec:map-decisions}

We distinguish three object roles: organizational objects typed by participant identifiers; coordination objects (\Msg) representing send or receive interactions --- despite the name, one object per interaction, not a single correlated object per logical send--receive exchange, as detailed below --- and execution-scoping objects: (\CC{}) for end-to-end collaboration execution and (\OC{}) for a participant's local execution within it. Within the organizational role, each participant identifier is given its own object type (e.g., Patient) rather than a single generic type, Participant, carrying a name attribute.

The extended collaborative event log of Section~\ref{sec:bg-collab} does not guarantee three properties. None of them is a limitation; each is a precondition that the mapping deliberately does not impose, so that the transformation remains applicable to any log conforming to the input format. First, participant identifiers (\texttt{collab:\discretionary{}{}{}participant}, \texttt{collab:\discretionary{}{}{}fromParticipant}, \texttt{collab:\discretionary{}{}{}toParticipant}) are not assumed to distinguish individual organizational instances; the source format only guarantees organization- or role-level labels, e.g., \texttt{Hospital} denoting a role rather than an individual organization. Second, the correlation of message events is explicitly out of scope for the source format, which guarantees neither message identifiers nor explicit correlation information. Consequently, the mapping represents each communication interaction as its own object, rather than a single correlated message instance per exchange: the log carries no send--receive correspondence. Third, the format does not guarantee a native identifier for a participant's local execution within a collaboration; the mapping therefore identifies it by the collaboration case and the participant that delimit it.

Three further decisions concern how events and attributes are recorded. The per-case source order is embedded in the event identifier itself, rather than recorded as an explicit order attribute: reconstructing a trace requires enumerating a case's events in identifier order. Object attributes are kept static, even though OCEL~2.0 supports timestamped changes to them \citep{OCEDstandard,berti2024ocel}, because the source logs are not required to provide dynamic attributes for the main collaborative concepts. Finally, source activity attributes are preserved as event attributes, even when the same information is also materialized as an object or an object-to-object relation, rather than being dropped once such a relation exists. 

These aspects are made precise by the rules that follow.

\subsection{Mapping rules}\label{sec:map-rules}

The mapping translates the collaborative concepts introduced in Section~\ref{sec:bg-collab}. Each collaboration case is represented by a \CC{} object, which provides the global scope for relating all events belonging to the same end-to-end execution. A participant object, e.g., Patient, represents each collaboration participant; its object type is the participant identifier itself, so that the participants of a collaboration are distinguishable at the type level. Each orchestration case, in turn, is materialized as an \OC{} object that groups the events a participant performs within a collaboration case.
Finally, \Msg{} represents an individual communication interaction, the sending or the reception of a message, recorded by a single event.
Activity instances are represented as events whose event type is the corresponding activity label, while the source attribute \texttt{collab:elemType} is preserved to distinguish normal, send, and receive activities.
Participant involvement, collaboration-case membership, orchestration-case membership, and communication endpoints are made explicit through qualified relations. Table~\ref{tab:mapping} summarizes these correspondences. The transformation can be defined as a mapping function $\mu$ according to the rules M1--M8 below.

\begin{widesidewaystable}
\centering
\caption{Conceptual realization of collaborative concepts in the OCED representation. Relations target objects, not types: $O_{\textsf{Participant}}$ is the set of participant objects $p_p$, whose object types $\tau(p)$ constitute $\TPa$.}%
\label{tab:mapping}
\renewcommand{\arraystretch}{1.15}
\setlength{\tabcolsep}{5pt}
\scriptsize
\begin{tabularx}{\linewidth}{ >{\raggedright\arraybackslash}p{0.2\textwidth} >{\raggedright\arraybackslash}p{0.20\textwidth} >{\raggedright\arraybackslash}p{0.23\textwidth} >{\raggedright\arraybackslash}X } 
\toprule
\textbf{Collaborative concept}
&
\textbf{OCED realization}
&
\textbf{E2O relations}
&
\textbf{O2O relations}
\\
\midrule

Collaboration case
&
Object of type \CC
&
$\mathit{event}
 \xrightarrow{\texttt{in\_collaboration}}
 \CC$
&
$\OC
 \xrightarrow{\texttt{part\_of}}
 \CC$;

$\Msg
 \xrightarrow{\texttt{exchanged\_in}}
 \CC$
\\

Participant
&
Object $p_p$ whose type is $\tau(p) \in \TPa$, the encoding as an object type of the participant identifier $p \in \mathcal{P}_L$: one type per participant (e.g., \textsf{Patient}, \textsf{Hospital})
&
$\mathit{event}
 \xrightarrow{\texttt{in\_participant}}
 O_{\textsf{Participant}}$
&
$\OC
 \xrightarrow{\texttt{for\_participant}}
 O_{\textsf{Participant}}$;

$\Msg
 \xrightarrow{\texttt{from}/\texttt{to}}
 O_{\textsf{Participant}}$
 (when defined in the normalized log)
\\

Orchestration case
&
Object of type \OC
&
$\mathit{event}
 \xrightarrow{\texttt{in\_orchestration}}
 \OC$
&
$\OC
 \xrightarrow{\texttt{part\_of}}
 \CC$;

$\OC
 \xrightarrow{\texttt{for\_participant}}
 O_{\textsf{Participant}}$
\\

Activity instance
&
OCED event preserving the source event attributes, including those also materialized as relations
&
$\mathit{event} \xrightarrow{\texttt{in\_collaboration}} \CC$; \newline
$\mathit{event} \xrightarrow{\texttt{in\_orchestration}} \OC$; \newline
$\mathit{event} \xrightarrow{\texttt{in\_participant}} O_{\textsf{Participant}}$ \newline
\textit{Communication events additionally carry:} \newline $\xrightarrow{\texttt{send}}$ \textit{or} $\xrightarrow{\texttt{receive}}$\textit{}
&
Not applicable
\\

Message interaction
&
Object of type \Msg{} (one per communication interaction)
&
$\mathit{send\ event}
 \xrightarrow{\texttt{send}}
 \Msg$;

$\mathit{receive\ event}
 \xrightarrow{\texttt{receive}}
 \Msg$
&
$\Msg
 \xrightarrow{\texttt{from}/\texttt{to}}
 O_{\textsf{Participant}}$
 (when defined in the normalized log);

$\Msg
 \xrightarrow{\texttt{exchanged\_in}}
 \CC$
\\

\bottomrule
\end{tabularx}
\end{widesidewaystable}

\begin{definition*}[\textbf{Mapping}]
Given a collaborative event log $L$, the \textbf{mapping} $\mu$ produces the object-centric event log, such that $\mu(L)$ is defined component-wise by rules M1--M8. $\mu$ operates on the full signature of Definition~\ref{def:app-r1} (\ref{app:formal}), of which the tuple of Section~\ref{sec:bg-collab} is an abbreviated projection: rule M8 additionally reads the residual attribute names $X_L$, values $V_L$, and assignment $D$, and rule M5 and criterion P1.2 rely on the order $\prec_0$, none of which the abbreviated tuple names.
\end{definition*}

\begin{description}
  \item[M1 (\CC).] For each collaboration case identifier $c$ occurring in the log, we create one object $\mathit{cc}_c$ of type \CC{} with attribute $\texttt{caseId}=c$.
  \item[M2 (Participants).] For each distinct participant $p$ that occurs as a \texttt{collab:\discretionary{}{}{}participant}, \texttt{collab:\discretionary{}{}{}fromParticipant}, or \texttt{collab:\discretionary{}{}{}toParticipant} value, we create an object $p_p$ whose object type is the participant identifier $p$ itself---formally, its injective encoding $\tau(p)$ as an object type (\ref{app:formal})---with the object attribute $\texttt{name}=p$. The participant identifiers occurring in the log are thus declared as object types and we write $\TPa$ for the resulting set of participant types, one per identifier. The scope is the entire log, so the same participant identifier refers to the same participant object throughout. 
  \item[M3 (\OC).] For each pair $(c,p)$ such that participant $p$ performs at least one event in the collaboration case $c$, we create one object $\mathit{oc}_{c,p}$ of type \OC{} with object attributes $\texttt{caseId}=c$ and $\texttt{participant}=p$. The orchestration-case object is functionally determined by its collaboration-case and participant objects. It does not add information to the source log: its event set can be equivalently reconstructed by selecting the events of $c$ performed by $p$. We nevertheless materialize it to make the participant-local execution a directly addressable object-centric viewpoint and an explicit scope to which local execution attributes may be attached, without repeatedly grouping by $(c,p)$; it also supports features such as synchronization among local executions, in line with the object-centric feature classes of \cite{adams2024improving}. The log-level participant object of rule~M2 connects activity instances across cases, whereas \OC{} keeps their local executions separate.
  \item[M4 (\Msg).] For each message sent or received in a collaboration case, we create a distinct object $m$ of type \Msg. The object represents an individual communication interaction recorded by a single participant (e.g., a send event on the sender's side or a receive event on the receiver's side), rather than a centralized, correlated message instance. It stores as object attributes whichever of the sender and the receiver is defined in the log, and is linked only to the event from which it is derived. Its own side (the sender of a send, the receiver of a receive) is always among them, since the normalization supplies it from \texttt{collab:\discretionary{}{}{}participant} where the source left it implicit; only the counterparty side may be absent. The source attribute \texttt{collab:elemType} is preserved, so no additional message-type attribute is required. In accordance with the correlation precondition of Section~\ref{sec:map-decisions}, the core mapping does not infer correspondences between send and receive interactions; consequently, no \Msg{} object's relations change over time, and none transition between lifecycle states.
  \item[M5 (Events).] Each source event $e$ that records an activity instance becomes a single OCED event with the same timestamp, event type (corresponding to the activity label), and preserved event attributes (per rule M8). Events are created in the per-case source order---by timestamp, with ties broken by their order of appearance in the source log (the order $\prec_L$ of \ref{app:formal})---and the event identifier encodes this order: it embeds the case identifier together with the zero-padded rank of $e$ within it, so that comparing identifiers (e.g., lexicographically, for a string-valued encoding) agrees with $\prec_L$ within each case. The case component makes $\mu_E$ injective over the whole log, not only within a single case.
  \item[M6 (E2O).] Every event is related to many objects via qualified relations: to its collaboration case via \emph{in\_collaboration}, to its orchestration case via \emph{in\_orchestration}, and to its participant via \emph{in\_participant}. The \emph{in\_participant} relation is redundant, since it is also obtainable via the path \emph{in\_orchestration} followed by \emph{for\_participant}. We keep the direct \emph{in\_participant} relation so that participant objects remain connected to events at the event-to-object level. In contrast, the \emph{in\_orchestration}--\emph{for\_participant} path keeps the participant distinct from its per-case execution. Send events are associated with their corresponding \Msg{} interaction object via the qualifier \emph{send}, whereas receive events are associated with their own \Msg{} interaction object via the qualifier \emph{receive}. In both cases, the event retains its structural relations and all preserved source attributes.
  \item[M7 (O2O).] Each orchestration case object is related to its collaboration case object by \emph{part\_of} and to its participant object by \emph{for\_participant}. Each message object is related to its sender and receiver by \emph{from} and \emph{to}, respectively, whenever that endpoint is defined in the normalized log---always for the interaction's own side, which the normalization above supplies, and for the counterparty side only where the source registers it---and unconditionally to the corresponding collaboration case object by \emph{exchanged\_in}. The E2O and O2O qualifier vocabularies are kept disjoint to simplify qualifier-based queries.
  \item[M8 (Attribute preservation).] All source activity instance attributes are preserved as event attributes, with their type determined by the carrying event's type, e.g., \texttt{participant}, \texttt{elemType}, \texttt{fromParticipant}, and \texttt{toParticipant} are retained even when the corresponding participant involvement or communication endpoints are also materialized by E2O or O2O relations; residual data attributes are preserved unchanged. This preservation is relative to the normalized log $L=\nu(L_0)$ (\ref{app:formal}): on a message event whose own-side endpoint the raw source $L_0$ left implicit, the retained \texttt{fromParticipant}/\texttt{toParticipant} value is the one the normalization backfills from \texttt{participant}, not a literal reproduction of an unrecorded source attribute.
\end{description}

\begin{example*}[\textbf{Mapping}]
We illustrate the mapping using the first three events of case~\texttt{case\_459} from the healthcare process, depicted in Table~\ref{table:relevantDataCase459}, which involves an interaction between a Patient and a Gynecologist. Figure~\ref{fig:example-ocel} shows the resulting object-centric instances and their relations. 
The mapping introduces two main structural changes. First, the Patient's send event and the Gynecologist's receive event are each represented by their own message interaction object, which preserves whichever of the sender and the receiver is defined for it in the normalized log---here both, the own side coming from \texttt{collab:participant}---without asserting a correspondence between the two interactions (rule~M4). Second, each event is related to the corresponding collaboration case through \emph{in\_collaboration}, to its orchestration case through \emph{in\_orchestration}, and directly to the respective participant through \emph{in\_participant}. Each orchestration case is, in turn, related to its collaboration case through \emph{part\_of} and to its participant through \emph{for\_participant}. In this way, the representation preserves the original collaboration-case trace through the order transported by $\mu_E$, while making the orchestration cases and messages explicit. Figure~\ref{fig:example-ocel} annotates each event with its source timestamp (rule M5) alongside the preserved event attributes of rule M8, and shows the object attributes fixed by rules M2--M4: \texttt{name} for participant objects, \texttt{caseId} and \texttt{participant} for orchestration-case objects, and \texttt{sender}/\texttt{receiver} for message objects.
\end{example*}

\begin{figure*}[!ht]
\centering
\resizebox{\linewidth}{!}{%
\begin{tikzpicture}[
  x=1mm,
  y=1mm,
  font=\footnotesize,
  >={Stealth[length=2mm]},
  obj/.style={
    draw,
    rounded corners=2pt,
    align=center,
    inner sep=3.5pt,
    minimum height=8mm,
    line width=0.5pt
  },
  cc/.style={obj, fill=blue!8},
  pa/.style={obj, fill=orange!12},
  oc/.style={obj, fill=green!10},
  msg/.style={obj, fill=red!8},
  ev/.style={
    draw,
    align=center,
    inner sep=3pt,
    fill=black!4,
    minimum height=7mm,
    line width=0.4pt
  },
  e2o/.style={
    ->,
    line width=0.55pt,
    rounded corners=3pt
  },
  o2o/.style={
    ->,
    densely dashed,
    line width=0.55pt,
    rounded corners=3pt
  },
  qlab/.style={
    inner sep=1pt,
    font=\scriptsize\itshape,
    fill=white
  },
  oqlab/.style={
    inner sep=1pt,
    font=\scriptsize\itshape,
    text=gray!45!black,
    fill=white
  }
]

% =========================================================
% OBJECT LAYER
% =========================================================
% Object identifiers are the ones the converter creates: cc::<case>,
% oc::<case>::<participant>, part::<participant>, msg::<event id>.

\node[cc] (cc) at (0,0) {
  \textbf{case\_459}\,:\,CollaborationCase\\
  \texttt{caseId}=case\_459
};

\node[oc] (ocp) at (-52,-27) {
  \textbf{patient}\,:\,OrchestrationCase
};

\node[oc] (ocg) at (52,-27) {
  \textbf{gynecologist}\,:\,OrchestrationCase
};

% Rule M2: the object type IS the participant identifier, so these two
% objects have distinct types, both elements of \TPa.
\node[pa] (pap) at (-52,-56) {
  \textbf{patient}\,:\,Patient
};

\node[pa] (pag) at (52,-56) {
  \textbf{gynecologist}\,:\,Gynecologist
};

% One Message object per communication event (M4); its identifier is
% derived from the identifier of the event that records the interaction.
\node[msg] (ms) at (-40,-86) {
  \textbf{case\_459::00}\,:\,Message\\
  \texttt{sender}=Patient\\
  \texttt{receiver}=Gynecologist
};

\node[msg] (mr) at (40,-86) {
  \textbf{case\_459::01}\,:\,Message\\
  \texttt{sender}=Patient\\
  \texttt{receiver}=Gynecologist
};

% =========================================================
% EVENT LAYER
% =========================================================
% Event identifiers embed the zero-padded rank of the event within its
% collaboration case (M5), so identifier order agrees with $\prec_L$.

\node[ev] (e1) at (-66,-135) {
  \textbf{case\_459::00}\\
  ``Communicate disease''\\
  \texttt{timestamp}=2021-06-17 07:17:07\\
  \texttt{elemType}=SendTask
};

\node[ev] (e2) at (0,-135) {
  \textbf{case\_459::01}\\
  ``Receive disease info''\\
  \texttt{timestamp}=2021-06-17 07:17:10\\
  \texttt{elemType}=ReceiveTask
};

\node[ev] (e3) at (72,-135) {
  \textbf{case\_459::02}\\
  ``Examine patient''\\
  \texttt{timestamp}=2021-06-17 07:17:13\\
  \texttt{elemType}=task
};

% =========================================================
% GROUP BOXES
% =========================================================

\begin{scope}[on background layer]
  \node[
    draw=gray!50,
    loosely dotted,
    rounded corners=2pt,
    line width=2pt,
    inner sep=4mm,
    fit=(cc)(ocp)(ocg)(pap)(pag)(ms)(mr)
  ] (objectsBox) {};

  \node[
    draw=gray!50,
    loosely dotted,
    rounded corners=2pt,
    line width=2pt,
    inner sep=4mm,
    fit=(e1)(e2)(e3)
  ] (eventsBox) {};
\end{scope}

\node[
  font=\scriptsize\bfseries,
  text=gray!30!black,
  fill=white,
  inner xsep=1.5mm,
  anchor=west
] at ([xshift=3mm]objectsBox.north west) {OBJECTS};

\node[
  font=\scriptsize\bfseries,
  text=gray!30!black,
  fill=white,
  inner xsep=1.5mm,
  anchor=west
] at ([xshift=3mm]eventsBox.north west) {EVENTS};

% =========================================================
% O2O RELATIONS
% =========================================================

\draw[o2o]
   ($(ocp.north east)+(-4mm,0mm)$)
  to node[oqlab, pos=0.6] {part\_of}
  (cc.south west);

\draw[o2o]
  ($(ocg.north west)+(7mm,0mm)$)
  to node[oqlab, pos=0.6] {part\_of}
  (cc.south east);

\draw[o2o]
  (ocp.south)
  to node[oqlab, pos=0.5] {for\_participant}
  (pap.north);

\draw[o2o]
  (ocg.south)
  to node[oqlab, pos=0.5] {for\_participant}
  (pag.north);

% Case membership of the two observations. Drawn before the endpoint
% relations so that the from/to labels mask these paths where they cross.
\draw[o2o]
  ($(ms.north)+(10mm,0mm)$)
  to node[oqlab, pos=0.84] {exchanged\_in}
  ($(cc.south)+(-8mm,0)$);

\draw[o2o]
  ($(mr.north)+(-10mm,0mm)$)
  to node[oqlab, pos=0.84] {exchanged\_in}
  ($(cc.south)+(8mm,0)$);

% Endpoint relations for the send observation msg::e::case_459::00.
\draw[o2o]
  (ms.north west)
  to node[oqlab, pos=0.5] {from}
  (pap.south);

\draw[o2o]
  (ms.north east)
  -- node[oqlab, pos=0.78] {to}
  (pag.south west);

% Endpoint relations for the receive observation msg::e::case_459::01.
\draw[o2o]
  (mr.north west)
  to node[oqlab, pos=0.45] {from}
  (pap.south east);

\draw[o2o]
  (mr.north east)
  to node[oqlab, pos=0.5] {to}
  (pag.south);

% =========================================================
% E2O RELATIONS: COMMUNICATION OBSERVATIONS
% =========================================================

\draw[e2o]
  ($(e1.north east)+(-15mm,0mm)$)
  to node[qlab, pos=0.52] {send}
  (ms.south west);

\draw[e2o]
  ($(e2.north east)+(-1mm,0mm)$)
  to node[qlab, pos=0.52] {receive}
  (mr.south west);

% =========================================================
% E2O RELATIONS: PARTICIPANT PROJECTIONS
% =========================================================

\draw[e2o]
  ($(e1.north west)+(8mm,0mm)$)
  to node[qlab, pos=0.5] {in\_orchestration}
  (ocp.south west);

\draw[e2o]
  (e2.north)
  -- node[qlab, pos=0.8] {in\_orchestration}
  (ocg.south west);

\draw[e2o]
  ($(e3.north east)+(-6mm,0mm)$)
  to node[qlab, pos=0.5] {in\_orchestration}
  (ocg.south east);

% =========================================================
% E2O RELATIONS: COLLABORATION CASE
% =========================================================

\draw[e2o]
  (e1.west)
  -- ++(-8,0)
  |- node[qlab, pos=0.45] {in\_collaboration}
  (cc.west);

\draw[e2o]
  (e2.north)
  -- node[qlab, pos=0.53] {in\_collaboration}
  (cc.south);

\draw[e2o]
  (e3.east)
  -- ++(8,0)
  |- node[qlab, pos=0.45] {in\_collaboration}
  (cc.east);

% =========================================================
% E2O RELATIONS: PARTICIPANTS
% =========================================================

\draw[e2o]
  ($(e1.north)+(-3mm,0mm)$)
  to node[qlab, pos=0.30] {in\_participant}
  (pap.south west);

\draw[e2o]
  (e2.east)
  -- ++(18,0)
  to node[qlab, pos=0.30] {in\_participant}
  ($(pag.south east)+(-6mm,0mm)$);

\draw[e2o]
  ($(e3.north)+(-1mm,0mm)$)
  to node[qlab, pos=0.5] {in\_participant}
  (pag.south east);

\end{tikzpicture}
}
\caption{Instance-level representation of the first three events of
collaboration case~\texttt{case\_459} (Table~\ref{table:relevantDataCase459}).
Dashed arrows represent qualified O2O relations, whereas solid arrows represent
qualified E2O relations. Some attributes are omitted for readability.}
\label{fig:example-ocel}
\end{figure*}

\subsection{Consistency criteria}\label{sec:cons-criteria}

The transformation ensures that six consistency criteria hold of its result, and they can be automatically checked. The criteria follow from the construction of the rules M1--M8; \ref{app:formal} provides the corresponding argument.
\begin{description}
  \item[P1.1 (Totality).] Every source event is mapped to exactly one event, preserving its timestamp, its event type, and every M8-preserved attribute value (decoded).
  \item[P1.2 (Per-case partition and order).] The set of events related to each $\mathit{cc}_c$ by the \emph{in\_collaboration} qualifier is exactly the image $\{\mu_E(e): \mathit{case}(e) = c\}$ of the events of case $c$; these sets partition the events of $\mu(L)$ as the cases partition those of $L$. Because the event identifier embeds the source order $\prec_L$ (M5), the events of $\mathit{cc}_c$ taken in identifier order are $\langle\mu_E(e_1),\ldots,\mu_E(e_{n_c})\rangle$; hence flattening on \CC{} and sorting by event identifier reconstructs the source trace $\sigma_c$ up to attribute encoding.
  \item[P1.3 (Message well-formedness).] Every \Msg{} object is related to exactly one communication event, either by a \emph{send} relation or by a \emph{receive} relation, but not both; conversely, every send (resp.\ receive) event is related, by \emph{send} (resp.\ \emph{receive}), to exactly one \Msg{} object. For each side of a \Msg{} object, the O2O relation (\emph{from} or \emph{to}), the object attribute (\texttt{sender} or \texttt{receiver}), and the preserved event attribute (\texttt{fromParticipant} or \texttt{toParticipant}) are defined simultaneously and agree whenever defined; a \Msg{} object whose counterparty endpoint the source never recorded simply lacks that relation and both attributes, rather than exposing a disagreement. For a send interaction, the event participant is the sender; for a receive interaction, the event participant is the receiver. No ordering or correspondence between distinct send and receive interactions is required. Moreover, every \Msg{} object is related, by exactly one \emph{exchanged\_in} relation, to the \CC{} object of the same collaboration case as its communication event.
  \item[P1.4 (OrchestrationCase coherence).] For every event, the associated \OC{} object is \emph{part\_of} the associated \CC{} object. These are, respectively, the objects reached through the event's \emph{in\_orchestration} and \emph{in\_collaboration} relations. Moreover, each \OC{} object is \emph{for\_participant} exactly one participant object, whose object type and \texttt{name} attribute encode the same participant identifier as its own \texttt{participant} attribute.
  \item[P1.5 (No orphan objects).] Every object participates in at least one E2O or O2O relation. For participant objects, this follows from how the participant set of the source log is defined: every participant either performs an event or is a \emph{from}/\emph{to} endpoint of some send or receive event, covered respectively by an E2O \emph{in\_participant} relation or an O2O \emph{from}/\emph{to} relation; no additional source-log precondition is needed.
  \item[P1.6 (Participant coherence).] For every event, the participant related to it through the \emph{in\_participant} qualifier coincides with the participant associated with it through \emph{for\_participant} with its \emph{in\_orchestration} object. Hence, the direct \emph{in\_participant} relation and the two-step \emph{in\_orchestration}--\emph{for\_participant} path identify the same participant.
\end{description}

% =====================================================================
\section{Object-Centric Definitions of Collaborative Prediction Tasks}
\label{sec:tasks}
% =====================================================================

In this section, we reformulate collaborative prediction tasks as object-centric prediction tasks over the object-centric representation introduced in Section~\ref{sec:mapping}. Throughout the reformulation, we deliberately keep \CC{} as the single scoping viewpoint for all fourteen tasks. This preserves the global execution prefixes of \cite{delgado2025predictcollab}, so the object-centric and source-level targets are evaluated over exactly the same observations.

We follow the notation of \ref{app:formal}: an extended collaborative event log (Definition~\ref{def:app-r1})
\begin{multline*}
L = (E_L, \mathcal{C}, \mathcal{P}, \mathcal{A}, X_L, V_L, \mathit{case}, \mathit{act}, \mathit{time}_L,
\mathit{part}, \mathit{elem}, \mathit{from}, \mathit{to}, D, \prec_0)
\end{multline*}
---the subscript $L$ marks a component as belonging to this one fixed log, e.g., $E_L$ is its event set and $\mathit{time}_L$ its timestamp function, not a family of distinct logs---and its mapping $\mu(L)$ to the single OCEL~2.0 log in the sense of \cite{berti2024ocel}. We write $\sigma_c = \langle e_1, \ldots, e_{n_c} \rangle$ for the trace of case $c$ and, as in \cite{delgado2025predictcollab}, $\mathit{hd}^k(\sigma_c) = \langle e_1, \ldots, e_k \rangle$ for its event prefix of length $k \in [1, n_c - 1]$. Since the prediction tasks are redefined over $\mu(L)$, they are stated over the corresponding sequence of $\mu(L)$ events, written $\bar{\sigma}_c$ and introduced in Section~\ref{sec:predictions}, rather than over $\sigma_c$ itself. We use $\bot$ as the designated outcome when no further target exists; this corresponds to the \emph{dummy} class of \cite{delgado2025predictcollab} for categorical tasks and to their sentinel constants for numeric ones. The Boolean outcome tasks (OB-P, OB-M) instead return $\mathsf{true}$ or $\mathsf{false}$, so $\bot$ never denotes a Boolean value. NV-PaT is the one numeric task that departs from this convention: following \cite{delgado2025predictcollab}, its fallback is the value $0$ rather than $\bot$ (\ref{app:tasks}).

\subsection{From global traces to viewpoints}\label{sec:tasks-viewpoint}

In the extended collaborative log, every prediction is defined over the global collaboration trace, and a target is scoped to a participant or to the messages by filtering the trace on event attributes \citep{delgado2025predictcollab}. In an object-centric log, a prediction is instead defined from a \emph{viewpoint}: the case notion is not fixed in advance but is chosen by selecting an object type, and the events of an execution are those related to one object of that type \citep{seidel2026manifesto}. We adopt \CC{} as the viewpoint for all tasks, allowing us to reconstruct the original execution, keeping the prefixes identical to those in \cite{delgado2025predictcollab}.

Each predicted label is stated over one category of entities, and that category need not reduce to a single existing object, nor coincide with the object type that scopes the prefix. 
We call this category the \emph{target anchor} of the task: the category of entities over which the label is denoted, counted, or quantified, not the object it identifies. The anchor is usually a single object type, but it need not be one, because rule~M2 gives each participant identifier its own type: labels about a participant are anchored on the whole family $\TPa$ of participant types rather than on any one of them. The anchor is \Msg{} for labels about messages (their kind, their number, or the time until the next one), $\TPa$ for labels about a participant, \OC{} for labels about a participant's local execution---including that execution's duration, NV-PaT---and \CC{} for labels about the global process, including its duration, NV-PrT and NV-TNE. The anchor is fixed by what the label denotes or ranges over, not by the events it quantifies: NE-NMPa and NV-NMPa restrict their quantification to one participant's orchestration case, yet they return a message kind and a message count, so both are anchored on \Msg.
Distinguishing the viewpoint from the anchor allows the same \CC-prefix to support labels for different objects without generating a separate source log for each target anchor; this viewpoint flexibility is identified as a distinctive trait of the object-centric setting in \cite{seidel2026manifesto}. In the object-centric predictive monitoring approaches reviewed in Section~\ref{sec:ppm}, the target coincides with the scoping object type, so the distinction does not arise. It arises here because the fourteen collaborative tasks state labels about participants, messages, orchestration cases, and the collaboration case itself under a single \CC-scoped prefix, rather than one prefix per target anchor. The anchor is a device for reading the reformulation rather than part of a task's definition: each task of \ref{app:tasks} is stated self-containedly, and neither those definitions nor Proposition~\ref{prop:p2-equivalence} refers to how a task is anchored.

This separation also clarifies what a prediction task is, independently of how it is computed. A task is the pair formed by the scoping viewpoint and a target function over the object-centric log; together, they fix the ground truth for every prefix of an execution as a property of the log alone. How a predictor converts the prefix into model inputs is a separate matter that affects the predictor's performance but not the target's definition. Accordingly, the definitions below are stated over $\mu(L)$ and the \CC{} viewpoint, and they do not refer to any encoding strategy.
Section~\ref{sec:impl-labels} describes how they are computed over the serialized log.

\subsection{Redefining predictions}\label{sec:predictions}

To define the labels, we first introduce well-defined accessor functions, as specified by the construction of the mapping (\ref{app:formal}). Let $\mathit{cc}_c$ be a \CC{} object and let
$\bar{\sigma}_c = \langle \varepsilon_1, \ldots, \varepsilon_{n_c} \rangle = \mu_E(\sigma_c)$
be the events related to $\mathit{cc}_c$ by the \emph{in\_collaboration} qualifier, ordered by event identifier $\prec_{\mathit{ev}}$ (\ref{app:formal}), i.e., taken in the per-case order $\prec_L$ of the source log (criterion P1.2). Because the accessors below read the objects and qualified relations of $\mu(L)$, they apply to the events $\varepsilon_i$ of $\bar{\sigma}_c$, not to the source events $e_i$ of $\sigma_c$; the prediction tasks are therefore stated over the object-centric prefix $\mathit{hd}^k(\bar{\sigma}_c) = \langle \varepsilon_1, \ldots, \varepsilon_k \rangle$. This is the same observation as $\mathit{hd}^k(\sigma_c)$---criterion P1.2 makes each prefix recoverable from the other---so the prefixes remain those of \cite{delgado2025predictcollab}, as claimed above.

For an event $\varepsilon \in E$ of $\mu(L)$, we define the following functions.
\begin{itemize}
  \item $\mathit{oc}(\varepsilon)$ and $\mathit{cc}(\varepsilon)$ are the unique \OC{} and \CC{} objects related to $\varepsilon$ by the \emph{in\_orchestration} and \emph{in\_collaboration} qualifiers, respectively (rule M6).
  \item $\mathit{pa}(\varepsilon)$ is the unique participant object directly related to $\varepsilon$ through the \emph{in\_participant} qualifier (rule M6). By criterion P1.6, it coincides with the participant reached through the two-step path \emph{in\_orchestration} followed by \emph{for\_participant}.
  \item $\mathit{snd}(\varepsilon)$ (resp.\ $\mathit{rcv}(\varepsilon)$) holds iff $(\varepsilon, \emph{send}, m) \in \mathit{E2O}$ (resp.\ $(\varepsilon, \emph{receive}, m) \in \mathit{E2O}$) for some $m$, and we write $\mathit{isMsg}(\varepsilon) = \mathit{snd}(\varepsilon) \vee \mathit{rcv}(\varepsilon)$. When $\mathit{isMsg}(\varepsilon)$ holds, $\mathit{msg}(\varepsilon)$ denotes that unique \Msg{} object (rule M6); it is undefined on events that are not message events.
  \item $\mathit{Msgs}(c) = \{\, m \in O : (m, \emph{exchanged\_in}, \mathit{cc}_c) \in \mathit{O2O} \,\}$ is the set of message objects of case $c$, and for $m \in \mathit{Msgs}(c)$, $\mathit{pos}(m)$ is the position in $\langle \varepsilon_1, \ldots, \varepsilon_{n_c} \rangle$ of the unique event related to $m$ by the \emph{send} or \emph{receive} qualifier.
\end{itemize}

Within these settings, \textbf{next-event predictions} take the prefix $\mathit{hd}^k(\bar{\sigma}_c)$ and predict a property of the next event. Over $\mu(L)$, the next event of the \CC{} prefix is $\varepsilon_{k+1}$, and the six next-event types of \cite{delgado2025predictcollab}, presented in Table~\ref{tab:predtypes}, differ in which property of that event, or of the next message event, is returned. 
Two modeling choices recur across the message-related types. First, when a task reads a message's endpoint participant, as NE-NPaM does, it does so via the \emph{from} (or \emph{to}) relation of the message object rather than via the participant of the send (or receive) event. Second, because the source logs lack a message-type attribute (rule~M4), the predicted message is identified by the event type of the send or receive event.
We present one representative type here and provide the formal definitions of all six next-event type predictions in \ref{app:tasks}. The \textbf{next-participant prediction (NE-NPaA)} is representative of this group: the predicted property of $\varepsilon_{k+1}$ is reached through the object structure of $\mu(L)$ rather than read from an event attribute, using the participant accessor $\mathit{pa}$ of Definition~\ref{def:accessors}.

\begin{definition*}[\textbf{Next-participant (NE-NPaA)}]
The next-participant prediction is the definition of a function $\Theta_{\mathrm{NPaA}}$ that takes the prefix $\mathit{hd}^k(\bar{\sigma}_c)$ and predicts the participant of the next event, that is,
\[
\Theta_{\mathrm{NPaA}}(\mathit{hd}^k(\bar{\sigma}_c)) = \mathit{pa}(\varepsilon_{k+1}).
\]
\end{definition*}

Concerning \textbf{numeric-value predictions}, they return a time or a count. Time-based types are defined from the preserved timestamps; count-based types are defined over the collaboration's message objects, exploiting the object identity introduced by the mapping. 
We present one representative type here and give the formal definitions of all six numeric value type predictions in \ref{app:tasks}.
The \textbf{time until next message (NV-TNM)} is representative of the time-based types.

\begin{definition*}[\textbf{Time until next message (NV-TNM)}]
The time-until-next-message prediction is the definition of a function $\Theta_{\mathrm{TNM}}^{d}$, parameterized by a message direction $d \in \{\mathit{snd}, \mathit{rcv}\}$, that takes the prefix $\mathit{hd}^k(\bar{\sigma}_c)$ and predicts the time from the current event to the next message event in direction $d$, that is,
\[
\Theta_{\mathrm{TNM}}^{d}(\mathit{hd}^k(\bar{\sigma}_c)) =
\begin{cases}
\mathit{time}(\varepsilon_j) \ominus \mathit{time}(\varepsilon_k) & \text{if } j = \min\{\, i : \substack{k < i \le n_c\\ d(\varepsilon_i)} \,\} \text{ exists,}\\
\bot & \text{otherwise.}
\end{cases}
\]
\end{definition*}
The two count-based types of prediction tasks, i.e., remaining messages of participant/process, are defined over the message objects $\mathit{Msgs}(c)$: a message object is counted as remaining at the cut-off $k$ when its communication event has not yet occurred, that is, when $\mathit{pos}(m) > k$. Because each send and each receive is a distinct message object (rule~M4), counting the objects of $\mathit{Msgs}(c)$ makes these predictions object-centric, as they quantify over the entities $\mathit{Msgs}(c)$ and promote an object-level quantity from a feature to a prediction target. 

Finally, \textbf{outcome-based predictions} return a Boolean indicating whether a participant will participate (\textbf{OB-P}) or whether a message will occur in the remainder of the execution (\textbf{OB-M}). We present one representative type here and give the formal definitions of both outcome-based type predictions in \ref{app:tasks}.

\begin{definition*}[\textbf{Participation of a participant (OB-P)}]
The prediction of whether a participant will participate is the definition of a function $\Theta_{\mathrm{OBP}}$ that takes the prefix $\mathit{hd}^k(\bar{\sigma}_c)$ and a participant $p \in \mathcal{P}_L$, and predicts whether $p$ acts after the cut-off, that is,
\[
\Theta_{\mathrm{OBP}}(\mathit{hd}^k(\bar{\sigma}_c), p) =
\begin{cases}
\mathsf{true} & \text{if } \exists\, i : k < i \le n_c \text{ and } \mathit{pa}(\varepsilon_i) = p_p,\\
\mathsf{false} & \text{otherwise.}
\end{cases}
\]
\end{definition*}

Equivalently, $\Theta_{\mathrm{OBP}}$ predicts whether at least one event beyond the cut-off is related to the orchestration case $\mathit{oc}_{c,p}$, which expresses the same outcome at the orchestration-case level; when $p$ never acts in $c$, that object is not materialized (rule M3) and no event is so related, so the outcome is $\mathsf{false}$ (\ref{app:tasks}). The companion type OB-M predicts whether a message of a given event kind occurs after the cut-off; as for the next-message types, the kind is identified by the event type of the communication event (rule~M4).

Beyond these fourteen tasks, the object-centric representation also makes it natural to state prediction targets that have no counterpart in the case-centric taxonomy, because they read relations between objects rather than the events of a single flattened case (Section~\ref{sec:disc-expressiveness}); the catalog reformulated here deliberately remains only on the fourteen types of \cite{delgado2025predictcollab}.

% ==============================================
\section{Implementation}
\label{sec:implementation}
% ==============================================

The representation in Section~\ref{sec:mapping} and the task definitions in Section~\ref{sec:tasks} are stated at a conceptual level, independent of any storage format or analysis tool. This section describes how both are realized: the serialization of the object-centric representation in OCEL~2.0, the computation of the task labels over the serialized log, and the pipeline that turns those labels into predictions. All code and materials are available at \cite{OCPPMcollab}. Figure~\ref{fig:pipelineframework} shows the pipeline at a high level.

\begin{figure}[htbp]
    \centering
    \includegraphics[width=\linewidth]{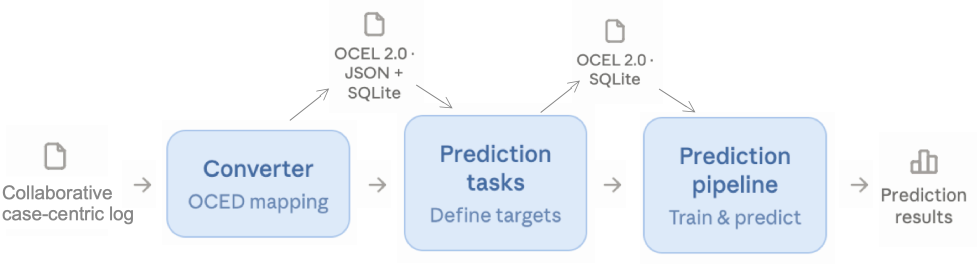}
    \caption{High level pipeline of the framework.}
    \label{fig:pipelineframework}
\end{figure}

\subsection{Serialization in OCEL~2.0}\label{sec:impl-serialization}

Since the OCED Core Model does not prescribe a serialization, we materialize its instances in OCEL~2.0. As noted in Section~\ref{sec:bg-ocpm}, OCEL~2.0 adds a per-type attribute schema and timestamped object attribute values on top of the core concepts. Neither carries meaning for the mapping nor imposes any constraint. Both therefore appear in the exported logs as serialization conventions rather than as part of the representation. Concretely, four elements of the exported logs are properties of the serialization rather than of the mapping: (i) event and object attributes are grouped per type; (ii) each static object attribute is encoded as a single value; (iii) each qualified relation is stored as a record carrying the qualifier string fixed by rules M6--M7, using the qualifier field OCEL~2.0 attaches to E2O and O2O relations; and (iv) the object type of a participant avoids the characters disallowed in type names.

The converter implements rules M1--M8 and checks criteria P1.1--P1.6; its check of P1.2 explicitly verifies that event-identifier order reproduces the source order, including the tie-break. Before export, an additional implementation-level check, P1.7, verifies that every object is reachable in the E2O table handed to PM4Py, preventing objects reachable only through O2O relations from being dropped by the writer; P1.7 is an export safeguard, not a seventh consistency criterion of the formal mapping. The converter exports each log in the two OCEL~2.0 serializations, JSON and SQLite, using PM4Py's OCEL~2.0 writers, and validates the JSON serialization against the OCEL~2.0 JSON schema. The SQLite serialization is the one consumed by the prediction pipeline of Section~\ref{sec:impl-pipeline}, which imports OCEL~2.0 directly. It is worth noting that this two-level organization keeps the representation independent of the analysis tool: the same core-level mapping could be serialized for and consumed by other OCED implementations.

\subsection{Computing the task labels}\label{sec:impl-labels}

The fourteen targets of Section~\ref{sec:tasks} are implemented as a standalone library that computes ground-truth labels and nothing else: it neither extracts features nor fits models, realizing in code the separation between a task and its predictor stated in Section~\ref{sec:tasks-viewpoint}. Three properties of this implementation follow directly from the definitions.

First, the accessors read the qualified relations of the serialized log. The functions $\mathit{oc}$, $\mathit{cc}$, $\mathit{pa}$, $\mathit{snd}$/$\mathit{rcv}$, $\mathit{msg}$, and $\mathit{Msgs}$ of Section~\ref{sec:predictions} are implemented as lookups over the E2O and O2O relations rather than over the event attributes that rule~M8 preserves. Every label is therefore computed from the object structure that the mapping introduces.

Second, the label functions are pure functions of the log. For a collaboration case and a cut-off $k$, the target is derived by looking forward from $\varepsilon_k$ over the remainder of the execution, with no model, encoding, or intermediate state involved, so the ground truth of a prefix depends on the log alone. Each parameterized task receives its parameter as an argument---a participant for NE-NMPa, NV-PaT, NV-NMPa, and OB-P; an activity for OB-M; a direction for NE-NPaM and NV-TNM---so that a task is realized as one function over its parameter.

Third, the vocabulary fixed by rules M1--M7, i.e., the object type names and the E2O and O2O qualifiers, is a parameter of the library, and the library depends on neither the converter nor the prediction pipeline. It can thus compute the fourteen targets for any object-centric log that carries the structure of Section~\ref{sec:mapping}, independently of the tool that produced it.

\subsection{Prediction pipeline}\label{sec:impl-pipeline}

To predict a target, the labels above must be paired with features extracted from the same prefixes. We use \texttt{ocpa}~$1.3.3$~\citep{adams2022ocpa}, which imports OCEL~2.0 natively and implements the object-centric feature extraction framework of \cite{adams2022framework}, as the basis for the structured inputs that each family of models requires: \texttt{ocpa} supplies the OCEL~2.0 importer, the event graph, and the tabular feature storage, while the per-case stacking of the sequential encoding and the ancestor subgraphs of the graph encoding are built on top of them by the pipeline. Table~\ref{tab:encodingsModels} summarizes these encodings, the models they support, and the libraries implementing them. These are the encoding--model pairs the pipeline implements; Section~\ref{sec:evaluation} evaluates a representative subset of them against the fourteen targets of Section~\ref{sec:tasks}.

\begin{widetable}
\setlength{\tabcolsep}{3pt}
\centering
\caption{Encoding approaches and compatible prediction models and libraries.}%
\label{tab:encodingsModels}
{\scriptsize\renewcommand{\arraystretch}{0.95}\setlength{\tabcolsep}{2pt}
\begin{tabular}{p{1.5cm} p{6.2cm} p{3.3cm} p{1.5cm}}
\toprule
\bf Encoding & \bf Input generation strategy & \bf ML/DL models & \bf Libraries \\
\midrule
Tabular  & Per-event vector, reading no object attribute: elapsed time; the number of distinct \OC{} and \Msg{} objects up to the cut; and one column per activity, counting its occurrences among the cut event's immediate predecessors. & Random Forest, XGBoost & scikit-learn \\
Sequential  & The same flat feature vectors as the tabular encoding, stacked in prefix order into per-case sequences.  & Transformers, LSTMs & PyTorch \\
Graph & Induced subgraphs of the OCPA event graph: the cut event plus its up to eight closest structural ancestor events, as interconnected nodes. & Graph Neural Networks (GNNs) & PyTorch \\
\bottomrule
\end{tabular}
}
\end{widetable}

Features and labels are produced by two separate derivation paths over the same imported log and are joined on the event identifier, which rule~M5 makes globally unique. The pipeline therefore never assumes that the feature extractor and the label functions traverse the log in the same way; it only assumes that both address the same events.

% ==================
\section{Empirical Validation}\label{sec:evaluation}
% ==================

The empirical validation is organized primarily around the first three research questions formulated in Section~\ref{sec:introduction}. We assess \textbf{RQ1} via the semantic correctness of the OCEL~2.0 transformation, \textbf{RQ2} via the label-equivalence proof (Proposition~\ref{prop:p2-equivalence}, under the source-level interpretations fixed there) and its empirical verification across execution prefixes, and \textbf{RQ3} via end-to-end execution of the reformulated tasks in a native object-centric predictive monitoring pipeline. RQ4 concerns the benefits and limitations of the representation and is discussed conceptually in Section~\ref{sec:discussion}. Code and materials are available at \cite{OCPPMcollab}.

% ==============================
\subsection{Datasets}\label{sec:eval:datasets}
% ==============================
We reuse the four extended collaborative event logs employed by \cite{delgado2025predictcollab}. The logs comprise both artificial and realistic collaboration scenarios and were selected for their variability in the number of participants and the number of messages exchanged. Reusing the logs as-is supports reproducibility and aligns the demonstration with the collaborative baseline at the source-data level. Table~\ref{table_logcaracteristics} summarizes descriptive statistics of the case-centric perspective of the logs (upper part).

\begin{widetable}
\setlength{\tabcolsep}{2pt}
\centering
\caption{Descriptive statistics of the event logs.}
\label{table_logcaracteristics}
{\scriptsize
  \begin{tabular}{lrrrr@{\hspace{10pt}}|@{\hspace{8pt}}r}
    \toprule
    \bf Statistic & \bf Healthcare & \bf Artificial~1 & \bf Artificial~5 & \bf Real~4 & \bf BPIC~2013 \\
    \midrule
    Cases & 100   & 100   & 100   & 100   & 7{,}554 \\
    Events                            & 1{,}450 & 800 & 2{,}360 & 1{,}800 & 69{,}584 \\
    Distinct activities               & 21    & 8     & 28    & 19    & 4 \\
    Max.\ events per case             & 18    & 8     & 25 & 18 & 135 \\
    Avg.\ events per case             & 14.5  & 8.0   & 23.6  & 18.0  & 9.21 \\
    Message types                     & 8     & 2     & 6     & 4     & 1 \\
    \midrule
    Distinct object types              & 7     & 5     & 7     & 6     & 28 \\
    Objects (total)                   & 1{,}528 & 702 & 1{,}450  & 1{,}203 & 25{,}527 \\
    \quad CollaborationCase       & 100   & 100   & 100   & 100   & 7{,}554 \\
    \quad OrchestrationCase                   & 350   & 200   & 330   & 300   & 9{,}846 \\
    \quad Participant                 & 4     & 2     & 4     & 3     & 25 \\
    \quad Message                     & 1{,}074   & 400   & 1{,}016   & 800   & 8{,}102 \\
    \midrule
    Event-to-object (E2O) relations   & 5{,}424 & 2{,}800 & 8{,}096 & 6{,}200 & 216{,}854 \\
    Object-to-object (O2O) relations  & 3{,}896 & 1{,}600 & 3{,}664 & 3{,}000 & 43{,}998 \\
    \bottomrule
  \end{tabular}
}
\end{widetable}

\noindent
The object-centric counts in the bottom part of Table~\ref{table_logcaracteristics} materialize the collaborative structure that the extended collaborative log leaves implicit. Note that the Message count includes both send and receive interactions (rule M4): a logical message exchange creates one Message object per observed side. The explicitness of this representation supports message-centered analysis but comes at the cost of a higher object count than the case-centric perspective. 
The same explicitness also duplicates information at the attribute level: the retained \texttt{participant}, \texttt{fromParticipant}, and \texttt{toParticipant} event attributes (rule M8) restate values already carried by relations. This retention is deliberate, keeping the mapping information-preserving relative to the normalized log $L=\nu(L_0)$ (\ref{app:formal}), not necessarily to the raw source $L_0$ where an own-side message endpoint was left implicit.

To complement these four logs with more complex real-world data, we also derive a fifth collaborative event log from the \emph{incidents} sub-log of the BPI Challenge 2013, which records the incident management process for the Volvo~IT VINST system~\citep{DBLP:conf/bpm/2013bpic}. The source log carries no native collaboration attributes, so obtaining it requires a domain reinterpretation and a preprocessing step external to the mapping itself: each IT organizational line recorded in the \texttt{organization involved} attribute is treated as a \emph{participant} and each incident as a \emph{collaboration instance}, and inter-participant hand-overs---only implicit in the source data as a change of responsible line between consecutive events---are recovered heuristically and flagged as such (a synthesized \texttt{SendTask} is paired with each detected \texttt{ReceiveTask}, declared via a residual \texttt{collab:synthesized} attribute rather than presented as directly observed data). Table~\ref{table_logcaracteristics} reports statistics alongside the four baseline logs (in the rightmost column).

% ==============================
\subsection{(RQ1) Semantic transformation to an object-centric representation}\label{sec:rq1experimentation}
% ==============================

The converter in Section~\ref{sec:impl-serialization} was executed on the five event logs described, providing \emph{semantic} validation via the mapping-specific consistency properties P1.1--P1.6 and \emph{structural} validation via conformance to the OCEL~2.0 JSON schema. Each check compares the constructed representation against an independently recomputed expectation from the source log, rather than only aggregate counts: P1.1 verifies the activity, timestamp, and residual (M8) attributes of every event individually, not only its structural attributes; P1.2 verifies, per case, that the independently recomputed set of mapped event identifiers matches exactly the one stored in the log, together with the \texttt{caseId} attribute of the corresponding \CC{} object, not merely its size; for P1.2, enumerating each case's events in event-identifier order reproduces $\prec_L$ exactly, including its tie-break by source order, as the order-preservation property of M5 requires; P1.4 and P1.6 verify, for every event, the totality of its \emph{in\_orchestration}/\emph{in\_participant} edges and the identity of the \OC{} object they resolve to, rather than only the agreement of whichever edges happen to be present; and P1.5 additionally verifies that no E2O/O2O relation targets an object that was never materialized. Consistency checks passed for every log under this stronger form. This is evidence that the implementation materializes the proposed mapping for the evaluated datasets. P1.3 additionally reports, without failing on it, the 26 (Healthcare) and 44 (Artificial5) messages whose counterparty endpoint the source never recorded. Table~\ref{table_logcaracteristics} summarizes the descriptive statistics of the resulting object-centric logs (bottom part).

% ==============================
\subsection{(RQ2) Verifying the label-preserving task reformulation}\label{sec:rq2experimentation}
% ==============================

RQ2 states a correctness question relative to the source-level readings fixed in \ref{app:tasks}. A collaboration case with a single event contributes no valid prefix ($k\in[1,n_c-1]$ is empty when $n_c=1$, \ref{app:formal}) and is therefore excluded from this and every other prediction task; this occurs once among the logs used, a single-event case in BPIC~2013. For the 14 reformulated tasks, target labels are independently computed for each valid execution prefix by two readers that follow different computation paths: R1 applies the source-level definitions $\Theta_\tau^{\mathit{src}}$ directly to the original collaborative log using the source accessors (Definition~\ref{def:app-r1}), without constructing an intermediate object-centric structure; R2 applies the object-centric definitions $\Theta_\tau$ over the generated OCEL~2.0 log using the accessors of Definition~\ref{def:accessors}. Their outputs are compared independently of any predictive model or encoding strategy. Categorical, Boolean, and count targets are compared for exact equality; temporal targets are compared both for exact equality and within a $1$-second tolerance declared in advance to absorb possible sub-second serialization slack.

As Proposition~\ref{prop:p2-equivalence} entails, R1 and R2 cannot disagree on a correct implementation: across the four baseline logs, 172 task--log--parameter combinations (279{,}400 evaluated target labels) had zero mismatches, and zero labels present in only one computation path were found. The same check was independently run at full scale on the reinterpreted, real-life BPI Challenge 2013 log: across 115 task--parameter combinations (7{,}133{,}450 evaluated target labels), again zero mismatches and zero one-sided labels were found. R1 and R2 are two independent code paths over the same accessors, not two independent implementations by different teams, so this result demonstrates internal consistency between the implementation and the stated formulas. The $1$-second tolerance played no role in this outcome: recomputing every temporal comparison at zero tolerance yields the same zero mismatches across all $287$ combinations, so no agreement reported here, temporal or otherwise, rests on a lenient numeric comparison that could mask a real disagreement.

% ==============================
\subsection{(RQ3) End-to-end feasibility on a native object-centric pipeline}\label{sec:rq3experimentation}
% ==============================

The experiments addressing RQ3 assess the feasibility of executing the reformulated tasks within the native object-centric PPM pipeline of Section~\ref{sec:impl-pipeline}. The evaluation considers the complete workflow---OCEL loading, viewpoint selection, prefix construction, feature extraction, target generation, model training, and prediction---and reports task coverage and execution outcomes together with descriptive predictive metrics. Partitioning is performed at the collaboration-case level and wrapped in grouped $5$-fold cross-validation using \texttt{CollaborationCase}; all prefixes of a collaboration case stay in the same fold to prevent information leakage, and features are computed only from the observable prefix. For NE-NPaM, NE-NMPr, NE-NMPa, and NV-TNM, prefixes whose target is the sentinel $\bot$ (\ref{app:tasks}) are excluded from training and evaluation throughout, since a model trained to output $\bot$ would conflate the substantive classes of the task with the unrelated condition that no further occurrence remains in the trace; the "number of valid labeled prefixes" reported in Table~\ref{tab:resultsAllSubset} already reflects this exclusion. The restriction is substantial on the largest log: on BPIC~2013 it removes between $70.2\%$ and $76.6\%$ of the cut points for these four tasks, and $53.0\%$ for NE-NPaM and NV-TNM on Artificial1.

% ==============================
\subsubsection{Selected models and hyperparameters for OCPM Prediction}\label{sec:eval:setup}
% ==============================

Of the encodings and models supported by the pipeline (Table~\ref{tab:encodingsModels}), we selected two models for tabular encodings: Random Forest and XGBoost, two models for sequential encodings: Transformer and LSTM, and one for the graph encoding: GNN, based on use and reported capabilities in process prediction literature. They are used to compare the execution of the reformulated tasks across representative encoding families. Although we cover different encoding approaches and corresponding prediction models, the objective is to determine whether the proposed definitions can be implemented reproducibly in practice, rather than to compare predictors or claim predictive superiority. In particular, applying the Transformer architecture to \texttt{ocpa}'s object-centric feature set stacked into per-case sequences (Table~\ref{tab:encodingsModels}), as opposed to a flattened single-object-type sequence \citep{gherissi2025embeddings} or a graph-based Transformer variant \citep{gherissi2025framework}, has, to our knowledge, not been previously reported (Section~\ref{sec:bg-ocpm}).

% ==============================
%\subsection{Experimental setup and hyperparameters}\label{sec:eval:setup}
% ==============================

All experiments reported here were executed on a single machine: Intel Core i5, 10~physical / 16 logical cores at 2.5 GHz base, 15.8 GB RAM, Windows 10. The three PyTorch-based predictors (LSTM, Transformer, GNN) were trained on CPU; a CUDA device was available and visible (NVIDIA GTX~1660, 6~GB) but was not used for the reported experiments.

All predictors use a single fixed hyperparameter configuration across logs, tasks, and folds, with a shared \texttt{random\_state}=3395 controlling stochastic components, including a seeded permutation of \CC\ assigned by rank modulo~5 that fixes case-to-fold assignment, persisted and reused across all tasks and predictors. RandomForest and XGBoost use conventional settings for tabular datasets of this scale, while LSTM and Transformer were calibrated on a small, non-exhaustive set of representative classification and regression tasks after an initial 20-epoch configuration underperformed. The final budgets are 60 epochs for LSTM, 35 for Transformer, and 50 for GNN, with fixed batch sizes and no early stopping or validation-based checkpoint selection. GNN additionally uses a hidden dimension of 64 and subgraphs capped at ($k=8$) events. The regression objective is not uniform across predictors: RandomForest, XGBoost, LSTM and Transformer minimize squared error, the latter two on a standardized target, while GNN minimizes a Huber objective ($\delta$=1) on the unstandardized target, which at these magnitudes is effectively absolute error. Section~\ref{sec:eval:summary} shows that this asymmetry, not the encoding, accounts for the temporal-target pattern on BPIC~2013.

\subsubsection{Experimentation results \& analysis}\label{sec:eval:summary}

The pipeline was executed on the full catalog of 14 task types, with single-case counterparts across all five logs. Unlike RQ2, which evaluates every applicable parameter value (Section~\ref{sec:rq2experimentation}), RQ3 instantiates each parameterized type with a single value per log, fixed as described next; the "14 tasks" of this section thus denotes 14 types, not the full parameter space. All 70 task-log combinations (14 tasks $\times$ 5 logs) completed the five folds end-to-end for each of the five predictors (350 task-log-predictor combinations in total). For the five parameterized task types, the target parameter was resolved per log by selecting the most frequent participant among all event actors for NE-NMPa, NV-PaT, NV-NMPa, and OB-P, and the most frequent message activity among all message-related activities for OB-M.
To provide a compact yet heterogeneous demonstration, we report a representative subset of six tasks that cover four target anchors and four problem types (Table~\ref{tab:rq3subset}).
These six tasks jointly exercise every target anchor and problem type introduced by the reformulation. Where the full catalog realizes the same anchor--problem-type combination more than once, the retained task is the one that offers a more informative descriptive contrast against its trivial baseline; this selection is descriptive and post hoc, made after inspecting the results, so this subset should not be read as independent evidence beyond what the full catalog in \ref{app:fullcatalog} already reports. Complete results are available in \ref{app:fullcatalog} and in the accompanying repository~\cite{OCPPMcollab}.

\begin{widetable}
\setlength{\tabcolsep}{4pt}
\centering
\caption{Representative subset for the end-to-end demonstration (RQ3).}%
\label{tab:rq3subset}
{\footnotesize
\begin{tabular}{lll}
\toprule
Task & Target anchor & Problem type \\
\midrule
NE-NPaA  & \texttt{Participant} & Multiclass classification \\
NE-NMPr  & \texttt{Message} & Multiclass classification \\
NV-PrT   & \texttt{CollaborationCase} & Regression (time) \\
NV-PaT   & \texttt{OrchestrationCase} & Regression (time) \\
NV-NMPr  & \texttt{Message} & Count regression \\
OB-M     & \texttt{Message} & Binary classification \\
\bottomrule
\end{tabular}
}
\end{widetable}

A trivial baseline (the majority class for classification and the median for regression) is included as a sanity check, not as a competing method. Reported metrics are $\mathrm{F1}_{\mathrm{macro}}$ for classification tasks and MAE for time and count regression tasks, shown as mean\,$\pm$\, standard deviation across the five-fold scores. 
Results are reported in Table \ref{tab:resultsAllSubset}, for each prediction task and log per model used, the target anchor, the number of valid labeled prefixes, and the descriptive metric against its trivial baseline. 

\begin{widesidewaystable}
\setlength{\tabcolsep}{1.5pt}
\centering
\caption{Experimentation results. Times are in seconds, except BPIC~2013 (\textsuperscript{*}), in days. Under two-decimal rounding: \textsuperscript{\dag}~at or near a perfect score or zero error; \textsuperscript{\ddag}~constant (degenerate) target, so the trivial baseline itself rounds to a perfect score; \textsuperscript{\S}~metric worse than that baseline. Per-row extremes at the same precision: best in \textbf{bold}, worst \underline{underlined}. These are not a ranking---models may tie, and on half of the rows the two marked models overlap within one standard deviation across folds.
}%
\label{tab:resultsAllSubset}
{\scriptsize\renewcommand{\baselinestretch}{1}\selectfont
\renewcommand{\arraystretch}{0.95}
\begin{tabular}{ll p{4.45cm} r @{\hspace{6pt}} l l l l l l}
\toprule
Task & Log & Anchor (parameter) & Samples & Baseline & RF & XGB & Transf. & LSTM & GNN \\
\midrule
\multirow{5}{*}{\begin{tabular}[c]{@{}l@{}}NE-NPaA\\ ($\mathrm{F1}_{\mathrm{macro}}$)\end{tabular}}
 & Healthcare & \texttt{Participant} & 1{,}350 & 0.17 & \underline{0.82}\,$\pm$\,0.03 & 0.86\,$\pm$\,0.02 & 0.87\,$\pm$\,0.02 & \textbf{0.88}\,$\pm$\,0.02 & 0.87\,$\pm$\,0.02 \\
 & Artificial1 & \texttt{Participant} & 700 & 0.34 & \underline{0.73}\,$\pm$\,0.04 & 0.74\,$\pm$\,0.02 & \textbf{0.79}\,$\pm$\,0.04 & 0.78\,$\pm$\,0.04 & 0.78\,$\pm$\,0.04 \\
 & Artificial5 & \texttt{Participant} & 2{,}260 & 0.18 & \underline{0.56}\,$\pm$\,0.01 & \textbf{0.59}\,$\pm$\,0.03 & 0.57\,$\pm$\,0.01 & 0.58\,$\pm$\,0.03 & \underline{0.56}\,$\pm$\,0.03 \\
 & Real4 & \texttt{Participant} & 1{,}700 & 0.23 & \underline{0.86}\,$\pm$\,0.02 & 0.89\,$\pm$\,0.01 & 0.90\,$\pm$\,0.02 & \textbf{0.91}\,$\pm$\,0.02 & 0.90\,$\pm$\,0.02 \\
 & BPIC~2013 & \texttt{Participant} & 62{,}030 & 0.04 & 0.07\,$\pm$\,0.00 & \underline{0.05}\,$\pm$\,0.00 & \textbf{0.12}\,$\pm$\,0.01 & \textbf{0.12}\,$\pm$\,0.02 & 0.09\,$\pm$\,0.01 \\
\midrule
\multirow{5}{*}{\begin{tabular}[c]{@{}l@{}}NE-NMPr\\ ($\mathrm{F1}_{\mathrm{macro}}$)\end{tabular}}
 & Healthcare & \texttt{Message} & 1{,}350 & 0.02 & \underline{0.70}\,$\pm$\,0.02 & \textbf{0.72}\,$\pm$\,0.03 & \textbf{0.72}\,$\pm$\,0.03 & \textbf{0.72}\,$\pm$\,0.06 & \underline{0.70}\,$\pm$\,0.06 \\
 & Artificial1 & \texttt{Message} & 549 & 0.11 & \textbf{0.58}\,$\pm$\,0.05 & 0.54\,$\pm$\,0.04 & \underline{0.52}\,$\pm$\,0.04 & \underline{0.52}\,$\pm$\,0.02 & 0.53\,$\pm$\,0.04 \\
 & Artificial5 & \texttt{Message} & 2{,}068 & 0.02 & \underline{0.64}\,$\pm$\,0.02 & 0.66\,$\pm$\,0.02 & 0.66\,$\pm$\,0.03 & \underline{0.64}\,$\pm$\,0.03 & \textbf{0.67}\,$\pm$\,0.02 \\
 & Real4 & \texttt{Message} & 1{,}600 & 0.07 & \textbf{1.00}\,$\pm$\,0.00\,\textsuperscript{\dag} & \textbf{1.00}\,$\pm$\,0.00\,\textsuperscript{\dag} & 0.99\,$\pm$\,0.00 & \underline{0.98}\,$\pm$\,0.01 & 0.99\,$\pm$\,0.00 \\
 & BPIC~2013 & \texttt{Message} & 18{,}481 & 1.00\,\textsuperscript{\ddag} & \textbf{1.00}\,$\pm$\,0.00\,\textsuperscript{\dag} & \textbf{1.00}\,$\pm$\,0.00\,\textsuperscript{\dag} & \textbf{1.00}\,$\pm$\,0.00\,\textsuperscript{\dag} & \textbf{1.00}\,$\pm$\,0.00\,\textsuperscript{\dag} & \textbf{1.00}\,$\pm$\,0.00\,\textsuperscript{\dag} \\
\midrule
\multirow{5}{*}{\begin{tabular}[c]{@{}l@{}}NV-PrT\\ (MAE)\end{tabular}}
 & Healthcare & \texttt{CollaborationCase} & 1{,}350 & 17.45 & 15.29\,$\pm$\,1.79 & 15.07\,$\pm$\,2.07 & \textbf{14.76}\,$\pm$\,1.52 & 14.78\,$\pm$\,2.14 & \underline{15.46}\,$\pm$\,2.46 \\
 & Artificial1 & \texttt{CollaborationCase} & 700 & 8.59 & \underline{4.65}\,$\pm$\,0.68 & 4.40\,$\pm$\,0.68 & 3.99\,$\pm$\,0.74 & \textbf{3.98}\,$\pm$\,0.67 & 4.05\,$\pm$\,0.76 \\
 & Artificial5 & \texttt{CollaborationCase} & 2{,}260 & 33.57 & \underline{11.15}\,$\pm$\,0.64 & 10.70\,$\pm$\,0.74 & \textbf{10.07}\,$\pm$\,0.98 & 10.53\,$\pm$\,0.78 & 10.41\,$\pm$\,1.05 \\
 & Real4 & \texttt{CollaborationCase} & 1{,}700 & 19.43 & \underline{7.62}\,$\pm$\,0.41 & 7.05\,$\pm$\,0.55 & 7.03\,$\pm$\,0.40 & \textbf{6.61}\,$\pm$\,0.46 & 6.67\,$\pm$\,0.65 \\
 & BPIC~2013\textsuperscript{*} & \texttt{CollaborationCase} & 62{,}030 & 13.39 & \underline{18.79}\,$\pm$\,1.75\,\textsuperscript{\S} & 16.73\,$\pm$\,1.80\,\textsuperscript{\S} & 14.60\,$\pm$\,1.81\,\textsuperscript{\S} & 16.37\,$\pm$\,2.20\,\textsuperscript{\S} & \textbf{13.37}\,$\pm$\,1.98 \\
\midrule
\multirow{5}{*}{\begin{tabular}[c]{@{}l@{}}NV-PaT\\ (MAE)\end{tabular}}
 & Healthcare & \begin{tabular}[c]{@{}l@{}}\texttt{OrchestrationCase}\\(Gynecologist)\end{tabular} & 1{,}350 & 12.85 & \underline{9.29}\,$\pm$\,0.72 & 9.01\,$\pm$\,0.92 & 8.40\,$\pm$\,0.75 & 8.33\,$\pm$\,1.13 & \textbf{8.32}\,$\pm$\,1.07 \\
 & Artificial1 & \begin{tabular}[c]{@{}l@{}}\texttt{OrchestrationCase}\\(PartyA)\end{tabular} & 700 & 8.57 & \underline{4.59}\,$\pm$\,0.72 & 4.38\,$\pm$\,0.71 & 3.95\,$\pm$\,0.73 & \textbf{3.93}\,$\pm$\,0.72 & \textbf{3.93}\,$\pm$\,0.75 \\
 & Artificial5 & \begin{tabular}[c]{@{}l@{}}\texttt{OrchestrationCase}\\(PartyB)\end{tabular} & 2{,}260 & 33.50 & \underline{10.59}\,$\pm$\,0.62 & 10.08\,$\pm$\,0.73 & 9.50\,$\pm$\,0.90 & 9.64\,$\pm$\,0.60 & \textbf{9.47}\,$\pm$\,1.00 \\
 & Real4 & \begin{tabular}[c]{@{}l@{}}\texttt{OrchestrationCase}\\(Zoo)\end{tabular} & 1{,}700 & 19.00 & \underline{6.79}\,$\pm$\,0.19 & 6.36\,$\pm$\,0.19 & 6.51\,$\pm$\,0.42 & \textbf{5.67}\,$\pm$\,0.41 & 5.81\,$\pm$\,0.37 \\
 & BPIC~2013\textsuperscript{*} & \begin{tabular}[c]{@{}l@{}}\texttt{OrchestrationCase}\\(Org line C)\end{tabular} & 62{,}030 & 7.86 & \underline{9.56}\,$\pm$\,0.85\,\textsuperscript{\S} & 8.53\,$\pm$\,0.87\,\textsuperscript{\S} & 7.97\,$\pm$\,1.06\,\textsuperscript{\S} & 8.55\,$\pm$\,0.83\,\textsuperscript{\S} & \textbf{7.37}\,$\pm$\,0.92 \\
\midrule
\multirow{5}{*}{\begin{tabular}[c]{@{}l@{}}NV-NMPr\\ (MAE)\end{tabular}}
 & Healthcare & \texttt{Message} & 1{,}350 & 2.82 & 2.23\,$\pm$\,0.18 & 2.23\,$\pm$\,0.20 & \textbf{2.19}\,$\pm$\,0.11 & 2.20\,$\pm$\,0.22 & \underline{2.36}\,$\pm$\,0.37 \\
 & Artificial1 & \texttt{Message} & 700 & 1.29 & \textbf{0.00}\,$\pm$\,0.00\,\textsuperscript{\dag} & 0.01\,$\pm$\,0.00 & \underline{0.18}\,$\pm$\,0.01 & 0.07\,$\pm$\,0.01 & 0.07\,$\pm$\,0.01 \\
 & Artificial5 & \texttt{Message} & 2{,}260 & 2.61 & 0.81\,$\pm$\,0.07 & \textbf{0.80}\,$\pm$\,0.06 & \underline{0.95}\,$\pm$\,0.03 & 0.94\,$\pm$\,0.03 & 0.91\,$\pm$\,0.05 \\
 & Real4 & \texttt{Message} & 1{,}700 & 1.76 & \textbf{0.00}\,$\pm$\,0.00\,\textsuperscript{\dag} & \textbf{0.00}\,$\pm$\,0.00\,\textsuperscript{\dag} & \underline{0.25}\,$\pm$\,0.01 & 0.08\,$\pm$\,0.00 & 0.20\,$\pm$\,0.01 \\
 & BPIC~2013 & \texttt{Message} & 62{,}030 & 1.39 & \underline{1.84}\,$\pm$\,0.11\,\textsuperscript{\S} & 1.72\,$\pm$\,0.11\,\textsuperscript{\S} & 1.45\,$\pm$\,0.12\,\textsuperscript{\S} & 1.63\,$\pm$\,0.13\,\textsuperscript{\S} & \textbf{1.42}\,$\pm$\,0.13\,\textsuperscript{\S} \\
\midrule
\multirow{5}{*}{\begin{tabular}[c]{@{}l@{}}OB-M\\ ($\mathrm{F1}_{\mathrm{macro}}$)\end{tabular}}
 & Healthcare & \begin{tabular}[c]{@{}l@{}}\texttt{Message}\\(Communicate disease)\end{tabular} & 1{,}350 & 1.00\,\textsuperscript{\ddag} & \textbf{1.00}\,$\pm$\,0.00\,\textsuperscript{\dag} & \textbf{1.00}\,$\pm$\,0.00\,\textsuperscript{\dag} & \textbf{1.00}\,$\pm$\,0.00\,\textsuperscript{\dag} & \textbf{1.00}\,$\pm$\,0.00\,\textsuperscript{\dag} & \textbf{1.00}\,$\pm$\,0.00\,\textsuperscript{\dag} \\
 & Artificial1 & \begin{tabular}[c]{@{}l@{}}\texttt{Message}\\(Activity CZ)\end{tabular} & 700 & 0.39 & \textbf{0.92}\,$\pm$\,0.02 & \textbf{0.92}\,$\pm$\,0.02 & \textbf{0.92}\,$\pm$\,0.00 & \underline{0.91}\,$\pm$\,0.01 & \textbf{0.92}\,$\pm$\,0.01 \\
 & Artificial5 & \begin{tabular}[c]{@{}l@{}}\texttt{Message}\\(Activity AA)\end{tabular} & 2{,}260 & 0.49 & \textbf{1.00}\,$\pm$\,0.00\,\textsuperscript{\dag} & \textbf{1.00}\,$\pm$\,0.00\,\textsuperscript{\dag} & 0.96\,$\pm$\,0.08 & \underline{0.91}\,$\pm$\,0.17 & \textbf{1.00}\,$\pm$\,0.00\,\textsuperscript{\dag} \\
 & Real4 & \begin{tabular}[c]{@{}l@{}}\texttt{Message}\\(Send info. to the ZooClub dept.)\end{tabular} & 1{,}700 & 0.47 & \textbf{1.00}\,$\pm$\,0.00\,\textsuperscript{\dag} & \textbf{1.00}\,$\pm$\,0.00\,\textsuperscript{\dag} & \textbf{1.00}\,$\pm$\,0.00\,\textsuperscript{\dag} & \textbf{1.00}\,$\pm$\,0.00\,\textsuperscript{\dag} & \textbf{1.00}\,$\pm$\,0.00\,\textsuperscript{\dag} \\
 & BPIC~2013 & \begin{tabular}[c]{@{}l@{}}\texttt{Message}\\(Queued)\end{tabular} & 62{,}030 & 0.41 & 0.65\,$\pm$\,0.00 & \textbf{0.71}\,$\pm$\,0.01 & \underline{0.64}\,$\pm$\,0.02 & \textbf{0.71}\,$\pm$\,0.01 & \textbf{0.71}\,$\pm$\,0.01 \\
\bottomrule
\end{tabular}\\[2pt]
}
\end{widesidewaystable}

Overall, the results show that the object-centric predictive paradigm successfully addresses the complexities of collaborative business processes. In line with the state of the art in the OCPM literature, the native representation of event-to-object (E2O) and object-to-object (O2O) relations enables holistic monitoring of the collaborative environment, in which object-centric predictions effectively capture the concurrent realities of multiple participants interacting simultaneously.

Beyond executability, we read the descriptive metrics as evidence of feature informativeness only where the target is both variable and non-deterministic. Cells reaching a perfect score or near-zero error are treated as deterministic reconstructions, where the observable prefix almost fully determines the label. Two cells of the reported subset carry a constant target and are degenerate (\textsuperscript{\ddag}): OB-M on Healthcare and NE-NMPr on BPIC~2013; over the full 14-task catalog, there are four such cells, the other two being NE-NPaM on Artificial1 and NE-NMPa on BPIC~2013. Degeneracy is thus a property of the (log, task) pair rather than of the task, and no task is degenerate on both log groups. By the same criterion, the cells in which a model's metric is worse than its trivial baseline (marked~\textsuperscript{\S}) provide no evidence of feature informativeness under the fixed-hyperparameter pipeline used here. There are 13 such cells in the reported subset, and 26 of the 30 non-degenerate numeric cells on BPIC~2013 over the full 14-task catalog. Across the full catalog, 31 of the 330 non-degenerate task--log--predictor entries are worse than their trivial baseline: those 26 numeric entries on BPIC~2013, four on the baseline logs (all of them on NV-TNE), and a single classification entry. No classification entry on BPIC~2013 falls below its baseline. Unlike the four baseline logs, on this real, order-of-magnitude larger log, the untuned pipeline does not uniformly outperform the trivial baseline: four of the five predictors are worse than it on every numeric target, and the fifth (GNN) is better on four of the six, by margins under $0.6\%$ on three of them. All these observations remain descriptive and do not constitute an algorithmic comparison.

The evaluation confirms that structural and categorical predictions—such as identifying the Next participant to act (NE-NPaA) or anticipating the occurrence of a message (OB-M)—yield robust results in most object-centric environments. Temporal targets behave differently, but the pattern follows from the training objectives rather than from the encodings. The reported metric is MAE and the trivial baseline is the training-fold median, the MAE-optimal constant, whereas the optimal constant for squared error is the mean. A predictor that extracts no signal falls back to its own optimal constant, and on BPIC~2013 the mean constant costs 1.25--1.60$\times$ the median constant in MAE across the six numeric targets; a squared-error learner therefore lands above the baseline, and an absolute-error learner ties it, by construction. The measurements match: the GNN is within 0.15--0.54\% of the median constant on NV-PrT, NV-TNE and NV-TNM, and XGBoost within 0.23--0.53\% of the mean constant on NV-PrT and NV-PaT. No predictor extracts temporal signal from this log, the single exception being the GNN on NV-PaT ($-6.3$\% against its baseline); the apparent contrast between flattened and graph-native encodings on temporal targets is an artifact of the objectives and is not evidence about the representation. What the predictors have to work with is also narrow: the \texttt{ocpa} feature table is seven columns wide on BPIC~2013 (Table~\ref{tab:encodingsModels}; the per-activity columns number four on this log), so the absence of temporal signal is a statement about the available features rather than about the object-centric representation.

Additionally, comparing results for equivalent predictive tasks between object-centric logs and case-centric collaborative logs from Predict-Collab \cite{delgado2025predictcollab} yields several descriptive observations on the trade-offs of both paradigms; architecture, preprocessing, and folds are not harmonized between the two studies, so none of these observations constitutes a controlled comparison. Notably, Predict-Collab relied exclusively on the Process Transformer architecture. Therefore, our primary baseline for comparison is the object-centric Transformer model, while also highlighting how other architectures (such as tabular or graph-based models) behave in this new environment.

Next participant to act (NE-NPaA): The object-centric approach reports a higher score classifying the next acting participant than the case-centric baseline. While collaborative case-centric logs capture temporal parallelism by interleaving concurrent activities in chronological order, they force the model to implicitly infer structural dependencies among distributed participants. In contrast, object-centric sequential features explicitly encode the multi-object bindings at each step. This means that object-centric models (such as LSTMs and Transformers) do not merely process a timeline of events; they explicitly receive structural context indicating which objects are correlated in the current interaction. This pattern is consistent with, though not proof of, explicit structural binding aiding participant prediction.

Next message in the process (NE-NMPr): Conversely, for predicting the next message interaction, the flattened case-centric approach reports a better score on the metrics. This discrepancy highlights a fundamental difference in the nature of the target variables. While participant activities often involve complex, concurrent states that benefit from structural context, collaborative message exchanges typically follow strictly linear, protocol-driven sequences (e.g., a rigid request-response communication). The case-centric Process Transformer excels at memorizing these 1D sequences from a flattened trace, much like next-word prediction.

Remaining Time Predictions (NV-PrT \& NV-PaT): When estimating the remaining time for both the overall process and individual participants, the performance between the object-centric approach and Predict-Collab is similar in magnitude. The object-centric Transformer and LSTM models exhibited error margins close to those of the Predict-Collab Process Transformer. The object-centric approach nonetheless avoids constructing separate per-orchestration-case traces, since a single OCEL log covers every participant; each predictor is still fit to one task and parameter at a time, not jointly across participants.

Predictions Without Direct Equivalents (NV-NMPr \& OB-M): Finally, the object-centric evaluation included predicting the number of remaining messages (NV-NMPr) and forecasting the occurrence of a target message (OB-M), neither of which has a direct numerical counterpart in the Predict-Collab experimental results. On the four baseline logs, the object-centric models scored well on their respective metrics (low MAE for NV-NMPr, high macro-F1 for OB-M), with the tabular models among the strongest; several of those entries reach a perfect score or near-zero error and are therefore read as deterministic reconstructions rather than as evidence of feature informativeness. On BPIC~2013 the picture differs: all five predictors are worse than the trivial baseline on NV-NMPr, Random Forest by the widest margin, while OB-M stays above its baseline for all five.

% ==============================
\subsubsection{Execution cost}\label{sec:eval:cost}
% ==============================

Every run was additionally executed with per-stage profiling enabled, recording wall-clock time and peak resident memory for each of the five pipeline stages (OCEL loading, feature extraction, target generation, model fitting, prediction) on every fold. Because feasibility is the claim under test, the cost of obtaining these results is part of the result, but not taken as an algorithmic comparison of predictors

Results for the full task catalog are reported in Table~\ref{tab:rq3cost}. All predictors are evaluated on the same population of non-degenerate labeled prefixes, including the cells that the sequence models would otherwise short-circuit. \emph{Fit} reports total fitting time, \emph{s/1k} normalizes the mean per-fold fitting time by workload size, \emph{Predict} reports mean per-fold wall-clock inference time, and \emph{Peak} records the maximum resident set size observed during the pipeline.

\begin{widetable}
\centering
\caption{RQ3 execution cost over the full 14-task catalog, by predictor and log group.}%
\label{tab:rq3cost}
{\scriptsize \setlength{\tabcolsep}{3pt}
\begin{tabular}{l rrrr rrrr}
\toprule
 & \multicolumn{4}{c}{PredictCollab logs (78{,}600 fitted prefixes)} & \multicolumn{4}{c}{BPIC~2013 (653{,}298 fitted prefixes)} \\
\cmidrule(lr){2-5}\cmidrule(lr){6-9}
Predictor & Fit (s) & s/1k & Predict (ms) & Peak (MB) & Fit (s) & s/1k & Predict (ms) & Peak (MB) \\
\midrule
Random Forest & 41 & 0.10 & 30.2 & 899 & 84 & 0.03 & 104.5 & 6{,}411 \\
XGBoost & 62 & 0.16 & 4.4 & 964 & 103 & 0.03 & 44.6 & 2{,}024 \\
Transformer & 886 & 2.25 & 3.1 & 1{,}034 & 13{,}513 & 4.14 & 282.8 & 2{,}192 \\
LSTM & 292 & 0.74 & 1.2 & 962 & 10{,}753 & 3.29 & 26.1 & 1{,}997 \\
GNN & 5{,}389 & 13.71 & 204.3 & 1{,}074 & 43{,}994 & 13.46 & 6{,}837.9 & 2{,}748 \\
\midrule
\multicolumn{9}{p{\dimexpr\textwidth-2\tabcolsep\relax}}{%
\footnotesize\emph{Note:} Model-independent preprocessing required 5.9--6.3~s per predictor run for the baseline logs and 39.9--86.7~s for BPIC~2013.} \\
\bottomrule
\end{tabular}
}
\end{widetable}

The results reveal distinct computational profiles. GNN incurs the highest fitting cost, whereas Random~Forest exhibits the highest peak memory requirement. More importantly, GNN is the only predictor considered here whose fitting cost scales consistently with the number of labeled prefixes: its normalized cost remains stable across the baseline and BPIC~2013 logs. Within this experimental setting, prefix count therefore provides a useful basis for estimating the feasibility of the graph-native encoding. This interpretation remains conditional, however, because sequence length, relational density, and hardware were not varied independently.

The tabular predictors follow a different pattern. In particular, XGBoost shows substantial variation across tasks, with the most demanding cases generally associated with multiclass targets of higher label cardinality. This suggests that class structure should be considered when allocating its computational budget, although the experiment does not isolate cardinality from row count, feature density, and other task characteristics.

Finally, for the less computationally demanding learners, model-independent pipeline stages account for most of the overall runtime. Since these stages perform the same work regardless of the downstream predictor, differences between runs primarily reflect execution conditions rather than changes in workload. Their much greater stability during uncontested baseline runs supports this interpretation.
% ====================
\section{Benefits and limitations of the object-centric representation}\label{sec:discussion}
% ====================

We discuss the benefits of the object-centric representation relative to the extended collaborative (case-centric) representation of \cite{delgado2025predictcollab}, focusing on target expressiveness, explicit collaborative structure and traceability, and viewpoint flexibility. Drawing on RQ3, we then examine its structural and computational costs, including its behavior at scale, and discuss the limitations of the representation and the empirical evidence.

\subsection{Expressiveness of prediction targets}\label{sec:disc-expressiveness}

A first benefit is expressiveness. Because a participant object is defined at the log level, it relates the orchestration cases that one participant enacts across different collaboration cases, and a prediction target may quantify over that relation. The number of collaborations in which a participant is concurrently engaged when it next acts is one such target of interest for bottleneck and delay analysis in multi-organization settings. It has no counterpart in the case-centric taxonomy of \cite{delgado2025predictcollab}: under any single-case notion, the events of the other collaborations either disappear from the trace or lose the per-collaboration boundary that the count requires, so the quantity is not locally readable from a single flattened case's own trace: computing it would require a separate pass over the rest of the log to recover the per-collaboration boundaries the flattened trace discards, rather than reading a relation the representation already makes explicit. What the counted entity denotes depends on the source data: on the evaluated logs, whose participant identifiers are role- or type-level labels (Section~\ref{sec:map-decisions}), such a target would quantify the concurrent engagement of a role, whereas a log supplying per-instance organizational identifiers would yield that of a specific organization. The catalog reformulated here remains the fourteen types; targets of this kind delimit what the representation additionally enables, and their formalization and empirical assessment are left for future work.

\subsection{Explicit collaborative structure and traceability}\label{sec:disc-structure}

A second benefit is that the collaborative structure, which the extended collaborative log keeps implicit in event attributes, becomes explicit and directly navigable. Participants, orchestration cases, communication interactions, and global collaboration cases are first-class objects related by qualified relations rather than values attached to the events of a single case. As a consequence, participant- and message-centered analyses, which the case-centric representation obtains only through additional projection and preprocessing steps, are here read off the representation by traversing relations. This explicitness is also measurable: the descriptive structural counts quantify the collaborative structure that the object-centric representation materializes, and the case-centric log leaves implicit. The same explicitness supports traceability, as the paths illustrated in Figure~\ref{fig:example-ocel} connect an event to its participant and a communication event to its message without the need for attribute filtering.

\subsection{Flexibility of analysis viewpoints}\label{sec:disc-viewpoint}

A third benefit is viewpoint flexibility. In the object-centric representation, the case notion is not fixed in advance: the analysis viewpoint is chosen by selecting an object type, and the category of objects that a predicted label refers to (the target anchor) need not coincide with the scoping viewpoint. The same collaboration-case prefix can therefore support labels anchored on participants, messages, or orchestration cases without generating a separate source log for each target anchor. This decoupling allows the current catalog to retain a common viewpoint while using heterogeneous anchors, and it also leaves open prediction tasks based on alternative viewpoints. Existing object-centric PPM approaches do not need this decoupling, since their target and scoping object types coincide by construction (Section~\ref{sec:tasks-viewpoint}).

The present reformulation intentionally exercises only one point in this design space: \CC{} is fixed as the scoping viewpoint for all fourteen tasks so that their prefixes remain identical to those of \cite{delgado2025predictcollab}. The representation nevertheless admits alternative viewpoints, which would define different prediction problems. An \OC{} viewpoint would yield participant-local prefixes and support predictions about the continuation of one local execution, whereas a participant-type viewpoint could organize events across collaboration cases and support cross-case workload predictions. By contrast, a \Msg{} viewpoint has limited sequential value here because each message interaction is associated with a single event; it would become more useful if a correlated exchange or message lifecycle were represented as a multi-event object. Formalizing and evaluating tasks under these alternative viewpoints is left for future work.

\subsection{Operational cost and behavior at scale}\label{sec:disc-cost}

The benefits above come at a cost that RQ3 profiling makes measurable, and that cost has three components rather than one.

The first is structural. Materializing every communication interaction as its own object multiplies the number of objects and relations relative to a single trace, as the object and relation counts in Table~\ref{table_logcaracteristics} make apparent, and correspondingly enlarges the serialized log.

The second is the cost of consuming that structure. The stages that are model-independent by construction amount to roughly $6$~s per predictor run over the four baseline logs and to between $40$ and $87$~s on BPIC~2013, a spread that reflects machine conditions rather than workload. For the neural encodings, this is invisible against training, which accounts for over $98\%$ of their time. For the tabular ones, it is of the same order as training itself: on BPIC~2013, those $45$~s stand against $84$~s of fitting for Random Forest and $103$~s for XGBoost. The practical consequence for an object-centric pipeline is that, whenever the learner is cheap, the representation, not the model, is a first-order cost. This motivates profiling and optimizing the model-independent stages in that regime, but their aggregate time does not identify OCEL loading, rather than another stage, as the bottleneck.

The third is how these costs extrapolate, and here the encodings differ enough that a measurement taken on a small log does not transfer. The graph-native encoding is the only one whose per-prefix cost is stable across the two log groups --- $13.71$~s per $1{,}000$ labeled prefixes on the baseline logs against $13.46$~s on BPIC~2013, across an $8.5\times$ difference in scale --- which follows from its configuration, a fixed epoch budget over fixed-size subgraphs, one per labeled prefix. Among the encodings and logs examined here, it is the only one whose runtime tracks prefix count alone; this is an observation from these experiments, not a general predictability claim, since sequence length, relational density, and hardware were not varied independently across the five logs. The tabular predictors become cheaper per prefix on the larger log as their fixed budgets are amortized, whereas the sequence encodings become more expensive. Memory inverts outright: Random Forest is the lightest of the five on the baseline logs ($899$~MB peak) and by a wide margin the heaviest on BPIC~2013 ($6{,}411$~MB, about $40\%$ of the memory of the machine used and more than twice the graph-native encoding's peak). A plausible explanation is that it materializes the full feature table and its entire ensemble at once where the neural encodings stream mini-batches, but the reported figure is the whole-run peak resident set, which does not by itself identify which stage or which of the two (table or ensemble) drove it, nor separate that from incremental memory growth. This is the one cost dimension that neither the predictive metrics nor the wall-clock times reveal, and the one most likely to bind first as a collaborative log grows.

\subsection{Limitations}\label{sec:disc-limitations}

The message-synchronization and matching phenomena that the message objects expose are only lightly exercised: the representation asserts no correspondence between a send and a receive (rule~M4), so targets that presuppose such a pairing lie outside the catalog, and what a correlated representation would contribute to synchronization prediction remains open. Object attributes are moreover static, so a per-message lifecycle state is never carried as a changing attribute value. What remains available is an aggregate, prefix-level view: comparing the number of \emph{send} and \emph{receive} relations of a collaboration case up to a given prefix approximates a backlog of messages sent but not yet matched by a receive only under explicit assumptions the representation does not itself guarantee --- a one-to-one correlation between sends and receives, eventual completeness (every send is matched), no lost interactions, and a send always preceding its matching receive --- and even then without identifying the exchange each pending message belongs to, since a receive on an unrelated exchange offsets the count equally.

Second, individual performers remain event attributes rather than objects with an identity of their own (rule~M8), and only one evaluated log records an actor identifier at all; resource-level prediction is therefore definable over the source data but is neither anchored in the catalog nor evaluated here.

Third, an \OC{} object is identified by the collaboration case and the participant that delimits it (rule~M3). When a participant enacts its local process more than once within a single collaboration case, that pair does not separate the executions; the object aggregates them.

Fourth, the explicitness that benefits analysis carries a structural and operational cost, examined in Section~\ref{sec:disc-cost}: more objects and relations, a larger serialized log, and a preprocessing stage that becomes a first-order cost whenever the predictor itself is cheap.

Fifth, a limitation that is empirical rather than structural concerns what the representation has been shown to support predictively. On the four baseline logs, the predictors exceed their trivial baseline on the reformulated numeric targets in all but four task--log--predictor entries, all of them on NV-TNE; on the real-life log, none of them does so reliably (Section~\ref{sec:eval:summary}). Two readings are compatible with our data, and this study cannot separate them because log scale and log provenance vary together in it: the numeric targets of a real collaborative process may carry variance that the feature set extracted by the native object-centric pipeline does not explain, or the generative structure of the study logs may make their numeric targets unusually predictable. What the result does not support is attributing the gap to the object-centric representation itself, since the labels are provably source-level and the same features are extracted without flattening across all five logs. It does delimit the claim: the reformulation is executable and, under the interpretations of \ref{app:tasks}, label-preserving on every log, and demonstrably informative for numeric targets only on the curated ones.

None of these limitations affects the structural correctness of the mapping (P1.1--P1.6) or the reformulated tasks under the interpretations of \ref{app:tasks}; they delimit the scope within which the representation's benefits have been established.
The first three share a shape and are consequences of the preconditions the mapping deliberately does not impose, rather than of the rules themselves. The correspondence between a send and its reception, the individual performer, and a native identifier for a participant's local execution are each recorded by some source logs as ordinary event attributes, which rule~M8 preserves. Promoting them to a relation between existing message objects, to objects with their own identity, or to a refinement of each \OC{} into per-instantiation sub-objects would make them addressable through qualified relations rather than only readable as values, without disturbing the objects and relations that rules M1--M8 already create. Formalizing that refinement and assessing how message correlation, actor identity, and local-execution identity contribute to collaborative prediction are natural continuations of this work.

% =====================================================================
\section{Threats to Validity}\label{sec:threats}
% =====================================================================

We discuss the threats that may compromise the truthfulness of our findings, following the classification of \cite{wohlin2012experimentation} into conclusion, internal, construct, and external validity, and the measures taken to mitigate them.

\paragraph{Conclusion validity}
Each of the four baseline logs contains one hundred collaboration cases, which is a small sample for estimating per-task predictive performance under class imbalance; the fifth log, the reinterpreted BPI Challenge 2013, is an order of magnitude larger and partly offsets this concern, but it is a single log from a single domain and does not restore statistical power for the study as a whole. We therefore treat the RQ3 metrics, on all five logs, as descriptive evidence of operational feasibility and feature informativeness, not as estimates to be compared across representations or ranked; each predictor is run with a single fixed hyperparameter configuration and no systematic tuning, and trivial baselines are included only as a sanity check rather than as competing methods. To make the variability explicit rather than obscure it, results are reported as the mean and standard deviation across folds.

A further limitation concerns how the fixed hyperparameters were set: for the two predictors with no early-stopping mechanism (LSTM, Transformer), the epoch budget, and, for LSTM, the batch size and learning rate, were adjusted from a conventional starting point after observing marked underfitting on a small, non-exhaustive set of task--log combinations, rather than through a systematic procedure such as grid search or nested cross-validation. The reported LSTM and Transformer metrics should accordingly be read as the outcome of this exploratory calibration, not of a systematically tuned configuration; this does not affect the feasibility conclusions of RQ3, which do not rely on any predictor attaining its best possible performance. Relatedly, the reported dispersion is the fold-to-fold variability of a single run with a fixed seed on a single machine, not the variability across repeated independent cross-validation runs, so the reported standard deviations should not be interpreted as confidence intervals for experimental repetitions.

\paragraph{Internal validity}
The main internal threat is information leakage from the future of an execution into the prefix used to predict it. We mitigate it by partitioning at the collaboration-case level, wrapping the evaluation in grouped five-fold cross-validation on \CC so that all prefixes of a collaboration case remain in the same fold, and computing features only from the observable prefix. To guard against a misalignment between extracted feature rows and their intended targets, the pipeline includes a remaining-time oracle check that aborts on any schema mismatch.

A second internal threat concerns the two tasks that take a value parameter (NV-PaT and OB-M). Their parameter is resolved once per log by selecting the most frequent participant or message activity across the entire log and \emph{before} the fold split, and is then held fixed across the five folds. This resolution does not leak label or feature information into a test fold, since it only fixes which parameterized target is evaluated; it is not itself a per-fold decision. It does, however, define the task from the distribution of the whole log, test folds included, so the reported metrics are those of the best-populated parameter value rather than of a value drawn at random: they characterize the parameterized target actually evaluated, and should not be read as an expectation over parameter values. We state it as a design choice rather than as part of the cross-validation procedure.

\paragraph{Construct validity}
The constructs of interest are the semantic correctness of the mapping and the fidelity of the reformulated task labels. Both are operationalized structurally: the targets are defined over the object-centric relations of $\mu(L)$, and label fidelity is assessed by two readers that follow different computation paths, one applying the source-level definitions to the original log and one traversing the generated OCEL~2.0 relations. Using independent computation paths reduces the risk that a shared implementation bias inflates their agreement, although, as noted in Section~\ref{sec:rq2experimentation}, the two readers are independent code paths rather than independent implementations by different teams.

A construct-specific threat to the catalog is that two of its fourteen task types do not induce distinct targets in the evaluated data. NE-NEPr labels the activity of the next event and NE-NEPa labels the pair (activity, actor) of that same event; on the four baseline logs, no activity is ever performed by more than one actor, so the pair is a bijective relabeling of the activity, and both tasks induce the identical partition of the label space. Their scores are consequently identical for every predictor on three of the four logs and differ little on the fourth, where the residual gap reflects the label ordering seen by the stochastic components rather than a different partition. This is a property of the evaluated data, not of the reformulation, which keeps the two types distinct by definition.

Finally, BPI Challenge 2013 is not a natively collaborative log: obtaining a collaborative reading of it requires a domain reinterpretation in which organizational lines are treated as participants and incidents as collaboration instances, and, critically, inter-participant hand-overs are recovered heuristically from a change of responsible line between consecutive events, with the corresponding send interactions synthesized rather than observed (Section~\ref{sec:eval:datasets}). The synthesized elements are flagged as such in the resulting log, but the log's collaborative structure is an interpretation of the source data, not a record of it. This matters most where our conclusions are negative: the numeric results reported on this log are conditioned on that reinterpretation being an adequate reading of the underlying process.

\paragraph{External validity}
Four collaborative event logs are publicly available in \cite{delgado2025predictcollab}, comprising both synthetic and realistic scenarios but covering a narrow range of domains, which limits the generalizability of the quantitative observations to other collaborative processes. To partly address this threat, we added the reinterpreted BPI Challenge 2013 log, which extends the RQ1 applicability evidence and, at full scale, the RQ2 label-equivalence evidence to real-life data, and extends the RQ3 evidence to both the six-task representative subset and the full fourteen-task catalog.

On that log, the RQ3 fixed-hyperparameter pipeline behaves differently from the four baseline logs, and the difference is specific rather than uniform. All five predictors exceed the trivial baseline on all six non-degenerate classification task types (30 task--predictor entries). Across the six regression task types (also 30 entries), by contrast, four of the five predictors are worse than the trivial baseline on every task, by $21.5\%$ to $42.8\%$ of mean relative MAE, and the fifth (GNN) is better on four tasks but by margins under $0.6\%$ on three of them. We report this descriptively rather than draw a comparative conclusion.

Two properties of the design limit how far that observation can be generalized. First, scale and provenance are confounded: BPIC~2013 is simultaneously an order of magnitude larger, real rather than curated, and a single log contributed against four, so its differences from the baseline logs should not be taken as a controlled variation of log size. It would require a second real-life collaborative log, which lies outside the scope of this evaluation. Second, the converter assumes the extended collaborative log schema of Predict-Collab, so applying the mapping to logs produced by other collaboration-mining pipelines would require aligning their attributes with the ones that rules M1--M8 consume; the two-level design mitigates this at the representation level, since the same core-level mapping could be serialized for other OCED implementations, of which OCEL~2.0 is the one used here.

Finally, to avoid any conflict of interest that could affect the reported findings, the study is deliberately non-comparative and makes no superiority claim over the collaborative baseline, consistent with the declaration in the front matter.

% ====================
\section{Conclusions}\label{sec:conclusion}
% ====================

This paper investigated the connection between collaborative and object-centric predictive process monitoring. We defined a semantic mapping from extended collaborative event logs to an object-centric representation that conforms to the OCED core model and is serialized in OCEL~2.0, in which the global collaboration case, the orchestration cases, the participants, and the communication interactions are explicitly represented as related objects. We reformulated fourteen collaborative prediction tasks as object-centric tasks over this representation and provided a reproducible converter and prediction pipeline over three encoding approaches and five corresponding predictive models.

The empirical validation was conducted on five collaborative event logs: the four publicly available logs of the collaborative baseline \citep{delgado2025predictcollab}, and a fifth one derived from the BPI Challenge 2013 incident-management log through a documented collaborative reinterpretation, which is an order of magnitude larger and the only real-life log of the study. On these logs, the converter yields representations satisfying the consistency criteria P1.1--P1.6, exported as schema-valid OCEL~2.0 logs (RQ1); the reformulated tasks reproduce the source labels on every evaluated prefix, with zero mismatches; and the full catalog of fourteen tasks executes end-to-end for the five predictors on the five logs, so that all $350$ task--log--predictor combinations completed their five folds without failure (RQ3).

Overall, the representation makes the collaborative structure explicit and directly addressable, supports flexible analytical viewpoints, and facilitates stating targets that naturally arise from the object-centric structure and fall outside the original single-case taxonomy, at the cost of a larger relational structure and dependence on object-centric tooling.

Several directions remain open. Relating the two interactions of a single logical message, giving the individual performer an identity of its own, and separating the repeated instantiations of a participant's local process would make new prediction targets addressable within the representation; formalizing and evaluating them is the immediate next step in this work. The numeric result on the real-life log points to two further directions that the present evaluation cannot separate, since log scale and log provenance vary together in it: enriching the native object-centric feature set that the pipeline currently extracts, and evaluating a second real-life collaborative log. Systematic hyperparameter search, deliberately excluded here in favor of a single fixed configuration, is a prerequisite for either. Further work includes validating the framework on collaborative logs from other domains and produced by other collaboration-mining pipelines, and studying how the object-centric representation compares with the case-centric one in terms of predictive performance.

\appendix
% ==================
\section{Formal definitions of the mapping}\label{app:formal}
% ==================

This appendix provides the formal counterparts to the mapping rules introduced in Sect.~\ref{sec:mapping}. It formalizes the transformation $\mu$ (rules M1--M8) and establishes the consistency criteria P1.1--P1.6.

\begin{definition}\label{def:app-r1}
An extended \textbf{collaborative event log} (adapted from \cite{delgado2025predictcollab}) is a tuple
$L = (E_L, \mathcal{C}, \mathcal{P}, \mathcal{A}, X_L, V_L, \mathit{case}, \mathit{act}, \mathit{time}_L, \mathit{part}, \mathit{elem}, \mathit{from}, \mathit{to}, D, \prec_0)$
where $E_L$ is a finite set of events; $\mathcal{C}$, $\mathcal{P}$, and $\mathcal{A}$ are sets of collaboration-case identifiers, participant identifiers, and activity labels; $X_L$ is a set of residual attribute names and $V_L$ a set of values; $\mathit{case}\colon E_L \to \mathcal{C}$, $\mathit{act}\colon E_L \to \mathcal{A}$, $\mathit{time}_L\colon E_L \to \mathbb{T}$, $\mathit{part}\colon E_L \to \mathcal{P}$, and $\mathit{elem}\colon E_L \to \{\textit{task}, \textit{SendTask}, \textit{ReceiveTask}\}$ are total functions; $\mathit{from}, \mathit{to}\colon E_L \rightharpoonup \mathcal{P}$ are partial functions defined only when $\mathit{elem}(e) \neq \textit{task}$, and possibly undefined even then, when the source format does not record that endpoint for $e$ (see the Normalization paragraph below); $D\colon E_L \to (X_L \rightharpoonup V_L)$ assigns to each event $e$ a partial function giving the values of its residual data attributes, i.e., the data attributes $d_1, \ldots, d_m$ of \cite{delgado2025predictcollab} other than $\mathit{part}$, $\mathit{elem}$, $\mathit{from}$, and $\mathit{to}$ (for $x\in X_L$, $D(e)(x)$ is defined exactly when $e$ carries a value for $x$, i.e., when $x\in\mathrm{dom}(D(e))$; rule M8 preserves $D(e)$ together with the structural source attributes); and $\prec_0$ is a fixed total order on $E_L$ given by the order of appearance of the events in the merged collaborative log. These accessors correspond, respectively, to the functions $\pi_C$, $\pi_A$, $\pi_T$, $\pi_P$, and $\pi_{ET}$ of the collaborative event log of \cite{delgado2025predictcollab}; $\mathit{from}$ and $\mathit{to}$ correspond to its \texttt{fromParticipant} and \texttt{toParticipant} attributes. We write $\mathcal{C}_L = \mathit{case}(E_L)$, $\mathcal{A}_L = \mathit{act}(E_L)$, $\mathcal{P}_L = \mathit{part}(E_L) \cup \mathrm{ran}(\mathit{from}) \cup \mathrm{ran}(\mathit{to})$, $S_L = \{e \in E_L : \mathit{elem}(e) = \textit{SendTask}\}$, and $R_L = \{e \in E_L : \mathit{elem}(e) = \textit{ReceiveTask}\}$. The log is well formed when (i) $\mathit{elem}(e)=\textit{SendTask}$ implies that $\mathit{from}(e)$ is defined and equals $\mathit{part}(e)$, and (ii) $\mathit{elem}(e)=\textit{ReceiveTask}$ implies that $\mathit{to}(e)$ is defined and equals $\mathit{part}(e)$; the equalities are thus read strictly, so an event whose own-side endpoint is undefined violates well-formedness (the normalization $\nu$ below restores it). In particular, on a well-formed log, only the counterparty side ($\mathit{to}$ on $S_L$, $\mathit{from}$ on $R_L$) may remain undefined. (A third condition requiring every $p\in\mathcal{P}_L$ to occur as a send endpoint is unnecessary and is not imposed: $\mathcal{P}_L=\mathit{part}(E_L)\cup\mathrm{ran}(\mathit{from})\cup\mathrm{ran}(\mathit{to})$ already guarantees, by construction, that every $p\in\mathcal{P}_L$ is the participant of some event or the $\mathit{from}$/$\mathit{to}$ endpoint of some $e\in S_L\cup R_L$; see the proof of P1.5.) The timestamp domain $\mathbb{T}$ is assumed to be totally ordered and equipped with a difference operation $t_1 - t_2$ taking values in a domain $\Delta\mathbb{T}$ of \emph{durations}, in general distinct from $\mathbb{T}$ itself, since a duration need not be a valid instant; the order is what makes $\prec_L$ below a trace order, and the difference is what the time-based targets of \ref{app:tasks} return. The Representation of source domains paragraph below fixes the encoding of $\mathbb{T}$ into $\mathbb{U}_{\mathit{time}}$, the encoding of $\Delta\mathbb{T}$ into $\mathbb{U}_{\mathit{val}}$, and the compatibility they must satisfy. The element type $\mathit{elem}(e)$ already distinguishes \textit{task}, \textit{SendTask}, and \textit{ReceiveTask}, so no separate message-type function is needed. Let $\prec_L$ be the per-case order defined by $e\prec_L e'$ iff $\mathit{case}(e)=\mathit{case}(e')$ and either $\mathit{time}_L(e)<\mathit{time}_L(e')$, or $\mathit{time}_L(e)=\mathit{time}_L(e')$ and $e\prec_0 e'$; that is, events are ordered by timestamp and ties are broken by their order of appearance. Because $e\prec_L e'$ requires $\mathit{case}(e)=\mathit{case}(e')$, events of different cases are $\prec_L$-incomparable; this tie-breaking makes $\prec_L$ a strict total order on each per-case set $E_c=\{e\in E_L:\mathit{case}(e)=c\}$, not on all of $E_L$, and guarantees that the trace order of each case is preserved before and after the mapping. The trace of collaboration case $c$ is $\sigma_c = \langle e_1, \ldots, e_{n_c} \rangle$, the sequence of $\{e \in E_L : \mathit{case}(e) = c\}$ ordered by $\prec_L$; since $\mathcal{C}_L=\mathit{case}(E_L)$, every $c\in\mathcal{C}_L$ has $n_c\ge 1$.

\paragraph{Convention: cases and prefix ranges.} Throughout \ref{app:formal} and \ref{app:tasks}, quantification over "every collaboration case $c$" ranges over $c\in\mathcal{C}_L$ unless stated otherwise, and an interval of prefix lengths such as $[1,n_c-1]$ denotes the \emph{integer} interval $\{1,2,\ldots,n_c-1\}$, empty when $n_c-1<1$. Since $n_c\ge 1$ for every $c\in\mathcal{C}_L$, the only case in which this range is empty is $n_c=1$: a case consisting of a single event has no valid prefix length under $k\in[1,n_c-1]$, so none of the fourteen tasks of \ref{app:tasks} is ever evaluated on it.
\end{definition}

\paragraph{Normalization.} The source format may leave a send or receive event's own endpoint implicit, recording only the counterparty: $\mathit{from}(e)$ undefined for some $e\in S_L$, or $\mathit{to}(e)$ undefined for some $e\in R_L$, relying on \texttt{collab:participant} to supply the missing side. Given a log $L_0$ satisfying every requirement of Definition~\ref{def:app-r1} except that $\mathit{from}_0,\mathit{to}_0$ may be undefined on such events, define $\nu(L_0)$ as the log obtained by replacing $\mathit{from}_0,\mathit{to}_0$ with
\[
\begin{aligned}
\mathit{from}(e) &= \begin{cases}
\mathit{part}(e) & \text{if } e\in S_L \text{ and } \mathit{from}_0(e) \text{ is undefined},\\
\mathit{from}_0(e) & \text{otherwise},
\end{cases}\\[4pt]
\mathit{to}(e) &= \begin{cases}
\mathit{part}(e) & \text{if } e\in R_L \text{ and } \mathit{to}_0(e) \text{ is undefined},\\
\mathit{to}_0(e) & \text{otherwise}.
\end{cases}
\end{aligned}
\]
This backfills only the event's \emph{own} side, from \texttt{collab:participant}, which the source format always carries. The \emph{counterparty} side --- $\mathit{to}_0(e)$ for $e\in S_L$, $\mathit{from}_0(e)$ for $e\in R_L$ --- is taken as given by the source and is not guessed: when the source omits it, $\mathit{from}(e)$ or $\mathit{to}(e)$ stays undefined after $\nu$, so $\mathit{from},\mathit{to}$ remain genuinely partial on $S_L\cup R_L$ in that case. This occurs in the logs used in this work; Section~\ref{sec:rq1experimentation} reports the affected events per log. For such an event, $\mu$ still creates its \Msg{} object (rule M4) but omits the corresponding \emph{from}/\emph{to} relation and \texttt{sender}/\texttt{receiver} attribute (\ref{def:app-mapping}) rather than inferring a value; criterion P1.3 reports this completeness gap explicitly rather than treating it as a disagreement. Well-formedness conditions (i) and (ii) only constrain the \emph{own} side. Note that $\nu$ backfills a \emph{missing} own side but deliberately does not overwrite a recorded one, so (i) and (ii) hold after $\nu$ exactly when any own-side endpoint the source \emph{does} record already agrees with \texttt{collab:participant}: a source event carrying $\mathit{from}_0(e) \neq \mathit{part}(e)$ for a send (resp.\ $\mathit{to}_0(e) \neq \mathit{part}(e)$ for a receive) remains non-well-formed after $\nu$ and is surfaced by criterion P1.3 as a participant/sender (resp.\ participant/receiver) disagreement, rather than silently corrected. In the logs used in this work, this premise holds vacuously --- no send records its own $\mathit{from}_0$ and no receive records its own $\mathit{to}_0$; the own side is always left implicit --- so (i) and (ii) hold after $\nu$, regardless of whether the counterparty side is defined. The mapping $\mu$ below is understood to be applied to $\nu(L_0)$ rather than to a possibly non-well-formed $L_0$ directly; the converter performs this normalization while reading the source log (\texttt{collab\_xes\_to\_ocel.py}, function \texttt{\_sorted\_case\_events}).

\begin{definition}\label{def:app-ocel}
Let $\mathbb{U}_{\mathit{ev}}$, $\mathbb{U}_{\mathit{etype}}$, $\mathbb{U}_{\mathit{obj}}$, $\mathbb{U}_{\mathit{otype}}$, $\mathbb{U}_{\mathit{attr}}$, $\mathbb{U}_{\mathit{val}}$, $\mathbb{U}_{\mathit{time}}$ (totally ordered, with smallest element $0$ and largest element $\infty$, i.e., $0\le t\le\infty$ for every $t\in\mathbb{U}_{\mathit{time}}$), and $\mathbb{U}_{\mathit{qual}}$ be the pairwise disjoint universes of events, event types, objects, object types, attribute names, attribute values, timestamps, and qualifiers of Definition~1 of \cite{berti2024ocel}. An \textbf{object-centric event log} is a tuple
\[
\begin{aligned}
\mathcal{L} = (&E, O, EA, OA, \mathit{evtype}, \mathit{time}, \mathit{objtype},
                 \mathit{eatype}, \mathit{oatype},\\
               &\mathit{eaval}, \mathit{oaval}, \mathit{E2O}, \mathit{O2O}),
\end{aligned}
\]

\noindent where:
\begin{itemize}
    \item $E \subseteq \mathbb{U}_{\mathit{ev}}$ is the set of events;
    \item $O \subseteq \mathbb{U}_{\mathit{obj}}$ is the set of objects;
    \item $EA \subseteq \mathbb{U}_{\mathit{attr}}$ is the set of event attributes;
    \item $OA \subseteq \mathbb{U}_{\mathit{attr}}$ is the set of object attributes;
    \item $\mathit{evtype} \colon E \rightarrow \mathbb{U}_{\mathit{etype}}$ assigns event types to events;
    \item $\mathit{time} \colon E \rightarrow \mathbb{U}_{\mathit{time}}$ assigns timestamps to events;
    \item $\mathit{objtype} \colon O \rightarrow \mathbb{U}_{\mathit{otype}}$ assigns types to objects;
    \item $\mathit{eatype} \colon EA \rightarrow \mathbb{U}_{\mathit{etype}}$ assigns event types to event attributes;
    \item $\mathit{oatype} \colon OA \rightarrow \mathbb{U}_{\mathit{otype}}$ assigns object types to object attributes;
    \item $\mathit{eaval} \colon E \times EA \nrightarrow \mathbb{U}_{\mathit{val}}$ assigns values to event attributes (partial function);
    \item $\mathit{oaval} \colon O \times OA \times \mathbb{U}_{\mathit{time}} \nrightarrow \mathbb{U}_{\mathit{val}}$ assigns values to object attributes (partial function);
    \item $\mathit{E2O} \subseteq E \times \mathbb{U}_{\mathit{qual}} \times O$ is the set of qualified event-to-object relations;
    \item $\mathit{O2O} \subseteq O \times \mathbb{U}_{\mathit{qual}} \times O$ is the set of qualified object-to-object relations.
\end{itemize}

\noindent such that:
\begin{itemize}
    \item $\mathrm{dom}(\mathit{eaval}) \subseteq \{(e, \mathit{ea}) \in E \times EA \mid \mathit{evtype}(e) = \mathit{eatype}(\mathit{ea})\}$
    \item $\mathrm{dom}(\mathit{oaval}) \subseteq \{(o, \mathit{oa}, t) \in O \times OA \times \mathbb{U}_{\mathit{time}} \mid \mathit{objtype}(o) = \mathit{oatype}(\mathit{oa})\}$.
\end{itemize}
As noted in \cite{berti2024ocel}, these constraints make the attribute sets of distinct event types and of distinct object types disjoint, and a single value assignment at timestamp $0$ encodes a static (unchanging) attribute value.
\end{definition}

\paragraph{Per-type attribute convention.} Since $\mathit{eatype}$ is a function and attribute sets are disjoint across event types, an attribute that is conceptually shared by all event types is represented formally as one attribute per event type. Thus, $\texttt{collab:participant}_a$, $\texttt{collab:elemType}_a$, $\texttt{collab:fromParticipant}_a$, and $\texttt{collab:toParticipant}_a$ denote the corresponding source attributes for event type $a \in \mathcal{A}_L$. Analogously, $x_a$ denotes a residual source attribute $x$ carried by events of type $a$ (rule M8). The convention applies verbatim to object attributes through $\mathit{oatype}$: $\texttt{caseId}_{\CC}$ and $\texttt{caseId}_{\OC}$ denote homonymous attributes of distinct object types, and, because rule~M2 gives each participant identifier an object type of its own, the participant name is likewise one attribute per participant type, written $\texttt{name}_p$ for $p \in \mathcal{P}_L$. We use the unqualified name when the type is clear from context. The relational format of \cite{berti2024ocel} follows this convention by using separate tables for different event and object types.

\paragraph{Representation of source domains.} We assume fixed injective encodings of source activity labels into $\mathbb{U}_{\mathit{etype}}$, source timestamps into $\mathbb{U}_{\mathit{time}}$, and source identifiers and attribute values (including $V_L$ of Definition~\ref{def:app-r1}) into $\mathbb{U}_{\mathit{val}}$; the residual attribute \emph{names} $X_L$ are not values but attribute names, encoded per event type by a fixed injective map $(x,a) \mapsto x_a$ of $X_L \times \mathcal{A}_L$ into $\mathbb{U}_{\mathit{attr}}$ (the per-type attribute convention above), with range disjoint from the reserved names that populate $EA$ after the M8 renaming --- $\texttt{participant}_a$, $\texttt{elemType}_a$, $\texttt{fromParticipant}_a$, and $\texttt{toParticipant}_a$ --- as well as from their \texttt{collab:}-prefixed source counterparts; this disjointness is what keeps $\mathit{eaval}$ single-valued in Definition~\ref{def:app-mapping} even when a residual source attribute is itself named, e.g., \texttt{participant}. To avoid clutter, the encoding functions are left implicit. For source timestamps, we require more than injectivity. We assume, beyond Definition~\ref{def:app-ocel} --- which fixes for $\mathbb{U}_{\mathit{time}}$ no operation besides a smallest element $0$, and gives no universe of durations at all --- a typed subtraction $\ominus\colon\mathbb{U}_{\mathit{time}}\times\mathbb{U}_{\mathit{time}}\to\mathbb{U}_{\mathit{val}}$, landing in the OCEL attribute-value universe rather than in $\mathbb{U}_{\mathit{time}}$ itself, since a duration is not an instant and Definition~\ref{def:app-ocel} provides no dedicated universe for it; $\mathbb{U}_{\mathit{val}}$ is the standard's own carrier for typed non-instant values, and is disjoint from $\mathbb{U}_{\mathit{time}}$, so this reuses the metamodel rather than extending it. The encoding $\iota\colon\mathbb{T}\to\mathbb{U}_{\mathit{time}}$ must then be an embedding compatible with order and subtraction, i.e., for all $t_1,t_2,t_3,t_4\in\mathbb{T}$ arising in $L$, $\iota(t_1)\le\iota(t_2)$ iff $t_1\le t_2$, and
\[
\iota(t_1)\ominus\iota(t_2)=\iota(t_3)\ominus\iota(t_4)
\quad\Longleftrightarrow\quad
t_1-t_2=t_3-t_4.
\]
Concretely, let $\Delta\mathbb{T} = \{t_1 - t_2 : t_1, t_2 \in \mathbb{T} \text{ arising in } L\}$ be the domain of \emph{durations} obtained by subtracting two source timestamps -- a domain distinct from $\mathbb{T}$ itself, since a duration is not, in general, a valid instant. Subtraction-compatibility states that $\iota$ induces a map $\iota_\Delta(t_1 - t_2) = \iota(t_1)\ominus\iota(t_2)$ on $\Delta\mathbb{T}$ --- well defined by the right-to-left implication and injective by the left-to-right implication --- and hence invertible on its image $\iota_\Delta(\Delta\mathbb{T}) \subseteq \mathbb{U}_{\mathit{val}}$; we write $\mathrm{dec}_{\Delta\mathbb{T}}\colon \iota_\Delta(\Delta\mathbb{T}) \to \Delta\mathbb{T}$ for that inverse. This range need not be disjoint from the value encoding's range fixed above: which inverse to apply to a task's output --- $\mathrm{dec}_{\Delta\mathbb{T}}$, $\mathrm{dec}_{\mathcal{A}}$, $\mathrm{dec}_{\mathsf{Pa}}$, or the identity --- is selected by that task's declared return type (Table~\ref{tab:p2-bytask}, \ref{app:tasks}), never by inspecting the value itself, so no ambiguity arises from the two encodings sharing a codomain. This is needed so that time differences computed on $\mu(L)$ (rules NV-PrT, NV-PaT, NV-TNE, NV-TNM of \ref{app:tasks}) agree with the corresponding source-level differences once decoded via $\mathrm{dec}_{\Delta\mathbb{T}}$, not merely happen to be distinguishable (Proposition~\ref{prop:p2-equivalence}); no task in \ref{app:tasks} returns an absolute timestamp (a member of $\mathbb{U}_{\mathit{time}}$ decoded via the pointwise inverse of $\iota$), so $\mathrm{dec}_{\Delta\mathbb{T}}$, typed over differences rather than points, is the only timestamp-decoding function \ref{app:tasks} actually applies to a task output. The symbols \CC, \OC, and \Msg{} denote three distinct members of $\mathbb{U}_{\mathit{otype}}$, and all attribute and qualifier names used below denote members of their corresponding universes. Rule M2 turns participant identifiers into object types as well, so we additionally fix an injective encoding $\rho\colon\mathcal{P}\to\mathbb{U}_{\mathit{otype}}$ whose range avoids those three symbols, and write $\TPa=\rho(\mathcal{P}_L)$ for the set of participant types the log declares, one per identifier occurring in it. A participant identifier is therefore encoded twice, as the object type $\rho(p)$ and as the attribute value stored by $\texttt{name}_p$, which are members of the disjoint universes $\mathbb{U}_{\mathit{otype}}$ and $\mathbb{U}_{\mathit{val}}$; the assertion that a type and a name attribute ``both equal'' a participant identifier (criterion~P1.4) means that both are the image of the same $p\in\mathcal{P}_L$ under the respective injective encoding. The symbol \textsf{Participant}{} names the role that these objects play, not an object type: under rule M2, there is no single \textsf{Participant}{} type, and the set $O_{\textsf{Participant}}$ of participant objects introduced below is typed by $\TPa$.

\paragraph{Encoding convention.} The equations of rules M1--M8 below, the well-definedness sketch, and the accessor-invariance lemma of \ref{app:tasks} write source-domain terms such as $\mathit{act}(e)$, $\mathit{time}_L(e)$, $\mathit{part}(e)$, $\mathit{elem}(e)$, and $D(e)(x)$ as if they already belonged to the corresponding OCEL universe ($\mathbb{U}_{\mathit{etype}}$, $\mathbb{U}_{\mathit{time}}$, $\mathbb{U}_{\mathit{val}}$). Each such occurrence denotes the image of that source element under the fixed injective encoding declared above --- $\iota$ for timestamps, the activity-label and value encodings, and $\rho$ for participant identifiers where a type rather than a value is required --- never the source element itself; the encodings being injective, this identification is harmless and every equality or set membership stated in these terms transports back to the source domain via the corresponding inverse. We adopt this convention throughout \ref{app:formal} and \ref{app:tasks} to avoid subscripting every occurrence with its encoding function.

\paragraph{Identifier creation} The universes of Definition~\ref{def:app-ocel} are pairwise disjoint. In particular, $\mathbb{U}_{\mathit{obj}} \cap \mathbb{U}_{\mathit{val}} = \emptyset$, so a source identifier stored as an attribute value cannot itself be used as an object. We therefore fix injective naming functions with pairwise disjoint ranges in $\mathbb{U}_{\mathit{obj}}$:
$c \mapsto \mathit{cc}_c$, \;
$p \mapsto p_p$, \;
$(c,p) \mapsto \mathit{oc}_{c,p}$, \;
$e \mapsto m_e$ (for $e \in S_L \cup R_L$), \;
and an injection $\mu_E\colon E_L \to \mathbb{U}_{\mathit{ev}}$. For the object naming functions, any concrete injective scheme is sufficient; the converter uses type prefixes separated by \texttt{::}, such as \texttt{cc::case\_44}. For $\mu_E$ we require more: fix a strict total order $\prec_{\mathit{ev}}$ on $\mu_E(E_L)\subseteq\mathbb{U}_{\mathit{ev}}$ (for a string-valued encoding, its lexicographic order), and require $\mu_E$ to be \emph{order-preserving} from $(E_L,\prec_L)$ into $(\mathbb{U}_{\mathit{ev}},\prec_{\mathit{ev}})$, i.e., $e\prec_L e'$ implies $\mu_E(e)\prec_{\mathit{ev}}\mu_E(e')$ for all $e,e'\in E_L$. (We deliberately do not call $\mu_E$ an order-embedding: the converse implication necessarily fails across cases, whose events are $\prec_L$-incomparable yet $\prec_{\mathit{ev}}$-comparable. Within each case, however, $\prec_L$ is total, so order preservation together with injectivity yields the full biconditional there --- which is all that criterion P1.2 uses.) Injectivity of $\mu_E$ alone does not give order preservation: creating per-case indices $0,1,\ldots$ in $\prec_L$ order and formatting them without fixed-width zero padding is \emph{not} order-preserving under lexicographic order, since \texttt{"...::10"} precedes \texttt{"...::9"}; the converter pads each index to the width needed for its case.

\begin{definition}\label{def:app-mapping}
Given $L$ as in Definition~\ref{def:app-r1}, $\mu$ produces the object-centric event log $\mu(L)$ defined component-wise by rules M1--M8, i.e.,
$\mu(L) = (E, O, EA, OA,$ $
\mathit{evtype}, \mathit{time}, \mathit{objtype}, \mathit{eatype}, \mathit{oatype},
\mathit{eaval}, \mathit{oaval}, \mathit{E2O}, \mathit{O2O})$

\paragraph{Objects and object types (M1--M4)}
\begin{description}[leftmargin=3.2em,style=nextline]
  \item[M1.] $O_{\CC} = \{\, \mathit{cc}_c : c \in \mathcal{C}_L \,\}$. Every such object satisfies $\mathit{objtype}(\mathit{cc}_c) = \CC$.
  \item[M2.] $O_{\textsf{Participant}} = \{\, p_p : p \in \mathcal{P}_L \,\}$. For every $p \in \mathcal{P}_L$, $\mathit{objtype}(p_p) = \rho(p)$, with log-level scope (one object per identifier across the whole log). The object type of a participant object is thus the participant identifier itself, and the types declared by the log are $\TPa = \{\rho(p) : p \in \mathcal{P}_L\}$, one per identifier. The set $O_{\textsf{Participant}}$ collects the participant objects; it is not indexed by a single object type.
  \item[M3.] $O_{\OC} = \{\, \mathit{oc}_{c,p} : \exists e \in E_L.\ \mathit{case}(e) = c \wedge \mathit{part}(e) = p \,\}$, with $\mathit{objtype}(\mathit{oc}_{c,p}) = \OC$. The object $\mathit{oc}_{c,p}$ materializes the orchestration case that participant $p$ executes within collaboration case $c$; the pair $(c,p)$ identifies it, as the source format guarantees no local case identifier (Section~\ref{sec:map-decisions}).
  \item[M4.] $O_{\Msg} = \{\, m_e : e \in S_L \cup R_L \,\}$, with $\mathit{objtype}(m_e) = \Msg$. Each communication event, whether a send or a receive, yields its own message object; the core mapping introduces no correspondence between distinct send and receive interactions.
\end{description}
Then $O = O_{\CC} \cup O_{\textsf{Participant}} \cup O_{\OC} \cup O_{\Msg}$.

\paragraph{Events (M5)} $E = \mu_E(E_L)$, and for every $e \in E_L$:
\[
\mathit{evtype}(\mu_E(e)) = \mathit{act}(e), \qquad \mathit{time}(\mu_E(e)) = \mathit{time}_L(e).
\]

\paragraph{Event attributes (M5, M8)} Let $X_a = \bigcup_{e\in E_L,\ \mathit{act}(e)=a} \mathrm{dom}(D(e))$ denote, for each $a \in \mathcal{A}_L$, the set of residual source attribute names carried by at least one event $e$ of activity $a$ (Definition~\ref{def:app-r1}), and let $\mathcal{A}^{\mathit{msg}}_L=\{\mathit{act}(e):e\in S_L\cup R_L\}$. Then
\[
\begin{split}
EA = {} & \{\, \texttt{participant}_a,\ \texttt{elemType}_a : a \in \mathcal{A}_L \,\}\\
& \cup \{\, \texttt{fromParticipant}_a,\ \texttt{toParticipant}_a : a \in \mathcal{A}^{\mathit{msg}}_L \,\}\\
& \cup \{\, x_a : a \in \mathcal{A}_L,\ x \in X_a \,\},
\end{split}
\]
where each attribute subscripted by $a$ is assigned event type $a$ by $\mathit{eatype}$ (following the M8 renaming convention: the retained event attributes drop the \texttt{collab:} prefix of the corresponding source attribute --- \texttt{collab:participant}, \texttt{collab:elemType}, \texttt{collab:fromParticipant}, \texttt{collab:toParticipant} --- of the Per-type attribute convention paragraph above). The partial function $\mathit{eaval}$ is defined by
\begin{align*}
\mathit{eaval}\bigl(\mu_E(e),\texttt{participant}_{\mathit{act}(e)}\bigr) &= \mathit{part}(e),\\
\mathit{eaval}\bigl(\mu_E(e),\texttt{elemType}_{\mathit{act}(e)}\bigr) &= \mathit{elem}(e),
\end{align*}
for every $e\in E_L$. When $\mathit{elem}(e)\neq\textit{task}$, whenever the corresponding right-hand side is defined, also by
\begin{align*}
\mathit{eaval}\bigl(\mu_E(e),\texttt{fromParticipant}_{\mathit{act}(e)}\bigr) &= \mathit{from}(e),\\
\mathit{eaval}\bigl(\mu_E(e),\texttt{toParticipant}_{\mathit{act}(e)}\bigr) &= \mathit{to}(e),
\end{align*}
and by
\[
\mathit{eaval}\bigl(\mu_E(e),x_{\mathit{act}(e)}\bigr)=D(e)(x)
\]
whenever $x\in\mathrm{dom}(D(e))$; $\mathit{eaval}$ is undefined on \texttt{fromParticipant}$_{\mathit{act}(e)}$ (resp.\ \texttt{toParticipant}$_{\mathit{act}(e)}$) precisely when $\mathit{from}(e)$ (resp.\ $\mathit{to}(e)$) is, i.e., when the source leaves that endpoint unrecorded (Normalization paragraph above). Thus, structural attributes are retained even when their semantics are also materialized by E2O or O2O relations, and M8 drops no attribute of $L=\nu(L_0)$: preservation here is relative to the normalized log, not to the raw source $L_0$. On the own-side \texttt{fromParticipant}/\texttt{toParticipant} attribute in particular, M8 preserves exactly what $\nu$ already supplies (Normalization paragraph above) --- the source value $\mathit{from}_0(e)$/$\mathit{to}_0(e)$ when the raw log records it, or the value $\nu$ backfills from \texttt{collab:participant} when $L_0$ leaves it implicit --- so this is not a claim that $L_0$'s own-side attribute is always literally reproduced.

\paragraph{Object attributes (M1--M4)}
\[
\begin{split}
OA = {} & \{\texttt{caseId}_{\CC}\} \cup \{\texttt{caseId}_{\OC},\ \texttt{participant}\}\\
& \cup \{\texttt{sender},\ \texttt{receiver}\} \cup \{\, \texttt{name}_p : p \in \mathcal{P}_L \,\}
\end{split}
\]
where $\mathit{oatype}$ assigns the type \CC{} to $\texttt{caseId}_{\CC}$, the type \OC{} to $\texttt{caseId}_{\OC}$ and $\texttt{participant}$, the type \Msg{} to $\texttt{sender}$ and $\texttt{receiver}$, and the type $\rho(p)$ to $\texttt{name}_p$ for each $p \in \mathcal{P}_L$, one name attribute per participant type, as the disjointness of attribute sets across object types requires (Per-type attribute convention above). A message object carries no message-type attribute, since $\mathit{elem}$ already classifies the related events. All values are assigned by $\mathit{oaval}$ exactly once, at the reference timestamp $0 \in \mathbb{U}_{\mathit{time}}$ (the encoding of static attribute values in \cite{berti2024ocel}):
\begin{align*}
\mathit{oaval}(\mathit{cc}_c, \texttt{caseId}_{\CC}, 0) &= c,\\
\mathit{oaval}(p_p, \texttt{name}_p, 0) &= p,\\
\mathit{oaval}(\mathit{oc}_{c,p}, \texttt{caseId}_{\OC}, 0) &= c,\\
\mathit{oaval}(\mathit{oc}_{c,p}, \texttt{participant}, 0) &= p,\\
\mathit{oaval}(m_e, \texttt{sender}, 0) &= \mathit{from}(e) \quad \text{whenever } \mathit{from}(e) \text{ is defined},\\
\mathit{oaval}(m_e, \texttt{receiver}, 0) &= \mathit{to}(e) \quad \text{whenever } \mathit{to}(e) \text{ is defined},
\end{align*}
for $c \in \mathcal{C}_L$, $p \in \mathcal{P}_L$, every pair $(c,p)$ with $\mathit{oc}_{c,p} \in O_{\OC}$, and $e \in S_L \cup R_L$; $\mathit{oaval}$ is undefined elsewhere (in particular, on the \texttt{sender}/\texttt{receiver} attribute of $m_e$ when the source leaves that endpoint unrecorded, Normalization paragraph above). The \texttt{sender}/\texttt{receiver} attributes deliberately duplicate the \emph{from}/\emph{to} O2O relations of M7 whenever both are defined; criterion P1.3 checks their agreement and reports any $m_e$ for which one is defined and not the other as a genuine inconsistency, as opposed to an endpoint the source never recorded.

\paragraph{E2O relations (M6)} With qualifiers
\[
\begin{gathered}
Q_{\mathit{E2O}} = \{\emph{in\_collaboration}, \emph{in\_orchestration}, \emph{in\_participant},\\ \emph{send}, \emph{receive}\} \subseteq \mathbb{U}_{\mathit{qual}}:
\end{gathered}
\]
\begin{align*}
\mathit{E2O} = {} & \{\, (\mu_E(e), \emph{in\_collaboration}, \mathit{cc}_{\mathit{case}(e)}) : e \in E_L \,\}\\
\cup {} & \{\, (\mu_E(e), \emph{in\_orchestration}, \mathit{oc}_{\mathit{case}(e),\mathit{part}(e)}) : e \in E_L \,\}\\
\cup {} & \{\, (\mu_E(e), \emph{in\_participant}, p_{\mathit{part}(e)}) : e \in E_L \,\}\\
\cup {} & \{\, (\mu_E(e), \emph{send}, m_e) : e \in S_L \,\}\\
\cup {} & \{\, (\mu_E(e), \emph{receive}, m_e) : e \in R_L \,\}.
\end{align*}
The \emph{in\_participant} relation makes the participant directly accessible from each event. Criterion~P1.6 guarantees that it agrees with the participant reached through the two-step \emph{in\_orchestration}--\emph{for\_participant} path.

\paragraph{O2O relations (M7)}
With qualifiers $Q_{\mathit{O2O}} = \{\emph{part\_of}, \emph{for\_participant},\\ \emph{from}, \emph{to}, \emph{exchanged\_in}\} \subseteq \mathbb{U}_{\mathit{qual}}$:
\begin{align*}
\mathit{O2O} = {} & \{\, (\mathit{oc}_{c,p}, \emph{part\_of}, \mathit{cc}_c) : \mathit{oc}_{c,p} \in O_{\OC} \,\}\\
\cup {} & \{\, (\mathit{oc}_{c,p}, \emph{for\_participant}, p_p) : \mathit{oc}_{c,p} \in O_{\OC} \,\}\\
\cup {} & \{\, (m_e, \emph{from}, p_{\mathit{from}(e)}) : e \in S_L \cup R_L,\ \mathit{from}(e) \text{ defined} \,\}\\
\cup {} & \{\, (m_e, \emph{to}, p_{\mathit{to}(e)}) : e \in S_L \cup R_L,\ \mathit{to}(e) \text{ defined} \,\}\\
\cup {} & \{\, (m_e, \emph{exchanged\_in}, \mathit{cc}_{\mathit{case}(e)}) : e \in S_L \cup R_L \,\}.
\end{align*}
The standard places no constraint on $\mathbb{U}_{\mathit{qual}}$ beyond being a set of strings; keeping the E2O and O2O qualifier vocabularies disjoint is a presentation choice of this mapping, not a requirement of OCEL~2.0.
\end{definition}

We now state that the construction is well-defined and satisfies the consistency criteria P1.1--P1.6 in Section~\ref{sec:mapping}.

\begin{lemma}[Well-definedness]\label{lem:app-welldef}
For every extended collaborative log $L$, well formed or not, the structure $\mu(L)$ of Definition~\ref{def:app-mapping} is an object-centric event log in the sense of Definition~\ref{def:app-ocel}. Hence, $\mu$ produces logs that conform to the OCEL~2.0 metamodel.
\end{lemma}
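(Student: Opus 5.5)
The plan is to check, component by component, that each of the thirteen components of $\mu(L)$ in Definition~\ref{def:app-mapping} has the type that Definition~\ref{def:app-ocel} requires, and then to verify its two domain constraints. The argument never uses well-formedness conditions (i) and (ii), which is why the lemma holds for every $L$, well formed or not. Every clause that depends on an endpoint is guarded by ``whenever $\mathit{from}(e)$/$\mathit{to}(e)$ is defined'', so undefined endpoints only shrink the relations and partial functions; they never produce ill-typed entries.

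\textbf{Step 1: carrier sets.} By the identifier-creation paragraph, the naming functions land in $\mathbb{U}_{\mathit{obj}}$ and $\mu_E$ lands in $\mathbb{U}_{\mathit{ev}}$, so $O\subseteq\mathbb{U}_{\mathit{obj}}$ and $E\subseteq\mathbb{U}_{\mathit{ev}}$. The attribute names in $EA$ and $OA$ are members of $\mathbb{U}_{\mathit{attr}}$ by the representation paragraph.

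\textbf{Step 2: total functions.}
\begin{itemize}
\item $\mathit{evtype}$ and $\mathit{time}$ are single-valued because $\mu_E$ is injective, so each $\varepsilon\in E$ has a unique preimage $e$ and the assignments $\mathit{act}(e)$ and $\mathit{time}_L(e)$ are unambiguous. They are total because $\mathit{act}$ and $\mathit{time}_L$ are total, and their values lie in $\mathbb{U}_{\mathit{etype}}$ and $\mathbb{U}_{\mathit{time}}$ via the fixed encodings.
\item $\mathit{objtype}$ is single-valued because the naming functions have pairwise disjoint ranges, so each object receives exactly one of \CC, $\rho(p)$, \OC, or \Msg. Within participants, $p_p$ determines $p$ by injectivity.
\item $\mathit{eatype}$ and $\mathit{oatype}$ are functions because the map $(x,a)\mapsto x_a$ is injective, its range is disjoint from the reserved names, and the subscripts fix the type of each name.
\end{itemize}

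\textbf{Step 3: partial functions $\mathit{eaval}$ and $\mathit{oaval}$.} This is the main obstacle, because here single-valuedness could fail. For $\mathit{eaval}$, each defining equation has left-hand side $(\mu_E(e),n_{\mathit{act}(e)})$, and distinct equations use distinct attribute names. The residual names $x_a$ are disjoint from the reserved names by assumption, and $\mu_E$ is injective. Hence no pair $(\varepsilon,n)$ is assigned two values, even when a residual source attribute is itself called \texttt{participant}. For $\mathit{oaval}$, each object receives each of its attributes at most once, at time $0$. The \texttt{caseId} and \texttt{participant} values of $\mathit{oc}_{c,p}$ are determined by the pair $(c,p)$, which is recovered from $\mathit{oc}_{c,p}$ by injectivity of the naming function. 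The values of $m_e$ come from the unique event $e$.

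The domain constraints then follow directly:
\begin{itemize}
\item Every defined entry $\mathit{eaval}(\mu_E(e),n_{\mathit{act}(e)})$ satisfies $\mathit{eatype}(n_{\mathit{act}(e)})=\mathit{act}(e)=\mathit{evtype}(\mu_E(e))$.
\item Every defined entry of $\mathit{oaval}$ is placed on an object whose type matches $\mathit{oatype}$ of the attribute; for example, $\texttt{name}_p$ has type $\rho(p)=\mathit{objtype}(p_p)$.
\end{itemize}
I must also check that $x_{\mathit{act}(e)}\in EA$. This holds because $x\in\mathrm{dom}(D(e))\subseteq X_{\mathit{act}(e)}$.

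\textbf{Step 4: relations.} Every triple in $\mathit{E2O}$ and $\mathit{O2O}$ has the form (event or object, qualifier in $\mathbb{U}_{\mathit{qual}}$, object). What remains is to confirm that each target object actually lies in $O$:
\begin{itemize}
\item $\mathit{cc}_{\mathit{case}(e)}\in O_{\CC}$ because $\mathcal{C}_L=\mathit{case}(E_L)$.
\item $\mathit{oc}_{\mathit{case}(e),\mathit{part}(e)}\in O_{\OC}$, witnessed by $e$ itself.
\item $p_{\mathit{part}(e)}$, $p_{\mathit{from}(e)}$, and $p_{\mathit{to}(e)}$ lie in $O_{\textsf{Participant}}$, since $\mathcal{P}_L$ includes $\mathit{part}(E_L)$ and the ranges of $\mathit{from}$ and $\mathit{to}$.
\item $m_e\in O_{\Msg}$ for every $e\in S_L\cup R_L$.
\end{itemize}

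Conformance to the OCEL~2.0 metamodel follows because Definition~\ref{def:app-ocel} is Definition~1 of \cite{berti2024ocel}.
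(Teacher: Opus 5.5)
Your proposal is correct and follows essentially the same component-by-component check as the paper's sketch. Both arguments get $\mathit{objtype}$ from the injective, range-disjoint naming functions, $\mathit{evtype}$ and $\mathit{time}$ from totality of $\mathit{act}$ and $\mathit{time}_L$ with $\mu_E$ a bijection onto $E$, single-valuedness of $\mathit{eaval}$ from the disjointness of residual and reserved names, the typing constraints (notably $\mathit{oatype}(\texttt{name}_p)=\rho(p)=\mathit{objtype}(p_p)$), and the $\mathit{E2O}$/$\mathit{O2O}$ targets from objects created by M1--M4, with your version only spelling out a few checks the sketch leaves implicit (such as $x_{\mathit{act}(e)}\in EA$ and each relation target's membership in $O$).
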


We sketch the argument. The naming functions are injective and have pairwise disjoint ranges, so $\mathit{objtype}$ is a well-defined total function --- on participant objects because $\rho$ is defined on all of $\mathcal{P}$ --- and source identifiers, which are stored as attribute values through the fixed encodings above, are never reused as objects ($\mathbb{U}_{\mathit{obj}} \cap \mathbb{U}_{\mathit{val}} = \emptyset$); the same holds of the participant identifiers that M2 additionally encodes as object types, since $\mathbb{U}_{\mathit{otype}}$ is disjoint from both universes. The functions $\mathit{evtype}$ and $\mathit{time}$ are total because $\mathit{act}$ and $\mathit{time}_L$ are total and $\mu_E$ is a bijection onto $E$. Both $\mathit{eaval}$ and $\mathit{oaval}$ assign at most one value per argument --- for $\mathit{eaval}$, the defining clauses target pairwise distinct attributes because the residual names $x_a$ are disjoint from the reserved names $\texttt{participant}_a$, $\texttt{elemType}_a$, $\texttt{fromParticipant}_a$, and $\texttt{toParticipant}_a$ (Representation of source domains above) --- and pair each event or object with an attribute of its own type, so the typing constraints of Definition~\ref{def:app-ocel} hold; for the participant objects this is the equality $\mathit{oatype}(\texttt{name}_p) = \rho(p) = \mathit{objtype}(p_p)$, which is why the name attribute is indexed by $p$ rather than shared. Moreover, $\mathit{oaval}$'s partiality on $m_e$'s \texttt{sender}/\texttt{receiver} attribute exactly tracks that of $\mathit{from}(e)$/$\mathit{to}(e)$ (Normalization paragraph above), which need not both be defined for $e \in S_L\cup R_L$. Finally, every triple in $\mathit{E2O}$ and $\mathit{O2O}$ has its components in $E$, $\mathbb{U}_{\mathit{qual}}$, and $O$, and every target object is created by M1--M4; in particular, the \emph{in\_participant} triples of M6 target the participant objects of M2, which are defined for every $\mathit{part}(e)$.

\paragraph{OCED Core conformance.} Lemma~\ref{lem:app-welldef} establishes conformance to OCEL~2.0's metamodel (Definition~\ref{def:app-ocel}), which varies from and extends the OCED-MM Base Model (\ref{sec:bg-ocpm}); \citep{OCEDstandard} names as the main differences a per-type attribute schema ($\mathit{eatype}$, $\mathit{oatype}$) and timestamped, rather than static, object attribute values, and observes that dropping both yields a reduced OCEL~2.0 metamodel similar to the OCED Core Model up to relationship naming ($\mathit{E2O}$/\emph{observes}, $\mathit{O2O}$/object relations). Neither addition does any work in $\mu(L)$: the per-type attribute convention (Representation of source domains above) is a disjointness device for keeping $\mathit{eaval}$ and $\mathit{oaval}$ single-valued, not a constraint the construction otherwise relies on, and $\mathit{oaval}$ is assigned only at the reference timestamp $0$ throughout M1--M4, so $\mu(L)$'s object attributes are already static and never exercise the timestamped-history capability. Forgetting $\mathit{eatype}$, $\mathit{oatype}$, and the timestamp argument of $\mathit{oaval}$ therefore leaves $\mu(L)$'s events, objects, attribute values, and $E2O$/$O2O$ relations intact --- i.e., it reduces $\mu(L)$ to an instance of that reduced metamodel, and hence of the OCED Core Model up to the naming of relationships. This reduction only ever \emph{drops} structure Definition~\ref{def:app-ocel} requires beyond the reduced metamodel, so the well-typing established by Lemma~\ref{lem:app-welldef} carries over a fortiori, and none of P1.1--P1.6 below reads $\mathit{eatype}$, $\mathit{oatype}$, or a non-zero timestamp in $\mathit{oaval}$; both therefore hold of the reduced instance as well. This is what grounds the OCED-conformance claimed of the mapping, as distinct from its OCEL~2.0 serialization, which additionally exercises the per-type schema for machine-checkable typing (\ref{app:formal}, throughout).

\begin{proposition}[Consistency]\label{prop:app-p1}
For every \emph{well-formed} extended collaborative log $L$ (Definition~\ref{def:app-r1}), the log $\mu(L)$ satisfies the consistency criteria P1.1--P1.6.
\end{proposition}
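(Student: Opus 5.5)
The plan is to verify each criterion directly from the component-wise definition of $\mu(L)$ in Definition~\ref{def:app-mapping}. We rely on Lemma~\ref{lem:app-welldef} for well-typing and on two facts fixed in the Identifier creation paragraph: the naming functions $c\mapsto\mathit{cc}_c$, $p\mapsto p_p$, $(c,p)\mapsto\mathit{oc}_{c,p}$, $e\mapsto m_e$ are injective with pairwise disjoint ranges, and $\mu_E$ is injective and order-preserving. Every criterion is a statement about the finite sets $E$, $O$, $\mathit{E2O}$, $\mathit{O2O}$ and the partial functions $\mathit{eaval}$, $\mathit{oaval}$. Each such set is given as a union of images of source-level sets, so every claim reduces to a membership computation.

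For each triple, injectivity and disjointness tell us which generating clause produced it. The qualifiers in $Q_{\mathit{E2O}}$ and $Q_{\mathit{O2O}}$ are pairwise distinct, so each qualifier occurs in exactly one clause. Uniqueness statements then follow from the fact that each clause is the graph of a function of its source index.

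I would go in order.

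\textbf{P1.1.} This follows from $E=\mu_E(E_L)$ with $\mu_E$ injective, together with the M5 equations for $\mathit{evtype}$ and $\mathit{time}$ and the $\mathit{eaval}$ clauses of M8. Decoding is done via the injective encodings of the Encoding convention.

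\textbf{P1.2.} By the \emph{in\_collaboration} clause, $(\mu_E(e),\emph{in\_collaboration},\mathit{cc}_c)\in\mathit{E2O}$ iff $\mathit{cc}_{\mathit{case}(e)}=\mathit{cc}_c$, which by injectivity holds iff $\mathit{case}(e)=c$. The partition claim transports along the bijection $\mu_E$. For the order claim, $\prec_L$ is total on $E_c$, and $\mu_E$ is injective and order-preserving. Hence $e_i\prec_L e_j$ iff $\mu_E(e_i)\prec_{\mathit{ev}}\mu_E(e_j)$ within the case, so sorting by identifier yields $\langle\mu_E(e_1),\ldots,\mu_E(e_{n_c})\rangle$.

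\textbf{P1.4 and P1.6.} The \emph{in\_orchestration} target of $\mu_E(e)$ is $\mathit{oc}_{\mathit{case}(e),\mathit{part}(e)}$, which lies in $O_{\OC}$ with witness $e$. The M7 clauses give \emph{part\_of} to $\mathit{cc}_{\mathit{case}(e)}$, which is the \emph{in\_collaboration} target. They also give \emph{for\_participant} to $p_{\mathit{part}(e)}$, which is the \emph{in\_participant} target. Uniqueness of the \emph{for\_participant} target uses injectivity of $(c,p)\mapsto\mathit{oc}_{c,p}$. The agreement of $\mathit{objtype}(p_p)=\rho(p)$, $\mathit{oaval}(p_p,\texttt{name}_p,0)=p$ and $\mathit{oaval}(\mathit{oc}_{c,p},\texttt{participant},0)=p$ is read off M2 and M3.

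\textbf{P1.5.} This is a case split on the four object families.
\begin{itemize}
\item $\mathit{cc}_c$ is related because $c=\mathit{case}(e)$ for some $e$.
\item $\mathit{oc}_{c,p}$ has its \emph{part\_of} edge.
\item $m_e$ has its \emph{send} or \emph{receive} edge.
\item $p_p$ with $p\in\mathcal{P}_L=\mathit{part}(E_L)\cup\mathrm{ran}(\mathit{from})\cup\mathrm{ran}(\mathit{to})$ gets either an \emph{in\_participant} edge or a \emph{from}/\emph{to} edge.
\end{itemize}
The last case uses that $\mathit{from},\mathit{to}$ are defined only on $S_L\cup R_L$.

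I expect \textbf{P1.3} to be the main obstacle. It is the only criterion that uses well-formedness, and it has several conjuncts with partiality to track.

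\emph{Exactly one event per message, and exactly one message per event.} I would first show that $m_e$ is the target of exactly one \emph{send}/\emph{receive} triple. The \emph{send} clause ranges over $S_L$ and the \emph{receive} clause over $R_L$, and these are disjoint since $\mathit{elem}$ is a function. The injectivity of $e\mapsto m_e$ and $\mu_E$ then gives a unique source event $e$ and a unique qualifier. The converse direction, that each send (resp.\ receive) event reaches exactly one message object, is the same argument read the other way.

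\emph{Simultaneous definedness and agreement.} For each side, the \emph{from} O2O triple, the $\texttt{sender}$ value, and the $\texttt{fromParticipant}_{\mathit{act}(e)}$ value are all guarded by the same condition ``$\mathit{from}(e)$ defined''. When that holds, all three equal $\mathit{from}(e)$, using injectivity of $p\mapsto p_p$ for the relation. The \emph{to} side is symmetric.

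\emph{Own side and \emph{exchanged\_in}.} The statement ``the event participant is the sender'' is exactly well-formedness condition (i): $\mathit{from}(e)=\mathit{part}(e)$ for $e\in S_L$. Condition (ii) gives the receiver case. The \emph{exchanged\_in} conjunct is immediate from its clause and from P1.2's identification of $\mathit{cc}_{\mathit{case}(e)}$.

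I will state explicitly that P1.3 is the one place where (i) and (ii) enter. Without them, the own-side endpoint could be undefined or disagree with $\mathit{part}(e)$. This matches the Normalization remark that such events are surfaced as disagreements rather than silently corrected.
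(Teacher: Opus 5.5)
Your proposal is correct and follows essentially the same route as the paper. Both verify the criteria one by one from the defining clauses of M1--M8, using the injectivity of the naming functions and of $\mu_E$, order preservation of $\mu_E$ together with totality of $\prec_L$ on each case for P1.2, the definition $\mathcal{P}_L=\mathit{part}(E_L)\cup\mathrm{ran}(\mathit{from})\cup\mathrm{ran}(\mathit{to})$ for P1.5, and well-formedness conditions (i)/(ii) only in P1.3; the differences are cosmetic, such as witnessing non-orphanhood of $\mathit{oc}_{c,p}$ through its \emph{part\_of} edge rather than an \emph{in\_orchestration} edge, and stating explicitly that $S_L$ and $R_L$ are disjoint.
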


Well-formedness is used only by P1.3; P1.1, P1.2, P1.4, P1.5, and P1.6 hold for every $L$, well formed or not. The proposition is stated for well-formed logs uniformly for simplicity, and, by the Normalization paragraph above, well-formedness of $\mathit{from}/\mathit{to}$ is not an extra burden on the source format: applying $\mu$ to $\nu(L_0)$ discharges it by construction --- regardless of whether counterparty sides are recorded --- provided any own-side endpoint the source does record agrees with \texttt{collab:participant}, a premise that holds vacuously in the logs used in this work (Normalization paragraph above).

The criteria follow from the construction. P1.1 holds because $\mu_E$ is a bijection that preserves timestamps and event types (M5), and because every M8-preserved attribute value decodes to exactly the source value it is assigned from --- $\mathit{part}(e)$, $\mathit{elem}(e)$, $\mathit{from}(e)$/$\mathit{to}(e)$ when defined, and $D(e)(x)$ for $x\in\mathrm{dom}(D(e))$ --- since M8's defining equations assign each directly, under the Encoding convention above, with no lossy intermediate step. For P1.2, the \emph{in\_collaboration} relations of $\mathit{cc}_c$ are exactly $\{(\mu_E(e),\emph{in\_collaboration},\mathit{cc}_c):\mathit{case}(e)=c\}$, so its event image is the set of events of collaboration case $c$; if $\sigma_c=\langle e_1,\ldots,e_{n_c}\rangle$, these events taken in $\prec_{\mathit{ev}}$ order are $\langle\mu_E(e_1),\ldots,\mu_E(e_{n_c})\rangle$, because $\mu_E$ is order-preserving on $\prec_L$ (Identifier creation) and $\prec_L$ is total on the events of case $c$; hence $\sigma_c$ is reconstructed from the event-identifier order alone, without using timestamps to break ties, \emph{provided} the events of $\mathit{cc}_c$ are enumerated according to $\prec_{\mathit{ev}}$ (e.g., by an explicit sort or query ordering on the event identifier) --- Definition~\ref{def:app-ocel} gives $E$ no intrinsic enumeration order, so this is a requirement on how $\mu(L)$ is queried, not a free consequence of $\mu_E$ being order-preserving. For P1.3, each $m_e$ is related to exactly one communication event: to $\mu_E(e)$ by \emph{send} if $e\in S_L$, or by \emph{receive} if $e\in R_L$, and to no other event, since $m_e$ is created for the single event $e$. Conversely, each communication event $\mu_E(e)$, $e\in S_L\cup R_L$, is related by \emph{send}/\emph{receive} to exactly one \Msg{} object, namely $m_e$: the \emph{send} and \emph{receive} triples of M6 are indexed by $e$ itself, each $e$ contributes exactly one such triple, and $e\mapsto m_e$ and $\mu_E$ are injective. For each side of $m_e$, the O2O relation (M7), the object attribute (\texttt{sender}/\texttt{receiver}), and the preserved event attribute (\texttt{fromParticipant}/\texttt{toParticipant}, M8) are all conditioned on the same source value $\mathit{from}(e)$ (resp.\ $\mathit{to}(e)$) being defined, so the three carriers of one side are defined simultaneously --- all three or none --- and use the same source value whenever defined (Normalization paragraph above); hence no disagreement can arise from an endpoint the source never recorded. Source well-formedness gives $\mathit{part}(e)=\mathit{from}(e)$ when $e$ is a send and $\mathit{part}(e)=\mathit{to}(e)$ when $e$ is a receive, so the event participant is the sender of a send interaction and the receiver of a receive interaction. Finally, each $m_e$ carries exactly one \emph{exchanged\_in} triple, to $\mathit{cc}_{\mathit{case}(e)}$, since M7's \emph{exchanged\_in} triples are indexed uniquely by $e\in S_L\cup R_L$ and $e\mapsto m_e$ is injective; this is the same collaboration case as $\mu_E(e)$'s \emph{in\_collaboration} object (M6), so the two agree. For P1.4, the \emph{in\_orchestration} object of $\mu_E(e)$ is $\mathit{oc}_{\mathit{case}(e),\mathit{part}(e)}$, which M7 relates by \emph{part\_of} to $\mathit{cc}_{\mathit{case}(e)}$, its \emph{in\_collaboration} object, and by \emph{for\_participant} to $p_{\mathit{part}(e)}$ and to no other object, since M7 contributes one \emph{for\_participant} triple per \OC{} object; that object has type $\rho(\mathit{part}(e))$ and carries $\mathit{oaval}(p_{\mathit{part}(e)},\texttt{name}_{\mathit{part}(e)},0)=\mathit{part}(e)$, while the orchestration case carries $\mathit{oaval}(\mathit{oc}_{\mathit{case}(e),\mathit{part}(e)},\texttt{participant},0)=\mathit{part}(e)$. Its object type and its name attribute are therefore the images of one and the same source identifier under $\rho$ and under the value encoding, which is what the equality asserted by P1.4 means (Representation of source domains above). For P1.5, collaboration cases and orchestration cases are targeted by E2O relations, and every $m_e$ carries its \emph{send} or \emph{receive} relation. By definition, $\mathcal{P}_L=\mathit{part}(E_L)\cup\mathrm{ran}(\mathit{from})\cup\mathrm{ran}(\mathit{to})$ (Definition~\ref{def:app-r1}), so every $p\in\mathcal{P}_L$ either performs an event, in which case $p_p$ is targeted by an E2O \emph{in\_participant} relation, or is $\mathit{from}(e)$ or $\mathit{to}(e)$ for some $e\in S_L\cup R_L$, in which case $p_p$ is targeted by the corresponding O2O \emph{from} or \emph{to} relation of M7, which ranges over $S_L\cup R_L$, not only $S_L$. Hence, no object is orphaned, and this argument uses no hypothesis on $L$ beyond Definition~\ref{def:app-r1}. For P1.6, M6 relates $\mu_E(e)$ directly to $p_{\mathit{part}(e)}$ through \emph{in\_participant}, while M7 relates its \emph{in\_orchestration} object $\mathit{oc}_{\mathit{case}(e),\mathit{part}(e)}$ to the same participant through \emph{for\_participant}; hence both access paths coincide.

\paragraph{Type-independent access to participants.} Rule M2 makes $\TPa$ depend on the log, but the four qualifiers that reach participant objects do not. In $\mu(L)$, the relations qualified by \emph{in\_participant}, \emph{for\_participant}, \emph{from}, and \emph{to} target participant objects and nothing else (M6, M7), and, by the argument for P1.5, every participant object is the target of at least one of them. Hence
\begin{align*}
O_{\textsf{Participant}} = {} & \{\, o : (\varepsilon, \emph{in\_participant}, o) \in \mathit{E2O} \,\}\\
\cup {} & \{\, o : (o', q, o) \in \mathit{O2O},\ q \in \{\emph{for\_participant}, \emph{from}, \emph{to}\} \,\},
\end{align*}
so participant objects are addressable without naming any member of $\TPa$. Criteria P1.1--P1.3, P1.5, and P1.6, together with the accessors used by the prediction targets of Section~\ref{sec:tasks}, read objects and qualifiers, never object types, and are therefore insensitive to whether participants are typed one per identifier (M2) or gathered under a single type carrying the identifier as an attribute value. Criterion~P1.4 is the exception, and deliberately so: it asserts that an \OC{} object's \texttt{participant} attribute and the participant object it is \emph{for\_participant}-related to encode the same source identifier, and, under M2, one of the two carriers of that identifier \emph{is} the object type $\rho(\mathit{part}(e))$, which the argument above reads. Under a single \textsf{Participant}{} type P1.4 would not fail but would change form, reducing to a statement about the \texttt{name} attribute alone, since the type would no longer encode the identifier.

% ==================
\section{Formal definitions of the collaborative prediction tasks}\label{app:tasks}
% ==================
This appendix provides the formal counterparts to the object-centric prediction tasks of Sect.~\ref{sec:tasks}. All definitions are stated over $\mu(L)$ (Definition~\ref{def:app-mapping}), for a \emph{well-formed} source log $L$ (Definition~\ref{def:app-r1}; the normalization $\nu$ of \ref{app:formal} discharges well-formedness under the premise stated there), at the \CC{} viewpoint, using the accessor functions of Definition~\ref{def:accessors}. For $\sigma_c=\langle e_1,\ldots,e_{n_c}\rangle$, the collaboration-case event sequence is the transported sequence
\[
\bar{\sigma}_c=\langle\varepsilon_1,\ldots,\varepsilon_{n_c}\rangle=\mu_E(\sigma_c)=\langle\mu_E(e_1),\ldots,\mu_E(e_{n_c})\rangle,
\]
i.e., the events related to $\mathit{cc}_c$ by the \emph{in\_collaboration} qualifier taken in event-identifier order, which coincides with $\prec_L$ by the order preservation of $\mu_E$ (Identifier-creation paragraph of \ref{app:formal}); here $\mu_E$ is applied to a sequence componentwise. The distinction matters for the domain of the definitions below: $\sigma_c$ is a sequence of source events of $E_L$, whereas the accessors of Definition~\ref{def:accessors} are defined on events of $\mu(L)$, so every $\Theta_\tau$ below takes the object-centric prefix
\[
\mathit{hd}^k(\bar{\sigma}_c)=\langle\varepsilon_1,\ldots,\varepsilon_k\rangle=\mu_E(\mathit{hd}^k(\sigma_c)),
\]
for a prefix length $k \in [1,n_c-1]$, and never $\mathit{hd}^k(\sigma_c)$ itself. The two are interchangeable as \emph{observations}---each determines the other, since $\mu_E$ is injective and order-preserving---which is what criterion~P1.2 states, and which is why the source-level counterparts $\Theta_\tau^{\mathit{src}}$ of Proposition~\ref{prop:p2-equivalence} may be stated over $\mathit{hd}^k(\sigma_c)$ while these definitions are stated over $\mathit{hd}^k(\bar{\sigma}_c)$. No label refers to an encoding strategy, and $\bot$ denotes the designated outcome when no further target exists, as in Sect.~\ref{sec:tasks}, with two exceptions carried over from \cite{delgado2025predictcollab}: the Boolean tasks OB-P and OB-M return $\mathsf{true}$ or $\mathsf{false}$ and use no sentinel value at all, and NV-PaT returns $0$ rather than $\bot$ when no further target exists (Definition~\ref{def:pat}). Some definitions below are additionally parameterized by a participant $p \in \mathcal{P}_L$ or an activity $\hat a \in \mathcal{A}_L$; under the Encoding convention of \ref{app:formal}, such a parameter denotes the corresponding OCEL-native object or type ($p_p$, or $\hat a$'s image under the activity-label encoding) that the definition's body actually operates on, not the raw source identifier. Quantifying the parameter over $\mathcal{P}_L$/$\mathcal{A}_L$ is legitimate precisely because every element of these source-level sets has such a corresponding object or type by construction (M2, M5), so no parameter value can fail to transport.

\begin{definition}[Accessors over $\mu(L)$]\label{def:accessors}
For an event $\varepsilon \in E$ of $\mu(L)$, we define the following functions.
\begin{itemize}
  \item $\mathit{oc}(\varepsilon)$ and $\mathit{cc}(\varepsilon)$ are the unique \OC{} and \CC{} objects related to $\varepsilon$ by the \emph{in\_orchestration} and \emph{in\_collaboration} qualifiers, respectively (rule M6).
  \item $\mathit{pa}(\varepsilon)$ is the unique participant object directly related to $\varepsilon$ through the \emph{in\_participant} qualifier. By criterion~P1.6, it is also the unique object $o$ such that $(\mathit{oc}(\varepsilon), \emph{for\_participant}, o) \in \mathit{O2O}$.
  \item The direction predicates are defined by
  \[
  \begin{aligned}
  \mathit{snd}(\varepsilon) &\Longleftrightarrow \exists m.\ (\varepsilon, \emph{send}, m) \in \mathit{E2O},\\
  \mathit{rcv}(\varepsilon) &\Longleftrightarrow \exists m.\ (\varepsilon, \emph{receive}, m) \in \mathit{E2O}.
  \end{aligned}
  \]
  We write $\mathit{isMsg}(\varepsilon)=\mathit{snd}(\varepsilon)\vee\mathit{rcv}(\varepsilon)$. By M6, every send event is related to its own \Msg{} object by \emph{send} and every receive event to its own \Msg{} object by \emph{receive} (equivalently, $\mathit{snd}(\varepsilon)$ holds iff the preserved $\texttt{elemType}$ value of $\varepsilon$ is \textit{SendTask}, by rule~M8); in either case, $\mathit{msg}(\varepsilon)$ denotes that message object, which is defined for every message event.
  \item $\mathit{Msgs}(c) = \{\, m \in O : (m, \emph{exchanged\_in}, \mathit{cc}_c) \in \mathit{O2O} \,\}$ is the set of message objects of collaboration case $c$, and for $m \in \mathit{Msgs}(c)$, $\mathit{pos}(m)$ is the position in $\langle \varepsilon_1, \ldots, \varepsilon_{n_c} \rangle$ of the unique event related to $m$ by the \emph{send} or \emph{receive} qualifier.
\end{itemize}
\end{definition}

\paragraph{Orchestration case domain.} The naming function $(c,p) \mapsto \mathit{oc}_{c,p}$ of \ref{app:formal} (Identifier-creation paragraph) is total on $\mathcal{C} \times \mathcal{P}$, whereas rule M3 materializes $\mathit{oc}_{c,p}$ as an object of $\mu(L)$ only for the pairs $(c,p)$ such that $p$ performs at least one event in case $c$. The tasks below that take a participant parameter (NE-NMPa, NV-PaT, NV-NMPa, OB-P) quantify $p$ over the full participant set $\mathcal{P}_L$, which may include participants that never act in a given case $c$. We do not restrict the domain of $p$ to the case's own actors, and no term below is left without a referent when we do not: $\mathit{oc}_{c,p}$ names an element of $\mathbb{U}_{\mathit{obj}}$ for \emph{every} pair, so the equality $\mathit{oc}(\varepsilon_i) = \mathit{oc}_{c,p}$ is well formed whether or not the pair is materialized. What the non-materialized case removes is not the referent but its membership in $O_{\OC}$: the equality is then false for every $i$, since $\mathit{oc}(\varepsilon_i) \in O_{\OC}$ by rule M6 while $\mathit{oc}_{c,p} \notin O_{\OC}$ by M3. The orchestration case of such a pair is therefore read as empty rather than undefined. This is the convention already forced by the values fixed for such cases: $I_p=\emptyset$ in NV-PaT (Definition~\ref{def:pat}) yields $0$, not $\bot$; the message counts of NV-NMPa (Definition~\ref{def:nmpacount}) and the existential of OB-P (Definition~\ref{def:obp}) both reduce to their empty-set values, $0$ and $\mathsf{false}$ respectively, over vacuous filtering conditions; and NE-NMPa (Definition~\ref{def:nmpa}) falls to its $\bot$ branch because no $\varepsilon_i$ satisfies $\mathit{oc}(\varepsilon_i)=\mathit{oc}_{c,p}$. No task definition below needs a separate case split for this situation.

\begin{definition}[Next event in the process, NE-NEPr]\label{def:nepr}
The next-event prediction in the process is the definition of a function $\Theta_{\mathrm{NEPr}}$ that takes the prefix $\mathit{hd}^k(\bar{\sigma}_c)$ and predicts the event type of the next event, that is,
\[
\Theta_{\mathrm{NEPr}}(\mathit{hd}^k(\bar{\sigma}_c)) = \mathit{evtype}(\varepsilon_{k+1}).
\]
\end{definition}

\begin{definition}[Next participant to act, NE-NPaA]\label{def:npaa}
The next-participant prediction is the definition of a function $\Theta_{\mathrm{NPaA}}$ that takes the prefix $\mathit{hd}^k(\bar{\sigma}_c)$ and predicts the participant of the next event, that is,
\[
\Theta_{\mathrm{NPaA}}(\mathit{hd}^k(\bar{\sigma}_c)) = \mathit{pa}(\varepsilon_{k+1}).
\]
\end{definition}

\begin{definition}[Next event and participant, NE-NEPa]\label{def:nepa}
The prediction of the next event together with the participant that performs it is the definition of a function $\Theta_{\mathrm{NEPa}}$ that takes the prefix $\mathit{hd}^k(\bar{\sigma}_c)$ and predicts the pair
\[
\Theta_{\mathrm{NEPa}}(\mathit{hd}^k(\bar{\sigma}_c)) = \bigl(\mathit{evtype}(\varepsilon_{k+1}),\ \mathit{pa}(\varepsilon_{k+1})\bigr).
\]
\cite{delgado2025predictcollab}'s Def.~5 ($\Theta_{ap}$) packages this same pair as a single string, the concatenation $\pi_{A\_P}(e_{k+1}) = a\_p$ (its Def.~2) -- a presentation choice suited to their tool's single-column model input, not part of the prediction's semantic content. \ref{app:tasks} states the target as the underlying pair; the equivalence of Proposition~\ref{prop:p2-equivalence} is with that pair, not with its concatenated string encoding (Table~\ref{tab:p2-bytask}).
\end{definition}

\begin{definition}[Next participant to exchange a message, NE-NPaM]\label{def:npam}
Following the send/receive taxonomy of \cite{delgado2025predictcollab} (type NE-NPaM predicts the next participant to \emph{send or receive} a message), this task is parameterized by a message direction $d \in \{\mathit{snd}, \mathit{rcv}\}$ and predicts the participant at the corresponding endpoint of the next message event in that direction. Let $q_{\mathit{snd}} = \emph{from}$ and $q_{\mathit{rcv}} = \emph{to}$ be the endpoint relation associated with each direction, and let $j = \min\{\, k < i \le n_c : d(\varepsilon_i) \,\}$. Then
\[
\Theta_{\mathrm{NPaM}}^{d}(\mathit{hd}^k(\bar{\sigma}_c)) =
\begin{cases}
o & \text{with } (\mathit{msg}(\varepsilon_j), q_d, o) \in \mathit{O2O}, \text{ if } j \text{ exists,}\\
\bot & \text{otherwise.}
\end{cases}
\]
The endpoint participant is read via the message object's \emph{from} relation (for a send) or \emph{to} relation (for a receive), rather than via the participant of the communication event. The direction-side endpoint is the message's \emph{own} side, which well-formedness makes total --- $\mathit{from}$ is defined on every send and $\mathit{to}$ on every receive, conditions (i)--(ii) of Definition~\ref{def:app-r1} --- and each endpoint relation of $\mathit{msg}(\varepsilon_j)$ has at most one target, so $o$ exists and is unique whenever $j$ exists. The two coincide by criterion P1.3, but reading the endpoint relation expresses the prediction at the message-object level, consistent with the message-centered viewpoint that the mapping makes explicit. The evaluation instantiates the send direction $d = \mathit{snd}$ (reading \emph{from}), which is implemented by Predict-Collab and keeps the prefixes identical to it; the receive direction $d = \mathit{rcv}$ is symmetric.

\cite{delgado2025predictcollab}'s Def.~6 ($\Theta_{pm}$) is formally send-only -- it locates the next event of \emph{sending} a message, not send-or-receive, despite Table 2's "send/receive" wording for this type -- and returns the concatenation $\pi_{ET\_P}(e_{k+j}) = et\_p$ (its Def.~2, e.g., "Sendtask\_gynecologist"), falling back to the literal string \textit{dummy} rather than a designated sentinel. $\Theta_{\mathrm{NPaM}}^{\mathit{snd}}$ coincides with $\Theta_{pm}$ up to two presentation choices: it returns the bare participant $o$ rather than the $et\_p$ concatenation, since $et$ is already fixed to \textit{SendTask} once the direction is fixed to $\mathit{snd}$ and so carries no additional information; and it uses $\bot$ uniformly in place of \textit{dummy}, consistent with every other task. The receive instantiation $d = \mathit{rcv}$ is an explicit generalization beyond $\Theta_{pm}$'s stated (send-only) scope, licensed by Table 2's own "send/receive" description of the type, not by Def.~6 itself.
\end{definition}

\begin{definition}[Next message in the process, NE-NMPr]\label{def:nmpr}
The prediction of the next message in the process is defined by a function $\Theta_{\mathrm{NMPr}}$ that takes the prefix $\mathit{hd}^k(\bar{\sigma}_c)$ and predicts the event type of the next message. Let $j = \min\{\, k < i \le n_c : \mathit{isMsg}(\varepsilon_i) \,\}$. Then
\[
\Theta_{\mathrm{NMPr}}(\mathit{hd}^k(\bar{\sigma}_c)) =
\begin{cases}
\mathit{evtype}(\varepsilon_j) & \text{if } j \text{ exists,}\\
\bot & \text{otherwise.}
\end{cases}
\]
Because the source logs carry no message-type attribute, the predicted message is identified by the event type of its message event, that is, by the send or receive activity label, in accordance with rule M4.
\end{definition}

\begin{definition}[Next message of a participant, NE-NMPa]\label{def:nmpa}
The prediction of the next message of a participant is the definition of a function $\Theta_{\mathrm{NMPa}}$ that takes the prefix $\mathit{hd}^k(\bar{\sigma}_c)$ and a participant $p \in \mathcal{P}_L$, and predicts the event type of the next message event in the orchestration case of $p$. Let $j = \min\{\, k < i \le n_c : \mathit{isMsg}(\varepsilon_i) \wedge \mathit{oc}(\varepsilon_i) = \mathit{oc}_{c,p} \,\}$. Then
\[
\Theta_{\mathrm{NMPa}}(\mathit{hd}^k(\bar{\sigma}_c), p) =
\begin{cases}
\mathit{evtype}(\varepsilon_j) & \text{if } j \text{ exists,}\\
\bot & \text{otherwise.}
\end{cases}
\]
\end{definition}

\begin{definition}[Process remaining time, NV-PrT]\label{def:prt}
The remaining-time prediction of the process is the definition of a function $\Theta_{\mathrm{PrT}}$ that takes the prefix $\mathit{hd}^k(\bar{\sigma}_c)$ and predicts the time from the current event to the end of the execution, that is,
\[
\Theta_{\mathrm{PrT}}(\mathit{hd}^k(\bar{\sigma}_c)) = \mathit{time}(\varepsilon_{n_c}) \ominus \mathit{time}(\varepsilon_k).
\]
\end{definition}

\begin{definition}[Participant remaining time, NV-PaT]\label{def:pat}
The remaining-time prediction of a participant is the definition of a function $\Theta_{\mathrm{PaT}}$ that takes the prefix $\mathit{hd}^k(\bar{\sigma}_c)$ and a participant $p \in \mathcal{P}_L$, and predicts the time until the last event in the orchestration case of $p$. Let $I_p = \{\, i \in [1, n_c] : \mathit{oc}(\varepsilon_i) = \mathit{oc}_{c,p} \,\}$ and, when $I_p \neq \emptyset$, let $z = \max I_p$. Then
\[
\Theta_{\mathrm{PaT}}(\mathit{hd}^k(\bar{\sigma}_c), p) =
\begin{cases}
\mathit{time}(\varepsilon_z) \ominus \mathit{time}(\varepsilon_k) & \text{if } I_p \neq \emptyset \text{ and } z > k,\\
\iota_\Delta(0) & \text{otherwise.}
\end{cases}
\]
Following \cite{delgado2025predictcollab} (Def.\ 9, $\Theta_{\mathrm{rtp}}$), the fallback value is the zero duration, not $\bot$: the source definition and its reference implementation both return $0$ whenever $p$ has no later event in $c$, without distinguishing $p$ acting earlier in $c$ from $p$ not acting in $c$ at all. On the object-centric side, this fallback is $\iota_\Delta(0)$, the image under the duration encoding of \ref{app:formal} of the source zero $0=t-t\in\Delta\mathbb{T}$; $\iota_\Delta(0)$ is well defined independently of which $t\in\mathbb{T}$ witnesses it, since subtraction-compatibility (\ref{app:formal}, "Representation of source domains") gives $\iota(t_1)\ominus\iota(t_1)=\iota(t_2)\ominus\iota(t_2)$ whenever $t_1-t_1=t_2-t_2\,(=0)$, and it lies in $\iota_\Delta(\Delta\mathbb{T})\subseteq\mathbb{U}_{\mathit{val}}$, the same codomain as the other branch, so $\mathrm{dec}_{\Delta\mathbb{T}}$ applies to it uniformly and $\mathrm{dec}_{\Delta\mathbb{T}}(\iota_\Delta(0))=0$. Def.~9 itself presupposes a non-empty subsequence $\hat\sigma$ of $p$'s events (it is stated as $\langle e_a,\ldots,e_y,e_z\rangle$, implicitly requiring $e_a$ to exist), so it does not, strictly speaking, cover a participant $p$ that never acts in $c$ at all; $I_p=\emptyset$ above is accordingly an explicit extension of Def.~9's domain, not a literal instance of it, resolved to $0$ for consistency with Def.~9's own fallback value and with Predict-Collab's reference implementation, which returns $0$ uniformly regardless of whether $p$ acted earlier in $c$ or never acted in $c$ at all.
\end{definition}

\begin{definition}[Time until next event, NV-TNE]\label{def:tne}
The prediction of the time until the next event is the definition of a function $\Theta_{\mathrm{TNE}}$ that takes the prefix $\mathit{hd}^k(\bar{\sigma}_c)$ and predicts the time from the current event to the next one, that is,
\[
\Theta_{\mathrm{TNE}}(\mathit{hd}^k(\bar{\sigma}_c)) = \mathit{time}(\varepsilon_{k+1}) \ominus \mathit{time}(\varepsilon_k).
\]
\cite{delgado2025predictcollab}'s Def.~10 ($\Theta_t$) states this target as the absolute timestamp $\pi_T(e_{k+1})$, with no subtraction. Read literally, that formula returns an absolute timestamp, whereas the type is \emph{named} as a duration ("time until next event") and its own worked example reports durations (its Table~11: a few seconds within a 31-second case, not epoch timestamps). The published type is therefore internally inconsistent between its name, its worked example and its formula, and we resolve it toward the duration $\pi_T(e_{k+1}) - \pi_T(e_k)$, matching $\Theta_{\mathrm{TNE}}$ above, rather than adjudicate which of the three the authors intended. Proposition~\ref{prop:p2-equivalence} is accordingly established against this resolved reading, which is why NV-TNE is one of the five documented adaptations rather than a literal restatement.
\end{definition}

\begin{definition}[Time until next message, NV-TNM]\label{def:tnm}
Like NE-NPaM and following the send/receive taxonomy of \cite{delgado2025predictcollab} (type NV-TNM is the time until the next message to \emph{send or receive}), this task is parameterized by a message direction $d \in \{\mathit{snd}, \mathit{rcv}\}$ and predicts the time until the next message event in that direction. Let $j = \min\{\, k < i \le n_c : d(\varepsilon_i) \,\}$. Then
\[
\Theta_{\mathrm{TNM}}^{d}(\mathit{hd}^k(\bar{\sigma}_c)) =
\begin{cases}
\mathit{time}(\varepsilon_j) \ominus \mathit{time}(\varepsilon_k) & \text{if } j \text{ exists,}\\
\bot & \text{otherwise.}
\end{cases}
\]
As for NE-NPaM, the evaluation instantiates the send direction $d = \mathit{snd}$, matching the direction implemented by Predict-Collab; the receive instantiation is symmetric.

\cite{delgado2025predictcollab}'s Def.~11 ($\Theta_{tm}$) is, like Def.~6, formally send-only despite Table 2's "send/receive" wording, and states the fallback as the literal value $1$ rather than a designated sentinel, with the existence condition written as $k+s<n$ for the position $s$ of the next sending event. Read literally, this excludes a next send located at the very last position of the trace (index $n$): if the only remaining send is $e_n$ itself, Def.~11 falls to the sentinel branch, whereas $\Theta_{\mathrm{TNM}}^{d}$ above returns a value whenever $j$ exists, including $j=n_c$. We do not carry over this restriction, since nothing in Def.~11's prose or Table 2's "time until next message" description motivates treating a send at the last position differently from one earlier in the trace, and we read it as an artifact of the reference implementation's loop bounds rather than a deliberate exclusion. $\Theta_{\mathrm{TNM}}^{\mathit{snd}}$ coincides with $\Theta_{tm}$ up to that boundary case and its sentinel: it uses $\bot$ uniformly, in place of the value $1$, which is not a time unit consistent with the rest of the type (a duration of exactly one second, minute, hour, or day, depending on the selected \texttt{TimeF}, would be indistinguishable from this "no next message" case under Def.~11's own convention). The receive instantiation $d = \mathit{rcv}$ is, as for NE-NPaM, an explicit generalization beyond $\Theta_{tm}$'s stated scope.
\end{definition}

The two count-based types are defined over $\mathit{Msgs}(c)$. A message object is counted as remaining at the cut-off $k$ when its communication event has not yet occurred, that is, when $\mathit{pos}(m) > k$.

\begin{definition}[Remaining messages of the process, NV-NMPr]\label{def:nmprcount}
The prediction of the number of remaining messages of the process is the definition of a function $\Theta_{\mathrm{NMPr}\#}$ that takes the prefix $\mathit{hd}^k(\bar{\sigma}_c)$ and predicts the number of message objects of the collaboration case whose communication event lies beyond the cut-off, that is,
\[
\Theta_{\mathrm{NMPr}\#}(\mathit{hd}^k(\bar{\sigma}_c)) = \bigl|\{\, m \in \mathit{Msgs}(c) : \mathit{pos}(m) > k \,\}\bigr|.
\]
\end{definition}

\begin{definition}[Remaining messages of a participant, NV-NMPa]\label{def:nmpacount}
The prediction of the number of remaining messages of a participant is the definition of a function $\Theta_{\mathrm{NMPa}\#}$ that takes the prefix $\mathit{hd}^k(\bar{\sigma}_c)$ and a participant $p \in \mathcal{P}_L$, and predicts the number of message objects of $p$'s orchestration case whose communication event lies beyond the cut-off, that is,
\[
\Theta_{\mathrm{NMPa}\#}(\mathit{hd}^k(\bar{\sigma}_c), p) = \bigl|\{\, m \in \mathit{Msgs}(c) : \mathit{oc}(\varepsilon_{\mathit{pos}(m)}) = \mathit{oc}_{c,p} \wedge \mathit{pos}(m) > k \,\}\bigr|.
\]
\end{definition}

These two types follow the remaining-message reading of \cite{delgado2025predictcollab}; the total-message variant is obtained by removing the $\mathit{pos}(m) > k$ condition, and send-only or receive-only variants by additionally requiring $\mathit{snd}(\varepsilon_{\mathit{pos}(m)})$ or $\mathit{rcv}(\varepsilon_{\mathit{pos}(m)})$. Quantifying over $\mathit{Msgs}(c)$ makes these predictions object-centric as a matter of formulation rather than of computation: rule~M4 puts the message objects of a collaboration case in bijection with its communication events, so counting either yields the same value, and the complementary set of message events already observed at the cut-off is available as a feature for any task.

\begin{definition}[Participation of a participant, OB-P]\label{def:obp}
The prediction of whether a participant will participate is the definition of a function $\Theta_{\mathrm{OBP}}$ that takes the prefix $\mathit{hd}^k(\bar{\sigma}_c)$ and a participant $p \in \mathcal{P}_L$, and predicts whether $p$ acts after the cut-off, that is,
\[
\Theta_{\mathrm{OBP}}(\mathit{hd}^k(\bar{\sigma}_c), p) =
\begin{cases}
\mathsf{true} & \text{if } \exists\, i : k < i \le n_c \text{ and } \mathit{pa}(\varepsilon_i) = p_p,\\
\mathsf{false} & \text{otherwise.}
\end{cases}
\]
Equivalently, $\Theta_{\mathrm{OBP}}$ predicts whether at least one event beyond the cut-off is related to $\mathit{oc}_{c,p}$ by \emph{in\_orchestration}, which expresses the same outcome at the orchestration-case level. The two readings agree on a $p$ that never acts in $c$ as well: by criterion P1.6 the participant reached from an event coincides with the one its orchestration case is \emph{for\_participant}-related to, and the non-materialized $\mathit{oc}_{c,p}$ of the Orchestration-case-domain paragraph above is related to no event, so both branches yield $\mathsf{false}$.
\end{definition}

\begin{definition}[Occurrence of a particular message, OB-M]\label{def:obm}
The prediction of whether a particular message will occur is the definition of a function $\Theta_{\mathrm{OBM}}$ that takes the prefix $\mathit{hd}^k(\bar{\sigma}_c)$ and a message-activity label $\hat{a} \in \mathcal{A}_L$, and predicts whether a message of that kind, sent or received, occurs after the cut-off, that is,
\[
\Theta_{\mathrm{OBM}}(\mathit{hd}^k(\bar{\sigma}_c), \hat{a}) =
\begin{cases}
\mathsf{true} & \text{if } \exists\, m \in \mathit{Msgs}(c) \text{ such that}\\
& \mathit{evtype}(\varepsilon_{\mathit{pos}(m)}) = \hat{a} \wedge \mathit{pos}(m) > k,\\
\mathsf{false} & \text{otherwise,}
\end{cases}
\]
where $\varepsilon_{\mathit{pos}(m)}$ is the communication event of $m$, and $\hat{a}$ on the right of $=$ denotes its image under the activity-label encoding of \ref{app:formal} ("Representation of source domains"), so that the comparison is between two elements of $\mathbb{U}_{\mathit{etype}}$, not across the disjoint universes $\mathbb{U}_{\mathit{etype}}$ and $\mathcal{A}_L$. As for NE-NMPr, the message type is determined by that event's event type, since rule~M4 stores no message-type attribute. The parameter ranges over all of $\mathcal{A}_L$ rather than over the message-activity labels $\mathcal{A}_L^{\mathit{msg}} = \mathit{act}(S_L \cup R_L)$ alone. The wider domain is harmless but not vacuous: on an $\hat{a} \in \mathcal{A}_L \setminus \mathcal{A}_L^{\mathit{msg}}$---an ordinary task label, which no message object's communication event carries---the existential is never satisfied and $\Theta_{\mathrm{OBM}}$ returns $\mathsf{false}$ at every cut-off. Restricting the parameter to $\mathcal{A}_L^{\mathit{msg}}$ would remove exactly these constantly-$\mathsf{false}$ instances and change nothing else; we keep $\mathcal{A}_L$ so that the parameter domain is the one the source log already provides, without a derived subset the extended collaborative log does not name.
\end{definition}

We now state that the accessors of Definition~\ref{def:accessors} agree on every event with the corresponding source-level reading on $L$, and use this to establish label equivalence against separately stated source-level task definitions, not against a restatement of \ref{app:tasks} on $L$. Those source-level definitions are themselves interpretations fixed here---literal for three types, adapted for five, and reconstructed from a one-line description for six---and the equivalence is stated relative to them, as the proposition's name records.

\begin{lemma}[Accessor invariance]\label{lem:accessor-invariance}
For every $e \in E_L$, writing $\varepsilon = \mu_E(e)$:
\begin{enumerate}[label=(\roman*)]
  \item $\mathit{evtype}(\varepsilon) = \mathit{act}(e)$ and $\mathit{time}(\varepsilon) = \mathit{time}_L(e)$;
  \item $\mathit{cc}(\varepsilon) = \mathit{cc}_{\mathit{case}(e)}$, $\mathit{oc}(\varepsilon) = \mathit{oc}_{\mathit{case}(e),\mathit{part}(e)}$, and $\mathit{pa}(\varepsilon) = p_{\mathit{part}(e)}$;
  \item $\mathit{snd}(\varepsilon) \Leftrightarrow e \in S_L$, and $\mathit{rcv}(\varepsilon) \Leftrightarrow e \in R_L$; when either holds, $\mathit{msg}(\varepsilon) = m_{e}$, with $\bigl(m_e,\emph{from},p_{\mathit{from}(e)}\bigr) \in \mathit{O2O}$ whenever $\mathit{from}(e)$ is defined and $\bigl(m_e,\emph{to},p_{\mathit{to}(e)}\bigr) \in \mathit{O2O}$ whenever $\mathit{to}(e)$ is defined (\ref{app:formal}, Normalization);
  \item for every collaboration case $c$, $\mathit{Msgs}(c) = \{\, m_{e'} : e' \in S_L \cup R_L,\ \mathit{case}(e') = c \,\}$, and for $m_{e'} \in \mathit{Msgs}(c)$, $\varepsilon_{\mathit{pos}(m_{e'})} = \mu_E(e')$.
\end{enumerate}
\end{lemma}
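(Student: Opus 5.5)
The proof unfolds Definition~\ref{def:app-mapping} and reads each accessor of Definition~\ref{def:accessors} off the defining equations of M5--M7. Throughout, we rely on two facts from the Identifier-creation paragraph of \ref{app:formal}: $\mu_E$ is injective, and the naming functions $c\mapsto\mathit{cc}_c$, $p\mapsto p_p$, $(c,p)\mapsto\mathit{oc}_{c,p}$, $e\mapsto m_e$ are injective with pairwise disjoint ranges. We also use that the qualifiers in $Q_{\mathit{E2O}}$ are pairwise distinct, so every triple of $\mathit{E2O}$ carrying a given qualifier comes from exactly one of the five set-builder clauses of M6.

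Item (i) follows directly from the M5 equations for $\mathit{evtype}$ and $\mathit{time}$, read under the Encoding convention.

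For item (ii), fix $\varepsilon=\mu_E(e)$. The triples with first component $\varepsilon$ and qualifier \emph{in\_collaboration} all come from the first clause of M6, hence have the form $(\mu_E(e'),\emph{in\_collaboration},\mathit{cc}_{\mathit{case}(e')})$. Injectivity of $\mu_E$ forces $e'=e$, so exactly one such triple exists, namely the one with target $\mathit{cc}_{\mathit{case}(e)}$. This shows both that the uniqueness presupposed by the accessor $\mathit{cc}$ holds and that $\mathit{cc}(\varepsilon)=\mathit{cc}_{\mathit{case}(e)}$. The same argument, applied to the second and third clauses, gives $\mathit{oc}(\varepsilon)=\mathit{oc}_{\mathit{case}(e),\mathit{part}(e)}$ and $\mathit{pa}(\varepsilon)=p_{\mathit{part}(e)}$. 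As a cross-check on $\mathit{pa}$, P1.6 (Proposition~\ref{prop:app-p1}) shows that the two-step \emph{in\_orchestration}--\emph{for\_participant} path agrees with it.

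For item (iii), $\mathit{snd}(\varepsilon)$ holds iff some triple $(\varepsilon,\emph{send},m)$ lies in $\mathit{E2O}$. Such triples arise only from the clause indexed by $e'\in S_L$, and injectivity of $\mu_E$ gives $e'=e$. Hence $\mathit{snd}(\varepsilon)$ holds iff $e\in S_L$, and in that case $m=m_e$. The receive case is symmetric. The membership of the \emph{from}/\emph{to} triples in $\mathit{O2O}$ is then read directly off the corresponding clauses of M7, under the stated definedness conditions.

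For item (iv), the \emph{exchanged\_in} triples are exactly $(m_{e'},\emph{exchanged\_in},\mathit{cc}_{\mathit{case}(e')})$ for $e'\in S_L\cup R_L$. Injectivity of $c\mapsto\mathit{cc}_c$ then identifies the set $\mathit{Msgs}(c)$ as stated. For the position claim, by (iii) the unique event related to $m_{e'}$ by \emph{send} or \emph{receive} is $\mu_E(e')$; uniqueness again uses injectivity of $e\mapsto m_e$, exactly as in the P1.3 argument. Since $\mathit{case}(e')=c$, P1.2 places $\mu_E(e')$ in $\bar\sigma_c$ at the index where $e'$ sits in $\sigma_c$. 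This is because $\bar\sigma_c=\mu_E(\sigma_c)$ componentwise, by order preservation of $\mu_E$ on the total per-case order $\prec_L$. Hence $\varepsilon_{\mathit{pos}(m_{e'})}=\mu_E(e')$.

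The only step needing real care is the uniqueness claims hidden in the accessors' definite descriptions (``the unique object''). These depend on distinct clauses never colliding, and that is exactly where the disjointness of qualifiers and of naming-function ranges is used.
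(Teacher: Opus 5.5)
Your proof is correct and follows essentially the same route as the paper. Item (i) is M5 read under the Encoding convention; items (ii)--(iii) are read off the clauses of M6--M7, with uniqueness coming from the injectivity of $\mu_E$ and of the naming functions; and the position claim in (iv) comes from P1.2 together with order preservation of $\mu_E$ on $\prec_L$. The only difference is that you spell out why the definite descriptions in the accessors are well defined, namely distinct qualifiers plus injectivity (in (iv) you also implicitly need the distinctness of $Q_{\mathit{O2O}}$), whereas the paper simply says the relations are ``unique by construction''.
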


\begin{proof}
(i) is M5, restated. For (ii), M6 relates $\varepsilon$ to $\mathit{cc}_{\mathit{case}(e)}$ through \emph{in\_collaboration}, to $\mathit{oc}_{\mathit{case}(e),\mathit{part}(e)}$ through \emph{in\_orchestration}, and directly to $p_{\mathit{part}(e)}$ through \emph{in\_participant}. These relations are unique by construction; moreover, M7 maps the same orchestration case to $p_{\mathit{part}(e)}$ via \emph{for\_participant}, consistent with P1.6. For (iii), $O_{\Msg}$ is indexed by $S_L \cup R_L$ (M4), and M6 relates $\mu_E(e)$ to $m_e$ by \emph{send} exactly when $e \in S_L$ and by \emph{receive} exactly when $e \in R_L$; the \emph{from}/\emph{to} relations are given directly by M7, conditioned there on $\mathit{from}(e)$/$\mathit{to}(e)$ being defined (\ref{app:formal}, Normalization). For (iv), $\mathit{Msgs}(c)$ is the \emph{exchanged\_in}-image of $\mathit{cc}_c$, which M7 sets to $\{m_{e'} : e' \in S_L \cup R_L,\ \mathit{case}(e') = c\}$; $\mathit{pos}(m_{e'})$ is, by definition, the position of the unique event related to $m_{e'}$ by \emph{send} or \emph{receive}, which is $\mu_E(e')$ by (iii), with uniqueness following from the injectivity of $e'\mapsto m_{e'}$ and $\mu_E$ (Lemma~\ref{lem:app-welldef}). Finally, criterion~P1.2 and the order-preservation requirement on $\mu_E$ give $\bar{\sigma}_c=\mu_E(\sigma_c)=\langle\mu_E(e_1),\ldots,\mu_E(e_{n_c})\rangle$ when the events of $\mathit{cc}_c$ are enumerated in $\prec_{\mathit{ev}}$ order (\ref{app:tasks}); hence $\mu_E(e')$ occupies in $\bar{\sigma}_c$ exactly the position that $e'$ occupies in $\sigma_c$, proving $\varepsilon_{\mathit{pos}(m_{e'})}=\mu_E(e')$.
\end{proof}

\paragraph{Decoding.} Formally, each $\Theta_\tau$ takes values in the disjoint sum $\{p_p : p\in\mathcal{P}_L\} \uplus \mathit{evtype}(E) \uplus \iota_\Delta(\Delta\mathbb{T}) \uplus \mathbb{N} \uplus \mathbb{B} \uplus \{\bot\}$, or, for NE-NEPa alone, a pair drawn from two of these summands; $\bot$ is a fixed symbol distinct from every element of the other summands, so no genuine encoded value --- in particular, no duration in $\iota_\Delta(\Delta\mathbb{T}) \subseteq \mathbb{U}_{\mathit{val}}$ --- is ever mistaken for the no-target sentinel. $\mathrm{dec}$ below selects the summand-specific inverse by the task's declared return type (Table~\ref{tab:p2-bytask}), never by inspecting the value itself; this is well-defined precisely because the summands are disjoint, so the same value cannot admit two different decodings.

The naming functions $p \mapsto p_p$ of Definition~\ref{def:app-mapping} and the source-domain encodings of \ref{app:formal} (activity labels into $\mathbb{U}_{\mathit{etype}}$, timestamps into $\mathbb{U}_{\mathit{time}}$) are injective, hence invertible on their image. We write $\mathrm{dec}_{\mathsf{Pa}}\colon \{p_p : p \in \mathcal{P}_L\} \to \mathcal{P}_L$ for the inverse of $p \mapsto p_p$, $\mathrm{dec}_{\mathcal{A}}\colon \mathit{evtype}(E) \to \mathcal{A}_L$ for the inverse of the activity-label encoding, and $\mathrm{dec}_{\Delta\mathbb{T}}\colon \iota_\Delta(\Delta\mathbb{T}) \to \Delta\mathbb{T}$ for the inverse of the induced duration encoding $\iota_\Delta$ of \ref{app:formal} ("Representation of source domains"), where $\iota_\Delta(\Delta\mathbb{T}) \subseteq \mathbb{U}_{\mathit{val}}$; every time-based task returns a difference $\mathit{time}(\varepsilon_a) \ominus \mathit{time}(\varepsilon_b) \in \iota_\Delta(\Delta\mathbb{T})$, never an absolute timestamp, so $\mathrm{dec}_{\Delta\mathbb{T}}$ -- typed over durations, valued in $\mathbb{U}_{\mathit{val}}$, not over instants in $\mathbb{U}_{\mathit{time}}$ -- is the only timestamp-related decoding function applied below; no pointwise inverse of $\iota$ on $\mathbb{U}_{\mathit{time}}$ itself is needed or used. Booleans, natural numbers, and the sentinel $\bot$ are values of $\Theta_\tau$ itself, not attributes stored by $\mu$, so they are not re-encoded and need no decoding; we set $\mathrm{dec}(\bot) = \bot$ and extend $\mathrm{dec}$ componentwise to the pair returned by NE-NEPa. Applied to a task output, $\mathrm{dec}$ denotes whichever of $\mathrm{dec}_{\mathsf{Pa}}$, $\mathrm{dec}_{\mathcal{A}}$, or $\mathrm{dec}_{\Delta\mathbb{T}}$ matches that task's return type (Table~\ref{tab:p2-bytask}); it is the identity on Boolean and integer outputs.

\begin{proposition}[Label equivalence under the stated source-level interpretations, P2]\label{prop:p2-equivalence}
Let $L$ be a well-formed extended collaborative log (Definition~\ref{def:app-r1}). For each of the fourteen task types $\tau$ of \cite{delgado2025predictcollab} (Table~\ref{tab:p2-bytask}), let $\Theta_\tau$ be its object-centric definition (\ref{app:tasks}) and let $\Theta_\tau^{\mathit{src}}$ be the source-level definition fixed for it here---displayed below for the eleven types that need one, and, for the three restated unchanged from \cite{delgado2025predictcollab} (NE-NEPr, NE-NPaA, NV-PrT), identified by the citation in the last column of Table~\ref{tab:p2-bytask}---over the same applicable parameters: $p \in \mathcal{P}_L$ for NE-NMPa, NV-PaT, NV-NMPa, and OB-P; $\hat{a} \in \mathcal{A}_L$ for OB-M; $d \in \{\mathit{snd}, \mathit{rcv}\}$ for NE-NPaM and NV-TNM. Then, for every collaboration case $c$, every prefix $k \in [1, n_c{-}1]$, and every applicable parameter value,
\[
\mathrm{dec}\bigl(\Theta_\tau(\mathit{hd}^k(\bar{\sigma}_c), \ldots)\bigr) = \Theta_\tau^{\mathit{src}}(\mathit{hd}^k(\sigma_c), \ldots),
\]
where $\bar{\sigma}_c = \mu_E(\sigma_c)$ is the transported event sequence of \ref{app:tasks}, so that the two sides are applied to the two representations of the same prefix: the object-centric one on the left and the source one on the right. For the tasks with a no-target or degenerate branch --- NE-NPaM, NE-NMPr, NE-NMPa, and NV-TNM, whose fallback is the sentinel $\bot$, and NV-PaT, whose degenerate branch falls back to $0$ --- both sides take the same fallback value under exactly the same failing condition, and both sides take the same value in every other case, including the Boolean outcomes of OB-P and OB-M.
\end{proposition}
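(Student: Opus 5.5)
The proof is a task-by-task transport argument built on Lemma~\ref{lem:accessor-invariance}. Fix $c$, $k\in[1,n_c-1]$, and a parameter value. The first step establishes a \emph{positional} correspondence. By criterion~P1.2 and order preservation of $\mu_E$ we have $\bar{\sigma}_c=\langle\mu_E(e_1),\ldots,\mu_E(e_{n_c})\rangle$. Hence $\varepsilon_i=\mu_E(e_i)$ for every $i\in[1,n_c]$, and $\varepsilon_{k+1},\varepsilon_{n_c}$ are the images of $e_{k+1},e_{n_c}$. Every object-centric definition in \ref{app:tasks} refers to events only through indices $i$ (or $j=\min\{\ldots\}$, $z=\max I_p$, $\mathit{pos}(m)$). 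It therefore suffices to show, index by index, that each predicate and term evaluated on $\varepsilon_i$ decodes to its source counterpart on $e_i$. Lemma~\ref{lem:accessor-invariance}(i)--(iii) supplies exactly this: $\mathit{evtype}(\varepsilon_i)$ decodes under $\mathrm{dec}_{\mathcal{A}}$ to $\mathit{act}(e_i)$, and $\mathit{pa}(\varepsilon_i)$ under $\mathrm{dec}_{\mathsf{Pa}}$ to $\mathit{part}(e_i)$. We also get $\mathit{snd}(\varepsilon_i)\Leftrightarrow e_i\in S_L$ and $\mathit{rcv}(\varepsilon_i)\Leftrightarrow e_i\in R_L$. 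Finally, $\mathit{oc}(\varepsilon_i)=\mathit{oc}_{c,p}$ iff $\mathit{part}(e_i)=p$, by injectivity of $(c,p)\mapsto\mathit{oc}_{c,p}$ and $\mathit{case}(e_i)=c$.

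The second step turns these pointwise equivalences into equalities of the index sets defining $j$, $I_p$ and the counts. Because the filtering conditions coincide for each $i$, the sets $\{i: k<i\le n_c,\ \ldots\}$ are \emph{literally equal} on both sides. Their minima or maxima therefore exist under exactly the same condition and are the same index. This single observation discharges the clause about fallbacks: the $\bot$ branches of NE-NPaM, NE-NMPr, NE-NMPa and NV-TNM, and the $0$ branch of NV-PaT, are taken under the same failing condition. For NV-PaT, the fallback $\mathrm{dec}_{\Delta\mathbb{T}}(\iota_\Delta(0))=0$ holds as noted in Definition~\ref{def:pat}. The time-valued branches follow from subtraction-compatibility: $\mathit{time}(\varepsilon_a)\ominus\mathit{time}(\varepsilon_b)=\iota_\Delta(\mathit{time}_L(e_a)-\mathit{time}_L(e_b))$, which $\mathrm{dec}_{\Delta\mathbb{T}}$ inverts. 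This settles NV-PrT, NV-PaT, NV-TNE (against the resolved duration reading) and NV-TNM. For NE-NPaM, the returned $o$ is the $q_d$-target of $\mathit{msg}(\varepsilon_j)=m_{e_j}$. This is $p_{\mathit{from}(e_j)}$ or $p_{\mathit{to}(e_j)}$ by Lemma~\ref{lem:accessor-invariance}(iii). Well-formedness is the one place the hypothesis is used: it guarantees this own-side endpoint is defined, hence $o$ exists and is unique.

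The third step handles the message-object quantifications of NV-NMPr, NV-NMPa and OB-M. Lemma~\ref{lem:accessor-invariance}(iv) gives a bijection $e'\mapsto m_{e'}$ from the communication events of case $c$ onto $\mathit{Msgs}(c)$, with $\varepsilon_{\mathit{pos}(m_{e'})}=\mu_E(e')$. Consequently $\mathit{pos}(m_{e'})>k$ iff $e'$ lies after position $k$ in $\sigma_c$. The cardinalities and the existential therefore transport to counts and existentials over $\{e_i\in S_L\cup R_L: i>k,\ldots\}$, with the participant and activity filters transported as in the first step. OB-P is immediate from the $\mathit{pa}$ clause, and NE-NEPa from componentwise decoding of the pair.

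The main obstacle is not any single calculation but the bookkeeping at the interface with the source-level definitions $\Theta^{\mathit{src}}_\tau$. I would first write out, for each of the eleven displayed source definitions, which source accessor and index set it uses, and check that they match the transported forms above. The care goes into the boundary conventions fixed there: the $j=n_c$ case of NV-TNM; the $I_p=\emptyset$ extension of NV-PaT; the non-materialized $\mathit{oc}_{c,p}$, handled by the Orchestration-case-domain paragraph, where the equality $\mathit{oc}(\varepsilon_i)=\mathit{oc}_{c,p}$ is uniformly false exactly as $\mathit{part}(e_i)=p$ is; and the bare-participant reading of NE-NPaM. Once each is checked to use the same index set, the remaining argument is mechanical case-by-case verification, which I would organize along the rows of Table~\ref{tab:p2-bytask}.
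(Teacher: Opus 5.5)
Your proposal is correct and follows essentially the same route as the paper's proof: it transports positions via P1.2 ($\varepsilon_i=\mu_E(e_i)$), gets pointwise agreement of every selection condition from Lemma~\ref{lem:accessor-invariance}, and derives identical index sets (hence identical optima and identical fallback conditions) for the searched tasks, while handling the counts and OB-M through the bijection of Lemma~\ref{lem:accessor-invariance}(iv) and the durations through $\iota_\Delta$ and $\mathrm{dec}_{\Delta\mathbb{T}}$. One small precision: for NE-NPaM, well-formedness only gives the existence of the own-side endpoint $o$; its uniqueness comes from the construction itself, since injectivity of $e\mapsto m_e$ means each $m_e$ carries at most one \emph{from} and one \emph{to} triple.
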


The qualification in the name is not decorative, and it delimits what P2 claims. Each $\Theta_\tau^{\mathit{src}}$ is a source-level definition \emph{fixed in this appendix}, and the fourteen fall into three tiers with respect to \cite{delgado2025predictcollab}: for three types (NE-NEPr, NE-NPaA, NV-PrT) it is a published formula restated unchanged in the notation of Definition~\ref{def:app-r1}; for five (marked $\dagger$ in Table~\ref{tab:p2-bytask}) it is a documented adaptation of a published formula, each departure being stated and justified under ``Departures from a literal restatement'' below; and for the remaining six \cite{delgado2025predictcollab} supplies no formula and no reference implementation at all, so $\Theta_\tau^{\mathit{src}}$ is reconstructed from the one-line description of its Table~2. P2 is therefore a claim about the interpretations displayed here, not about \cite{delgado2025predictcollab}'s formulas read in the abstract: for the last six types in particular, a different reading of the same one-line description would yield a different $\Theta_\tau^{\mathit{src}}$, and the equivalence would have to be re-established against it. What P2 rules out is a discrepancy introduced \emph{by the mapping}---the object-centric targets compute what their source-level readings compute---which is exactly the question RQ2 asks.

The well-formedness hypothesis in the statement plays the same role as in Proposition~\ref{prop:app-p1} (P1.1--P1.6, \ref{app:formal}): Lemma~\ref{lem:accessor-invariance}(iii) only guarantees the \emph{from}/\emph{to} O2O relations where $\mathit{from}(e)$/$\mathit{to}(e)$ are defined. This restriction has no bearing on any of the fourteen tasks, however: none reads a message's \emph{counterparty} endpoint. NE-NPaM reads only the \emph{own}-side relation for its instantiated direction ($q_{\mathit{snd}}=\emph{from}$ on a send event, $q_{\mathit{rcv}}=\emph{to}$ on a receive event; \ref{app:tasks}), which is always defined by well-formedness conditions (i)/(ii) regardless of whether the counterparty side is (\ref{app:formal}, Normalization); no other task reads \emph{from}/\emph{to} at all.

For eight of the fourteen types, $\Theta_\tau^{\mathit{src}}$ is grounded in the corresponding numbered definition of \cite{delgado2025predictcollab}'s own Appendix A (Table~\ref{tab:p2-bytask} gives the correspondence): for three of them (NE-NEPr, NE-NPaA, NV-PrT) it is that definition restated unchanged with the source notation of Definition~\ref{def:app-r1}, displayed below, while for the other five (marked $\dagger$) it is a documented adaptation of it, stated as an explicit formula in the paragraph ``Departures from a literal restatement'' below. Writing $\sigma_c = \langle e_1,\ldots,e_{n_c}\rangle$ ordered by $\prec_L$ as above, these three read
\[
\begin{aligned}
\Theta_{\mathrm{NEPr}}^{\mathit{src}}(\mathit{hd}^k(\sigma_c)) &= \mathit{act}(e_{k+1}); &&\text{(\cite{delgado2025predictcollab}, Def.~3, } \Theta_a\text{)}\\[4pt]
\Theta_{\mathrm{NPaA}}^{\mathit{src}}(\mathit{hd}^k(\sigma_c)) &= \mathit{part}(e_{k+1}); &&\text{(\cite{delgado2025predictcollab}, Def.~4, } \Theta_p\text{)}\\[4pt]
\Theta_{\mathrm{PrT}}^{\mathit{src}}(\mathit{hd}^k(\sigma_c)) &= \mathit{time}_L(e_{n_c}) - \mathit{time}_L(e_k). &&\text{(\cite{delgado2025predictcollab}, Def.~8, } \Theta_{rt}\text{)}
\end{aligned}
\]
Def.~8 additionally defines three auxiliary quantities, $fv_{t1}$, $fv_{t2}$, $fv_{t3}$ (the time since the last message sent, since the penultimate one, and since the case started), listed alongside $\Theta_{rt}$ as further arguments of the predictive model; they are input features for the model of \cite{delgado2025predictcollab}, not part of what the target itself computes, which reads only $\pi_T(e_n) - \pi_T(e_k)$ (its own notation) --- exactly $\Theta_{\mathrm{PrT}}^{\mathit{src}}$ above, under the accessor correspondence of Definition~\ref{def:app-r1}. For the remaining six, \cite{delgado2025predictcollab} states only the one-line description of its Table 2, with no accompanying formula or reference implementation (confirmed by inspecting both the published paper and the Predict-Collab source); $\Theta_\tau^{\mathit{src}}$ for these six is stated below directly from that description, over the source event sequence $\langle e_1,\ldots,e_{n_c}\rangle = \sigma_c$ ordered by $\prec_L$ (Definition~\ref{def:app-r1}), using only the source accessors $\mathit{case}$, $\mathit{act}$, $\mathit{time}_L$, $\mathit{part}$, and $\mathit{elem}$:
{\footnotesize
\[
\begin{aligned}
\Theta_{\mathrm{NMPr}}^{\mathit{src}}(\mathit{hd}^k(\sigma_c)) &=
\begin{cases}
\mathit{act}(e_j) & \text{if } j = \min\{\, k < i \le n_c : \mathit{elem}(e_i) \neq \textit{task} \,\} \text{ exists,}\\
\bot & \text{otherwise;}
\end{cases}\\[4pt]
\Theta_{\mathrm{NMPa}}^{\mathit{src}}(\mathit{hd}^k(\sigma_c), p) &=
\begin{cases}
\mathit{act}(e_j) & \text{if } j = \min\{\, i : \substack{k < i \le n_c,\ \mathit{elem}(e_i) \neq \textit{task}\\ \mathit{part}(e_i) = p} \,\} \text{ exists,}\\
\bot & \text{otherwise;}
\end{cases}\\[4pt]
\Theta_{\mathrm{NMPr}\#}^{\mathit{src}}(\mathit{hd}^k(\sigma_c)) &= \bigl|\{\, k < i \le n_c : \mathit{elem}(e_i) \neq \textit{task} \,\}\bigr|;\\[4pt]
\Theta_{\mathrm{NMPa}\#}^{\mathit{src}}(\mathit{hd}^k(\sigma_c), p) &= \bigl|\{\, i : \substack{k < i \le n_c,\ \mathit{elem}(e_i) \neq \textit{task}\\ \mathit{part}(e_i) = p} \,\}\bigr|;\\[4pt]
\Theta_{\mathrm{OBP}}^{\mathit{src}}(\mathit{hd}^k(\sigma_c), p) &= \bigl[\, \exists\, i : k < i \le n_c \text{ and } \mathit{part}(e_i) = p \,\bigr];\\[4pt]
\Theta_{\mathrm{OBM}}^{\mathit{src}}(\mathit{hd}^k(\sigma_c), \hat{a}) &= \bigl[\, \exists\, i : \substack{k < i \le n_c,\ \mathit{elem}(e_i) \neq \textit{task}\\ \mathit{act}(e_i) = \hat{a}} \,\bigr],
\end{aligned}
\]
}
where $[\,\cdot\,]$ denotes the Boolean value of the enclosed condition, and $\mathit{elem}(e_i) \neq \textit{task}$ (i.e., $e_i \in S_L \cup R_L$) is the source-level counterpart of $\mathit{isMsg}$, covering both directions, consistent with the "(send/receive)" and "sent/received" wording of Table 2 of \cite{delgado2025predictcollab}.

\paragraph{Scope: the \emph{remaining} reading only.} Table~2 of \cite{delgado2025predictcollab} names two readings for four of these types: "remaining/total" for NV-NMPa and NV-NMPr, and "remaining/duration" for NV-PaT and NV-PrT. P2 covers only the \emph{remaining} reading in all four cases: $\Theta_{\mathrm{NMPr\#}}^{\mathit{src}}$ and $\Theta_{\mathrm{NMPa\#}}^{\mathit{src}}$ above count events with $i>k$, and $\Theta_{\mathrm{PrT}}^{\mathit{src}}$, $\Theta_{\mathrm{PaT}}^{\mathit{src}}$ (Def.~8/9, \ref{app:tasks}) read a duration anchored at $e_k$. The \emph{total}/\emph{duration} reading is a legitimate target in its own right: its label is constant within a case, so it does not vary with $k$, but it remains predictable from each prefix, as total-cycle-time prediction is in case-centric PPM. Restricting P2 to the \emph{remaining} reading is therefore a scope decision, and it follows the source's own formalization: Def.~8 and Def.~9 resolve Table~2's ambiguity toward the prefix-dependent reading, and NV-NMPr/NV-NMPa are formalized here by the same resolution. The object-centric side matches this scope: $\Theta_{\mathrm{NMPr}\#}$, $\Theta_{\mathrm{NMPa}\#}$, $\Theta_{\mathrm{PrT}}$, $\Theta_{\mathrm{PaT}}$ (\ref{app:tasks}) are likewise stated only for the remaining reading, so P2 relates the same reading on both sides throughout.

\begin{proof}
The \CC{}-viewpoint prefix $\mathit{hd}^k(\bar{\sigma}_c) = \langle \varepsilon_1,\ldots,\varepsilon_k\rangle$ is $\mu_E(\mathit{hd}^k(\sigma_c))$, indexed in $\prec_L$ order, by criterion~P1.2 (\ref{app:formal}); this already accounts for the order dependency, via the order-preservation precondition placed on $\mu_E$ in \ref{app:formal} ("Identifier creation") together with the requirement that the events of $\mathit{cc}_c$ be enumerated in that order, so no separate order argument is needed here. In particular, the two sides index the same positions: $\varepsilon_i = \mu_E(e_i)$ for every $i \in [1,n_c]$. Throughout, Lemma~\ref{lem:accessor-invariance} gives that the selection conditions evaluate identically on $\varepsilon_i$ and on $e_i$: $\mathit{isMsg}(\varepsilon_i) \Leftrightarrow \mathit{elem}(e_i)\neq\textit{task}$ and $\mathit{snd}(\varepsilon_i) \Leftrightarrow e_i \in S_L$ (resp.\ $\mathit{rcv}$, $R_L$) by (iii), $\mathit{oc}(\varepsilon_i)=\mathit{oc}_{c,p} \Leftrightarrow \mathit{part}(e_i)=p$ by (ii), $\mathit{pa}(\varepsilon_i)=p_p \Leftrightarrow \mathit{part}(e_i)=p$ by (ii) and injectivity of $p\mapsto p_p$, and $\mathit{evtype}(\varepsilon_i)=\hat{a} \Leftrightarrow \mathit{act}(e_i)=\hat{a}$ by (i) and injectivity of the activity-label encoding. How the two sides reach their value from these conditions differs by task, in three ways.

\emph{Fixed index.} NE-NEPr, NE-NPaA, NE-NEPa, NV-TNE, and NV-PrT perform no search: they read position $k+1$, or, for NV-PrT, the trace's last position $n_c$. Both sides read the same position, which exists for every $k \in [1,n_c-1]$, so no fallback branch arises and the values agree by the accessor identities alone.

\emph{Searched index.} NE-NPaM, NE-NMPr, NE-NMPa, and NV-TNM minimize $j$, and NV-PaT maximizes $z$, over the same index range---$\{k{+}1,\ldots,n_c\}$ for the former four, $[1,n_c]$ for $I_p$ in NV-PaT---under conditions just shown to agree pointwise. Hence the optimum is attained at the same index on both sides, and it fails to exist on both sides simultaneously, in which case each returns its own fallback: $\bot$ for the four, and $0$ for NV-PaT, whose second branch condition $z>k$ likewise depends only on the agreeing index.

\emph{Set-level value.} The remaining four consume a set rather than an index. For the counts NV-NMPr and NV-NMPa, Lemma~\ref{lem:accessor-invariance}(iv) gives a bijection $e' \mapsto m_{e'}$ between the message events of case $c$ and $\mathit{Msgs}(c)$, with $\varepsilon_{\mathit{pos}(m_{e'})} = \mu_E(e')$, so $\mathit{pos}(m_{e'})>k$ holds exactly when $e'$ sits beyond position $k$; the object-level and event-level filtering conditions therefore select corresponding sets, which, being in bijection, have equal cardinality. For the Booleans OB-P and OB-M, both sides existentially quantify the agreeing conditions over the same range, so they yield $\mathsf{true}$ together and $\mathsf{false}$ together; the negative branch is a Boolean value, not a sentinel, and needs no fallback argument. Substituting the accessor identities of Lemma~\ref{lem:accessor-invariance} into the fixed composition that defines each $\Theta_\tau$, and applying $\mathrm{dec}$ to undo the participant-object, activity-label, or timestamp encoding wherever the composition returns one of those types (Table~\ref{tab:p2-bytask}), yields $\mathrm{dec}(\Theta_\tau) = \Theta_\tau^{\mathit{src}}$ pointwise; Boolean and integer outputs need no such step, since $\mathrm{dec}$ is the identity on them. For NE-NEPa, apply $\mathrm{dec}_{\mathcal{A}}$ and $\mathrm{dec}_{\mathsf{Pa}}$ componentwise to the pair. For the time-based types, $\mathit{time}(\varepsilon_a) \ominus \mathit{time}(\varepsilon_b) = \iota(\mathit{time}_L(e_a)) \ominus \iota(\mathit{time}_L(e_b)) = \iota_\Delta(\mathit{time}_L(e_a) - \mathit{time}_L(e_b))$ by (i) and the definition of $\iota_\Delta$ (\ref{app:formal}), so applying $\mathrm{dec}_{\Delta\mathbb{T}}$ recovers exactly $\mathit{time}_L(e_a) - \mathit{time}_L(e_b)$, rather than merely being some unrelated value.
\end{proof}

For compactness in Table~\ref{tab:p2-bytask}, let $\mathcal{A}_\mu=\mathit{evtype}(E)$, $\mathsf{Pa}_\mu=\{p_p:p\in\mathcal{P}_L\}$, and $\Delta_\mu=\iota_\Delta(\Delta\mathbb{T})$. For any such codomain $X$ with decoder $f$, write $X^\bot=X\uplus\{\bot\}$ and $f^\bot$ for the extension that maps $\bot$ to itself; $\mathrm{id}$ denotes the identity on Boolean and natural-number outputs.

\begin{widetable}
\centering
\caption{Parameters, output typing, accessors, and source-level counterparts of the tasks in Proposition~\ref{prop:p2-equivalence}. Types marked $\dagger$ are documented adaptations, not literal restatements, of the cited definition (see "Departures from a literal restatement" below).}
\label{tab:p2-bytask}
{\scriptsize \setlength{\tabcolsep}{2.5pt}
\begin{tabular}{@{}llllll@{}}
\toprule
Task & Parameter & Output / decoder & Accessors used & Lemma & $\Theta_\tau^{\mathit{src}}$ \\
\midrule
NE-NEPr  & --- & $\mathcal{A}_\mu / \mathrm{dec}_{\mathcal A}$ & $\mathit{evtype}$ & (i) & \cite{delgado2025predictcollab}, Def.~3 \\
NE-NPaA  & --- & $\mathsf{Pa}_\mu / \mathrm{dec}_{\mathsf{Pa}}$ & $\mathit{pa}$ & (ii) & \cite{delgado2025predictcollab}, Def.~4 \\
NE-NEPa$^\dagger$  & --- & $\mathcal{A}_\mu\!\times\!\mathsf{Pa}_\mu / (\mathrm{dec}_{\mathcal A},\mathrm{dec}_{\mathsf{Pa}})$ & $\mathit{evtype}$, $\mathit{pa}$ & (i), (ii) & \cite{delgado2025predictcollab}, Def.~5 \\
NE-NPaM$^\dagger$  & $d\in\{\mathit{snd},\mathit{rcv}\}$ & $\mathsf{Pa}_\mu^\bot / \mathrm{dec}_{\mathsf{Pa}}^\bot$ & $\mathit{msg}$, \emph{from}/\emph{to} & (iii) & \cite{delgado2025predictcollab}, Def.~6 \\
NE-NMPr  & --- & $\mathcal{A}_\mu^\bot / \mathrm{dec}_{\mathcal A}^\bot$ & $\mathit{isMsg}$, $\mathit{evtype}$ & (i), (iii) & Table~2 only (above) \\
NE-NMPa  & $p\in\mathcal P_L$ & $\mathcal{A}_\mu^\bot / \mathrm{dec}_{\mathcal A}^\bot$ & $\mathit{isMsg}$, $\mathit{oc}$, $\mathit{evtype}$ & (i)--(iii) & Table~2 only (above) \\
NV-PrT   & --- & $\Delta_\mu / \mathrm{dec}_{\Delta\mathbb T}$ & $\mathit{time}$ & (i) & \cite{delgado2025predictcollab}, Def.~8 \\
NV-PaT$^\dagger$   & $p\in\mathcal P_L$ & $\Delta_\mu / \mathrm{dec}_{\Delta\mathbb T}$ & $\mathit{oc}$, $\mathit{time}$ & (i), (ii) & \cite{delgado2025predictcollab}, Def.~9 \\
NV-TNE$^\dagger$   & --- & $\Delta_\mu / \mathrm{dec}_{\Delta\mathbb T}$ & $\mathit{time}$ & (i) & \cite{delgado2025predictcollab}, Def.~10 \\
NV-TNM$^\dagger$   & $d\in\{\mathit{snd},\mathit{rcv}\}$ & $\Delta_\mu^\bot / \mathrm{dec}_{\Delta\mathbb T}^\bot$ & $\mathit{time}$ & (i), (iii) & \cite{delgado2025predictcollab}, Def.~11 \\
NV-NMPr  & --- & $\mathbb N / \mathrm{id}$ & $\mathit{Msgs}$, $\mathit{pos}$ & (iv) & Table~2 only (above) \\
NV-NMPa  & $p\in\mathcal P_L$ & $\mathbb N / \mathrm{id}$ & $\mathit{Msgs}$, $\mathit{pos}$, $\mathit{oc}$ & (ii), (iv) & Table~2 only (above) \\
OB-P     & $p\in\mathcal P_L$ & $\mathbb B / \mathrm{id}$ & $\mathit{pa}$ & (ii) & Table~2 only (above) \\
OB-M     & $\hat a\in\mathcal A_L$ & $\mathbb B / \mathrm{id}$ & $\mathit{Msgs}$, $\mathit{pos}$, $\mathit{evtype}$ & (i), (iv) & Table~2 only (above) \\
\bottomrule
\end{tabular}
}
\end{widetable}

\paragraph{Coverage.} \cite{delgado2025predictcollab}'s own Appendix A formalizes nine predictions (its Definitions~3--11), of which eight coincide with one of our fourteen reformulated types, as listed above; its Definition~7 ("Next Participant that Will Send a Message (with Activity)") formalizes a ninth prediction that does not correspond to any of the fourteen types of its own Table~2 taxonomy (its Table~3 labels this row "NE-NMPr", which is inconsistent with Table~2's NE-NMPr, "Next message... in the process") and is out of scope here, since \ref{app:tasks} reformulates exactly the fourteen Table-2 types. The other six of the fourteen types have no formula or reference implementation in \cite{delgado2025predictcollab}: only the one-line description of its Table~2, which the formulas above formalize directly.

\paragraph{Departures from a literal restatement.} For five of the eight types with a formula in \cite{delgado2025predictcollab}'s Appendix A (marked $\dagger$ in Table~\ref{tab:p2-bytask}), $\Theta_\tau^{\mathit{src}}$ is not the cited definition unchanged, but a documented adaptation of it, detailed at each definition in \ref{app:tasks}:
\begin{itemize}
  \item \textbf{NE-NEPa} states the pair that Def.~5 packages as the string concatenation $a\_p$ (its Def.~2) -- a presentation choice for a single-column model input, not part of the target's semantic content. Table~2's row for this type reads "next event that is likely to occur in a participant", which, taken alone, would suggest a participant-parameterized target -- the reading we do adopt for the identically phrased NE-NMPa row, which has no formula. We follow Def.~5 here because it is \cite{delgado2025predictcollab}'s own formalization of this row, and it is unparameterized: it predicts the activity and the participant of the global next event. Stating the pair directly, rather than reproducing the $a\_p$ concatenation, is more than a presentation choice: an unescaped join is injective only if the separator is guaranteed absent from every activity label and participant identifier, a guarantee the source format does not give, so two distinct $(\mathit{evtype}(\varepsilon_{k+1}), \mathit{pa}(\varepsilon_{k+1}))$ pairs could in principle concatenate to the same string and be conflated into one class; the pair representation carries no such risk and never identifies two distinct source classes.
  \item \textbf{NE-NPaM} and \textbf{NV-TNM} generalize Def.~6 and Def.~11, which are formally send-only despite Table 2's "send/receive" wording for both types, to a direction parameter $d \in \{\mathit{snd},\mathit{rcv}\}$, and replace their ad hoc fallback values (the string \textit{dummy}, and the numeral $1$) with the sentinel $\bot$. Among the fourteen types, only NE-NPaM, NE-NMPr, NE-NMPa, and NV-TNM ever fall back to $\bot$: each searches for a next occurrence---a message endpoint, a message kind, or a message event---that may not exist in the remainder of the trace. The other ten need no such fallback: the next-event types (NE-NEPr, NE-NPaA, NE-NEPa) and NV-TNE always have a next event within a valid prefix; the count-based types (NV-NMPr, NV-NMPa) are always defined, returning $0$ where nothing remains; NV-PrT is likewise always defined for every $k<n_c$, but for a different reason: it reads the duration to the trace's fixed last event $\varepsilon_{n_c}$ rather than searching for a next occurrence that could fail to exist, so it never returns $0$ as a fallback; NV-PaT falls back to $0$ rather than $\bot$; and the Boolean tasks OB-P and OB-M return $\mathsf{true}$ or $\mathsf{false}$ and use no sentinel at all. NV-TNM additionally does not carry over Def.~11's literal exclusion of a next send at the trace's last position ($k+s<n$), read as an implementation artifact rather than a deliberate restriction (\ref{app:tasks}).
  \item \textbf{NV-TNE} corrects what we read as a typographical erratum in Def.~10, whose formula returns an absolute timestamp where the type's name ("time until next event") and worked example (its Table~11) indicate a duration.
  \item \textbf{NV-PaT} extends Def.~9's domain -- which presupposes a participant that acts at least once in $c$ -- to participants that never act in $c$ at all, resolved to the same fallback value $0$ that Def.~9 and Predict-Collab's reference implementation already use.
\end{itemize}
So that the right-hand side of Proposition~\ref{prop:p2-equivalence} is fully explicit for these five types as well, we state the adapted definitions with the source accessors of Definition~\ref{def:app-r1}, over $\sigma_c = \langle e_1,\ldots,e_{n_c}\rangle$ ordered by $\prec_L$, writing $E^{\mathit{snd}}_L = S_L$, $E^{\mathit{rcv}}_L = R_L$, $f_{\mathit{snd}} = \mathit{from}$, and $f_{\mathit{rcv}} = \mathit{to}$ (for $e_j \in E^d_L$, $f_d(e_j)$ is defined and equals $\mathit{part}(e_j)$ by well-formedness (i)/(ii)):
{\footnotesize
\[
\begin{aligned}
\Theta_{\mathrm{NEPa}}^{\mathit{src}}(\mathit{hd}^k(\sigma_c)) &= \bigl(\mathit{act}(e_{k+1}),\ \mathit{part}(e_{k+1})\bigr);\\[4pt]
\Theta_{\mathrm{NPaM}}^{\mathit{src}}(\mathit{hd}^k(\sigma_c), d) &=
\begin{cases}
f_d(e_j) & \text{if } j = \min\{\, k < i \le n_c : e_i \in E^d_L \,\} \text{ exists,}\\
\bot & \text{otherwise;}
\end{cases}\\[4pt]
\Theta_{\mathrm{PaT}}^{\mathit{src}}(\mathit{hd}^k(\sigma_c), p) &=
\begin{cases}
\mathit{time}_L(e_z) - \mathit{time}_L(e_k) & \text{if } I_p \neq \emptyset \text{ and } z = \max I_p > k,\\
0 & \text{otherwise,}
\end{cases}\\
&\qquad \text{where } I_p = \{\, i \in [1, n_c] : \mathit{part}(e_i) = p \,\};\\[4pt]
\Theta_{\mathrm{TNE}}^{\mathit{src}}(\mathit{hd}^k(\sigma_c)) &= \mathit{time}_L(e_{k+1}) - \mathit{time}_L(e_k);\\[4pt]
\Theta_{\mathrm{TNM}}^{\mathit{src}}(\mathit{hd}^k(\sigma_c), d) &=
\begin{cases}
\mathit{time}_L(e_j) - \mathit{time}_L(e_k) & \text{if } j = \min\{\, k < i \le n_c : e_i \in E^d_L \,\} \text{ exists,}\\
\bot & \text{otherwise.}
\end{cases}
\end{aligned}
\]
}
Proposition~\ref{prop:p2-equivalence} holds for these five types exactly as stated, against $\Theta_\tau^{\mathit{src}}$ as displayed above, not against the cited formula read literally; the send instantiation of NE-NPaM and NV-TNM additionally coincides with $\Theta_{pm}$ and $\Theta_{tm}$ on their own (send-only) domain, up to the sentinel and concatenation differences just noted and, for NV-TNM, up to Def.~11's literal exclusion of a next send at the trace's last position ($k+s<n$), which $\Theta_{\mathrm{TNM}}^{\mathit{src}}$ deliberately does not carry over.

\paragraph{Scope of the equivalence.} Proposition~\ref{prop:p2-equivalence} covers exactly the fourteen tasks of \cite{delgado2025predictcollab} reformulated in \ref{app:tasks}. Any target outside that taxonomy falls outside its scope for a structural reason rather than for lack of a proof: there is no source-log definition $\Theta_\tau^{\mathit{src}}$ against which an equivalence could be stated, since the extended collaborative log model provides none.

\paragraph{Generality.} Proposition~\ref{prop:p2-equivalence} applies to every \emph{well-formed} extended collaborative log $L$ (the hypothesis already carried by its statement, discharged by the normalization $\nu$ of \ref{app:formal} whenever any own-side endpoint the source does record already agrees with \texttt{collab:participant}, since $\nu$ backfills a missing own side but does not overwrite a recorded one; criterion~P1.3 surfaces the disagreement otherwise), regardless of the dataset, because the core mapping treats each send and each receive as independent message objects. The fourteen tasks read these objects through the accessors of Definition~\ref{def:accessors} alone. The result concerns label equivalence, not the recovery of any real-world correspondence between a send and its reception: the core mapping deliberately asserts no such correspondence (rule~M4), and targets that would require it lie outside the catalog (Sect.~\ref{sec:disc-limitations}).

% ==================
\section{Full-catalog experimentation results}\label{app:fullcatalog}
% ==================

Table \ref{tab:resultsAllFullI} and Table \ref{tab:resultsAllFullII} present the full-catalog experimental results for all 14 reformulated tasks across the five logs. Times are in seconds, except for BPIC~2013 (\textsuperscript{*}), which is in days. \textsuperscript{\dag}~model at or at/near a perfect score / near-zero error under two-decimal rounding; \textsuperscript{\ddag}~constant (degenerate) target, i.e., the trivial baseline rounds to a perfect score; \textsuperscript{\S}~model metric worse than the trivial baseline. Per-row extremes at the same precision: best in \textbf{bold}, worst \underline{underlined}. It is not a ranking; models may tie, and in half of the rows, the two marked models overlap within one standard deviation across folds.

\begin{widesidewaystable}
\setlength{\tabcolsep}{1.5pt}
\centering
\caption{Full-catalog experimentation results (part 1 of 2).}%
\label{tab:resultsAllFullI}
{\scriptsize\renewcommand{\baselinestretch}{1}\selectfont
\renewcommand{\arraystretch}{0.95}
\begin{tabular}{ll p{4.45cm} r @{\hspace{6pt}} l l l l l l}
\toprule
Task & Log & Anchor (parameter) & Samples & Baseline & RF & XGB & Transf. & LSTM & GNN \\
\midrule
\multirow{5}{*}{\begin{tabular}[c]{@{}l@{}}NE-NEPr\\ ($\mathrm{F1}_{\mathrm{macro}}$)\end{tabular}}
 & Healthcare & \texttt{CollaborationCase} & 1{,}350 & 0.01 & \underline{0.65}\,$\pm$\,0.02 & \textbf{0.69}\,$\pm$\,0.03 & 0.68\,$\pm$\,0.03 & 0.67\,$\pm$\,0.04 & 0.66\,$\pm$\,0.02 \\
 & Artificial1 & \texttt{CollaborationCase} & 700 & 0.03 & 0.54\,$\pm$\,0.04 & \underline{0.53}\,$\pm$\,0.03 & 0.57\,$\pm$\,0.04 & 0.57\,$\pm$\,0.04 & \textbf{0.58}\,$\pm$\,0.03 \\
 & Artificial5 & \texttt{CollaborationCase} & 2{,}260 & 0.00 & 0.39\,$\pm$\,0.01 & \underline{0.38}\,$\pm$\,0.02 & 0.41\,$\pm$\,0.02 & \textbf{0.43}\,$\pm$\,0.01 & 0.40\,$\pm$\,0.02 \\
 & Real4 & \texttt{CollaborationCase} & 1{,}700 & 0.01 & \underline{0.65}\,$\pm$\,0.01 & 0.66\,$\pm$\,0.02 & \textbf{0.69}\,$\pm$\,0.01 & \textbf{0.69}\,$\pm$\,0.00 & 0.68\,$\pm$\,0.01 \\
 & BPIC~2013 & \texttt{CollaborationCase} & 62{,}030 & 0.19 & \underline{0.38}\,$\pm$\,0.00 & \underline{0.38}\,$\pm$\,0.00 & 0.43\,$\pm$\,0.05 & \textbf{0.45}\,$\pm$\,0.06 & 0.44\,$\pm$\,0.06 \\
\midrule
\multirow{5}{*}{\begin{tabular}[c]{@{}l@{}}NE-NPaA\\ ($\mathrm{F1}_{\mathrm{macro}}$)\end{tabular}}
 & Healthcare & \texttt{Participant} & 1{,}350 & 0.17 & \underline{0.82}\,$\pm$\,0.03 & 0.86\,$\pm$\,0.02 & 0.87\,$\pm$\,0.02 & \textbf{0.88}\,$\pm$\,0.02 & 0.87\,$\pm$\,0.02 \\
 & Artificial1 & \texttt{Participant} & 700 & 0.34 & \underline{0.73}\,$\pm$\,0.04 & 0.74\,$\pm$\,0.02 & \textbf{0.79}\,$\pm$\,0.04 & 0.78\,$\pm$\,0.04 & 0.78\,$\pm$\,0.04 \\
 & Artificial5 & \texttt{Participant} & 2{,}260 & 0.18 & \underline{0.56}\,$\pm$\,0.01 & \textbf{0.59}\,$\pm$\,0.03 & 0.57\,$\pm$\,0.01 & 0.58\,$\pm$\,0.03 & \underline{0.56}\,$\pm$\,0.03 \\
 & Real4 & \texttt{Participant} & 1{,}700 & 0.23 & \underline{0.86}\,$\pm$\,0.02 & 0.89\,$\pm$\,0.01 & 0.90\,$\pm$\,0.02 & \textbf{0.91}\,$\pm$\,0.02 & 0.90\,$\pm$\,0.02 \\
 & BPIC~2013 & \texttt{Participant} & 62{,}030 & 0.04 & 0.07\,$\pm$\,0.00 & \underline{0.05}\,$\pm$\,0.00 & \textbf{0.12}\,$\pm$\,0.01 & \textbf{0.12}\,$\pm$\,0.02 & 0.09\,$\pm$\,0.01 \\
\midrule
\multirow{5}{*}{\begin{tabular}[c]{@{}l@{}}NE-NEPa\\ ($\mathrm{F1}_{\mathrm{macro}}$)\end{tabular}}
 & Healthcare & \texttt{Participant} & 1{,}350 & 0.01 & \underline{0.65}\,$\pm$\,0.02 & \textbf{0.69}\,$\pm$\,0.03 & 0.68\,$\pm$\,0.03 & 0.67\,$\pm$\,0.04 & 0.66\,$\pm$\,0.02 \\
 & Artificial1 & \texttt{Participant} & 700 & 0.03 & 0.54\,$\pm$\,0.04 & \underline{0.53}\,$\pm$\,0.03 & 0.57\,$\pm$\,0.04 & 0.57\,$\pm$\,0.04 & \textbf{0.58}\,$\pm$\,0.03 \\
 & Artificial5 & \texttt{Participant} & 2{,}260 & 0.00 & 0.39\,$\pm$\,0.01 & \underline{0.38}\,$\pm$\,0.02 & 0.41\,$\pm$\,0.02 & \textbf{0.43}\,$\pm$\,0.01 & 0.40\,$\pm$\,0.02 \\
 & Real4 & \texttt{Participant} & 1{,}700 & 0.01 & \underline{0.65}\,$\pm$\,0.01 & 0.67\,$\pm$\,0.01 & \textbf{0.69}\,$\pm$\,0.01 & \textbf{0.69}\,$\pm$\,0.01 & 0.68\,$\pm$\,0.01 \\
 & BPIC~2013 & \texttt{Participant} & 62{,}030 & 0.01 & 0.04\,$\pm$\,0.00 & \underline{0.03}\,$\pm$\,0.00 & 0.07\,$\pm$\,0.01 & \textbf{0.08}\,$\pm$\,0.00 & 0.07\,$\pm$\,0.01 \\
\midrule
\multirow{5}{*}{\begin{tabular}[c]{@{}l@{}}NE-NPaM\\ ($\mathrm{F1}_{\mathrm{macro}}$)\end{tabular}}
 & Healthcare & \texttt{Participant} & 1{,}250 & 0.25 & \underline{0.98}\,$\pm$\,0.02 & \underline{0.98}\,$\pm$\,0.02 & \textbf{0.99}\,$\pm$\,0.01 & \textbf{0.99}\,$\pm$\,0.01 & \underline{0.98}\,$\pm$\,0.01 \\
 & Artificial1 & \texttt{Participant} & 329 & 1.00\,\textsuperscript{\ddag} & \textbf{1.00}\,$\pm$\,0.00\,\textsuperscript{\dag} & \textbf{1.00}\,$\pm$\,0.00\,\textsuperscript{\dag} & \textbf{1.00}\,$\pm$\,0.00\,\textsuperscript{\dag} & \textbf{1.00}\,$\pm$\,0.00\,\textsuperscript{\dag} & \textbf{1.00}\,$\pm$\,0.00\,\textsuperscript{\dag} \\
 & Artificial5 & \texttt{Participant} & 1{,}661 & 0.23 & \underline{0.62}\,$\pm$\,0.05 & 0.65\,$\pm$\,0.05 & 0.68\,$\pm$\,0.05 & \textbf{0.69}\,$\pm$\,0.05 & \textbf{0.69}\,$\pm$\,0.04 \\
 & Real4 & \texttt{Participant} & 1{,}499 & 0.26 & \textbf{1.00}\,$\pm$\,0.00\,\textsuperscript{\dag} & \textbf{1.00}\,$\pm$\,0.00\,\textsuperscript{\dag} & \textbf{1.00}\,$\pm$\,0.00\,\textsuperscript{\dag} & \textbf{1.00}\,$\pm$\,0.00\,\textsuperscript{\dag} & \textbf{1.00}\,$\pm$\,0.00\,\textsuperscript{\dag} \\
 & BPIC~2013 & \texttt{Participant} & 16{,}499 & 0.04 & 0.07\,$\pm$\,0.01 & \underline{0.06}\,$\pm$\,0.00 & \textbf{0.13}\,$\pm$\,0.02 & \textbf{0.13}\,$\pm$\,0.03 & 0.09\,$\pm$\,0.00 \\
\midrule
\multirow{5}{*}{\begin{tabular}[c]{@{}l@{}}NE-NMPa\\ ($\mathrm{F1}_{\mathrm{macro}}$)\end{tabular}}
 & Healthcare & \begin{tabular}[c]{@{}l@{}}\texttt{Message}\\(Gynecologist)\end{tabular} & 1{,}068 & 0.08 & 0.85\,$\pm$\,0.02 & \textbf{0.86}\,$\pm$\,0.02 & \underline{0.84}\,$\pm$\,0.03 & \underline{0.84}\,$\pm$\,0.03 & 0.85\,$\pm$\,0.04 \\
 & Artificial1 & \begin{tabular}[c]{@{}l@{}}\texttt{Message}\\(PartyA)\end{tabular} & 549 & 0.33 & \textbf{0.62}\,$\pm$\,0.05 & \textbf{0.62}\,$\pm$\,0.05 & \underline{0.56}\,$\pm$\,0.04 & 0.60\,$\pm$\,0.04 & 0.60\,$\pm$\,0.06 \\
 & Artificial5 & \begin{tabular}[c]{@{}l@{}}\texttt{Message}\\(PartyB)\end{tabular} & 2{,}018 & 0.08 & 0.84\,$\pm$\,0.02 & 0.86\,$\pm$\,0.01 & 0.75\,$\pm$\,0.03 & \underline{0.70}\,$\pm$\,0.02 & \textbf{0.87}\,$\pm$\,0.03 \\
 & Real4 & \begin{tabular}[c]{@{}l@{}}\texttt{Message}\\(Zoo)\end{tabular} & 1{,}499 & 0.14 & \textbf{1.00}\,$\pm$\,0.00\,\textsuperscript{\dag} & \textbf{1.00}\,$\pm$\,0.00\,\textsuperscript{\dag} & \textbf{1.00}\,$\pm$\,0.00\,\textsuperscript{\dag} & \underline{0.99}\,$\pm$\,0.01 & \underline{0.99}\,$\pm$\,0.00 \\
 & BPIC~2013 & \begin{tabular}[c]{@{}l@{}}\texttt{Message}\\(Org line C)\end{tabular} & 14{,}538 & 1.00\,\textsuperscript{\ddag} & \textbf{1.00}\,$\pm$\,0.00\,\textsuperscript{\dag} & \textbf{1.00}\,$\pm$\,0.00\,\textsuperscript{\dag} & \textbf{1.00}\,$\pm$\,0.00\,\textsuperscript{\dag} & \textbf{1.00}\,$\pm$\,0.00\,\textsuperscript{\dag} & \textbf{1.00}\,$\pm$\,0.00\,\textsuperscript{\dag} \\
\midrule
\multirow{5}{*}{\begin{tabular}[c]{@{}l@{}}NE-NMPr\\ ($\mathrm{F1}_{\mathrm{macro}}$)\end{tabular}}
 & Healthcare & \texttt{Message} & 1{,}350 & 0.02 & \underline{0.70}\,$\pm$\,0.02 & \textbf{0.72}\,$\pm$\,0.03 & \textbf{0.72}\,$\pm$\,0.03 & \textbf{0.72}\,$\pm$\,0.06 & \underline{0.70}\,$\pm$\,0.06 \\
 & Artificial1 & \texttt{Message} & 549 & 0.11 & \textbf{0.58}\,$\pm$\,0.05 & 0.54\,$\pm$\,0.04 & \underline{0.52}\,$\pm$\,0.04 & \underline{0.52}\,$\pm$\,0.02 & 0.53\,$\pm$\,0.04 \\
 & Artificial5 & \texttt{Message} & 2{,}068 & 0.02 & \underline{0.64}\,$\pm$\,0.02 & 0.66\,$\pm$\,0.02 & 0.66\,$\pm$\,0.03 & \underline{0.64}\,$\pm$\,0.03 & \textbf{0.67}\,$\pm$\,0.02 \\
 & Real4 & \texttt{Message} & 1{,}600 & 0.07 & \textbf{1.00}\,$\pm$\,0.00\,\textsuperscript{\dag} & \textbf{1.00}\,$\pm$\,0.00\,\textsuperscript{\dag} & 0.99\,$\pm$\,0.00 & \underline{0.98}\,$\pm$\,0.01 & 0.99\,$\pm$\,0.00 \\
 & BPIC~2013 & \texttt{Message} & 18{,}481 & 1.00\,\textsuperscript{\ddag} & \textbf{1.00}\,$\pm$\,0.00\,\textsuperscript{\dag} & \textbf{1.00}\,$\pm$\,0.00\,\textsuperscript{\dag} & \textbf{1.00}\,$\pm$\,0.00\,\textsuperscript{\dag} & \textbf{1.00}\,$\pm$\,0.00\,\textsuperscript{\dag} & \textbf{1.00}\,$\pm$\,0.00\,\textsuperscript{\dag} \\
\midrule
\multirow{5}{*}{\begin{tabular}[c]{@{}l@{}}NV-PrT\\ (MAE)\end{tabular}}
 & Healthcare & \texttt{CollaborationCase} & 1{,}350 & 17.45 & 15.29\,$\pm$\,1.79 & 15.07\,$\pm$\,2.07 & \textbf{14.76}\,$\pm$\,1.52 & 14.78\,$\pm$\,2.14 & \underline{15.46}\,$\pm$\,2.46 \\
 & Artificial1 & \texttt{CollaborationCase} & 700 & 8.59 & \underline{4.65}\,$\pm$\,0.68 & 4.40\,$\pm$\,0.68 & 3.99\,$\pm$\,0.74 & \textbf{3.98}\,$\pm$\,0.67 & 4.05\,$\pm$\,0.76 \\
 & Artificial5 & \texttt{CollaborationCase} & 2{,}260 & 33.57 & \underline{11.15}\,$\pm$\,0.64 & 10.70\,$\pm$\,0.74 & \textbf{10.07}\,$\pm$\,0.98 & 10.53\,$\pm$\,0.78 & 10.41\,$\pm$\,1.05 \\
 & Real4 & \texttt{CollaborationCase} & 1{,}700 & 19.43 & \underline{7.62}\,$\pm$\,0.41 & 7.05\,$\pm$\,0.55 & 7.03\,$\pm$\,0.40 & \textbf{6.61}\,$\pm$\,0.46 & 6.67\,$\pm$\,0.65 \\
 & BPIC~2013\textsuperscript{*} & \texttt{CollaborationCase} & 62{,}030 & 13.39 & \underline{18.79}\,$\pm$\,1.75\,\textsuperscript{\S} & 16.73\,$\pm$\,1.80\,\textsuperscript{\S} & 14.60\,$\pm$\,1.81\,\textsuperscript{\S} & 16.37\,$\pm$\,2.20\,\textsuperscript{\S} & \textbf{13.37}\,$\pm$\,1.98 \\
\bottomrule
\end{tabular}\\[2pt]
}
\end{widesidewaystable}

\begin{widesidewaystable}
\setlength{\tabcolsep}{1.5pt}
\centering
\caption{Full-catalog experimentation results (part 2 of 2).}%
\label{tab:resultsAllFullII}
{\scriptsize\renewcommand{\baselinestretch}{1}\selectfont
\renewcommand{\arraystretch}{0.95}
\begin{tabular}{ll p{4.45cm} r @{\hspace{6pt}} l l l l l l}
\toprule
Task & Log & Anchor (parameter) & Samples & Baseline & RF & XGB & Transf. & LSTM & GNN \\
\midrule
\multirow{5}{*}{\begin{tabular}[c]{@{}l@{}}NV-PaT\\ (MAE)\end{tabular}}
 & Healthcare & \begin{tabular}[c]{@{}l@{}}\texttt{OrchestrationCase}\\(Gynecologist)\end{tabular} & 1{,}350 & 12.85 & \underline{9.29}\,$\pm$\,0.72 & 9.01\,$\pm$\,0.92 & 8.40\,$\pm$\,0.75 & 8.33\,$\pm$\,1.13 & \textbf{8.32}\,$\pm$\,1.07 \\
 & Artificial1 & \begin{tabular}[c]{@{}l@{}}\texttt{OrchestrationCase}\\(PartyA)\end{tabular} & 700 & 8.57 & \underline{4.59}\,$\pm$\,0.72 & 4.38\,$\pm$\,0.71 & 3.95\,$\pm$\,0.73 & \textbf{3.93}\,$\pm$\,0.72 & \textbf{3.93}\,$\pm$\,0.75 \\
 & Artificial5 & \begin{tabular}[c]{@{}l@{}}\texttt{OrchestrationCase}\\(PartyB)\end{tabular} & 2{,}260 & 33.50 & \underline{10.59}\,$\pm$\,0.62 & 10.08\,$\pm$\,0.73 & 9.50\,$\pm$\,0.90 & 9.64\,$\pm$\,0.60 & \textbf{9.47}\,$\pm$\,1.00 \\
 & Real4 & \begin{tabular}[c]{@{}l@{}}\texttt{OrchestrationCase}\\(Zoo)\end{tabular} & 1{,}700 & 19.00 & \underline{6.79}\,$\pm$\,0.19 & 6.36\,$\pm$\,0.19 & 6.51\,$\pm$\,0.42 & \textbf{5.67}\,$\pm$\,0.41 & 5.81\,$\pm$\,0.37 \\
 & BPIC~2013\textsuperscript{*} & \begin{tabular}[c]{@{}l@{}}\texttt{OrchestrationCase}\\(Org line C)\end{tabular} & 62{,}030 & 7.86 & \underline{9.56}\,$\pm$\,0.85\,\textsuperscript{\S} & 8.53\,$\pm$\,0.87\,\textsuperscript{\S} & 7.97\,$\pm$\,1.06\,\textsuperscript{\S} & 8.55\,$\pm$\,0.83\,\textsuperscript{\S} & \textbf{7.37}\,$\pm$\,0.92 \\
\midrule
\multirow{5}{*}{\begin{tabular}[c]{@{}l@{}}NV-TNE\\ (MAE)\end{tabular}}
 & Healthcare & \texttt{CollaborationCase} & 1{,}350 & 1.88 & \underline{1.95}\,$\pm$\,0.08\,\textsuperscript{\S} & 1.85\,$\pm$\,0.07 & 1.78\,$\pm$\,0.08 & 1.81\,$\pm$\,0.03 & \textbf{1.77}\,$\pm$\,0.05 \\
 & Artificial1 & \texttt{CollaborationCase} & 700 & 2.15 & \underline{2.10}\,$\pm$\,0.07 & 2.06\,$\pm$\,0.06 & 1.97\,$\pm$\,0.10 & 2.00\,$\pm$\,0.14 & \textbf{1.95}\,$\pm$\,0.09 \\
 & Artificial5 & \texttt{CollaborationCase} & 2{,}260 & 2.86 & 2.86\,$\pm$\,0.10\,\textsuperscript{\S} & 2.75\,$\pm$\,0.08 & \textbf{2.72}\,$\pm$\,0.07 & \underline{2.95}\,$\pm$\,0.04\,\textsuperscript{\S} & 2.78\,$\pm$\,0.08 \\
 & Real4 & \texttt{CollaborationCase} & 1{,}700 & 2.13 & \underline{2.29}\,$\pm$\,0.02\,\textsuperscript{\S} & 2.12\,$\pm$\,0.05 & 2.05\,$\pm$\,0.04 & 2.11\,$\pm$\,0.04 & \textbf{2.02}\,$\pm$\,0.07 \\
 & BPIC~2013\textsuperscript{*} & \texttt{CollaborationCase} & 62{,}030 & 1.47 & 2.44\,$\pm$\,0.18\,\textsuperscript{\S} & 2.24\,$\pm$\,0.16\,\textsuperscript{\S} & \underline{2.55}\,$\pm$\,0.20\,\textsuperscript{\S} & 2.10\,$\pm$\,0.11\,\textsuperscript{\S} & \textbf{1.46}\,$\pm$\,0.08 \\
\midrule
\multirow{5}{*}{\begin{tabular}[c]{@{}l@{}}NV-TNM\\ (MAE)\end{tabular}}
 & Healthcare & \texttt{Message} & 1{,}250 & 4.27 & \underline{3.17}\,$\pm$\,0.12 & 2.99\,$\pm$\,0.06 & 2.93\,$\pm$\,0.07 & \textbf{2.88}\,$\pm$\,0.06 & 2.95\,$\pm$\,0.07 \\
 & Artificial1 & \texttt{Message} & 329 & 2.52 & 2.24\,$\pm$\,0.07 & \underline{2.27}\,$\pm$\,0.15 & \textbf{2.07}\,$\pm$\,0.09 & 2.09\,$\pm$\,0.09 & 2.08\,$\pm$\,0.10 \\
 & Artificial5 & \texttt{Message} & 1{,}661 & 9.51 & 8.41\,$\pm$\,0.38 & \textbf{8.11}\,$\pm$\,0.30 & 8.28\,$\pm$\,0.30 & \underline{8.53}\,$\pm$\,0.38 & 8.29\,$\pm$\,0.32 \\
 & Real4 & \texttt{Message} & 1{,}499 & 8.18 & \underline{4.58}\,$\pm$\,0.32 & 4.33\,$\pm$\,0.30 & 4.03\,$\pm$\,0.25 & \textbf{3.81}\,$\pm$\,0.26 & 4.25\,$\pm$\,0.33 \\
 & BPIC~2013\textsuperscript{*} & \texttt{Message} & 16{,}499 & 2.85 & \underline{4.32}\,$\pm$\,0.49\,\textsuperscript{\S} & 3.96\,$\pm$\,0.56\,\textsuperscript{\S} & 3.73\,$\pm$\,0.51\,\textsuperscript{\S} & 4.04\,$\pm$\,0.50\,\textsuperscript{\S} & \textbf{2.84}\,$\pm$\,0.68 \\
\midrule
\multirow{5}{*}{\begin{tabular}[c]{@{}l@{}}NV-NMPr\\ (MAE)\end{tabular}}
 & Healthcare & \texttt{Message} & 1{,}350 & 2.82 & 2.23\,$\pm$\,0.18 & 2.23\,$\pm$\,0.20 & \textbf{2.19}\,$\pm$\,0.11 & 2.20\,$\pm$\,0.22 & \underline{2.36}\,$\pm$\,0.37 \\
 & Artificial1 & \texttt{Message} & 700 & 1.29 & \textbf{0.00}\,$\pm$\,0.00\,\textsuperscript{\dag} & 0.01\,$\pm$\,0.00 & \underline{0.18}\,$\pm$\,0.01 & 0.07\,$\pm$\,0.01 & 0.07\,$\pm$\,0.01 \\
 & Artificial5 & \texttt{Message} & 2{,}260 & 2.61 & 0.81\,$\pm$\,0.07 & \textbf{0.80}\,$\pm$\,0.06 & \underline{0.95}\,$\pm$\,0.03 & 0.94\,$\pm$\,0.03 & 0.91\,$\pm$\,0.05 \\
 & Real4 & \texttt{Message} & 1{,}700 & 1.76 & \textbf{0.00}\,$\pm$\,0.00\,\textsuperscript{\dag} & \textbf{0.00}\,$\pm$\,0.00\,\textsuperscript{\dag} & \underline{0.25}\,$\pm$\,0.01 & 0.08\,$\pm$\,0.00 & 0.20\,$\pm$\,0.01 \\
 & BPIC~2013 & \texttt{Message} & 62{,}030 & 1.39 & \underline{1.84}\,$\pm$\,0.11\,\textsuperscript{\S} & 1.72\,$\pm$\,0.11\,\textsuperscript{\S} & 1.45\,$\pm$\,0.12\,\textsuperscript{\S} & 1.63\,$\pm$\,0.13\,\textsuperscript{\S} & \textbf{1.42}\,$\pm$\,0.13\,\textsuperscript{\S} \\
\midrule
\multirow{5}{*}{\begin{tabular}[c]{@{}l@{}}NV-NMPa\\ (MAE)\end{tabular}}
 & Healthcare & \begin{tabular}[c]{@{}l@{}}\texttt{Message}\\(Gynecologist)\end{tabular} & 1{,}350 & 1.50 & 0.80\,$\pm$\,0.06 & 0.80\,$\pm$\,0.06 & \underline{0.82}\,$\pm$\,0.02 & \textbf{0.79}\,$\pm$\,0.08 & 0.81\,$\pm$\,0.11 \\
 & Artificial1 & \begin{tabular}[c]{@{}l@{}}\texttt{Message}\\(PartyA)\end{tabular} & 700 & 0.67 & \textbf{0.01}\,$\pm$\,0.00 & 0.02\,$\pm$\,0.00 & \underline{0.11}\,$\pm$\,0.01 & 0.03\,$\pm$\,0.01 & 0.04\,$\pm$\,0.01 \\
 & Artificial5 & \begin{tabular}[c]{@{}l@{}}\texttt{Message}\\(PartyB)\end{tabular} & 2{,}260 & 1.25 & 0.45\,$\pm$\,0.02 & \textbf{0.44}\,$\pm$\,0.02 & \underline{0.49}\,$\pm$\,0.02 & \underline{0.49}\,$\pm$\,0.02 & 0.46\,$\pm$\,0.02 \\
 & Real4 & \begin{tabular}[c]{@{}l@{}}\texttt{Message}\\(Zoo)\end{tabular} & 1{,}700 & 0.96 & \textbf{0.00}\,$\pm$\,0.00\,\textsuperscript{\dag} & \textbf{0.00}\,$\pm$\,0.00\,\textsuperscript{\dag} & \underline{0.15}\,$\pm$\,0.00 & 0.06\,$\pm$\,0.00 & 0.14\,$\pm$\,0.01 \\
 & BPIC~2013 & \begin{tabular}[c]{@{}l@{}}\texttt{Message}\\(Org line C)\end{tabular} & 62{,}030 & 0.51 & \underline{0.74}\,$\pm$\,0.04\,\textsuperscript{\S} & 0.69\,$\pm$\,0.04\,\textsuperscript{\S} & \textbf{0.56}\,$\pm$\,0.05\,\textsuperscript{\S} & 0.66\,$\pm$\,0.05\,\textsuperscript{\S} & 0.60\,$\pm$\,0.05\,\textsuperscript{\S} \\
\midrule
\multirow{5}{*}{\begin{tabular}[c]{@{}l@{}}OB-P\\ ($\mathrm{F1}_{\mathrm{macro}}$)\end{tabular}}
 & Healthcare & \begin{tabular}[c]{@{}l@{}}\texttt{Participant}\\(Gynecologist)\end{tabular} & 1{,}350 & 0.44 & \underline{0.92}\,$\pm$\,0.01 & 0.93\,$\pm$\,0.01 & \underline{0.92}\,$\pm$\,0.03 & 0.93\,$\pm$\,0.03 & \textbf{0.94}\,$\pm$\,0.01 \\
 & Artificial1 & \begin{tabular}[c]{@{}l@{}}\texttt{Participant}\\(PartyA)\end{tabular} & 700 & 0.60 & \underline{0.53}\,$\pm$\,0.07\,\textsuperscript{\S} & 0.61\,$\pm$\,0.07 & 0.94\,$\pm$\,0.07 & 0.97\,$\pm$\,0.07 & \textbf{1.00}\,$\pm$\,0.00\,\textsuperscript{\dag} \\
 & Artificial5 & \begin{tabular}[c]{@{}l@{}}\texttt{Participant}\\(PartyB)\end{tabular} & 2{,}260 & 0.50 & \underline{0.92}\,$\pm$\,0.07 & 0.93\,$\pm$\,0.06 & 0.98\,$\pm$\,0.02 & 0.95\,$\pm$\,0.05 & \textbf{1.00}\,$\pm$\,0.00\,\textsuperscript{\dag} \\
 & Real4 & \begin{tabular}[c]{@{}l@{}}\texttt{Participant}\\(Zoo)\end{tabular} & 1{,}700 & 0.47 & \textbf{1.00}\,$\pm$\,0.00\,\textsuperscript{\dag} & \textbf{1.00}\,$\pm$\,0.00\,\textsuperscript{\dag} & \textbf{1.00}\,$\pm$\,0.00\,\textsuperscript{\dag} & \textbf{1.00}\,$\pm$\,0.00\,\textsuperscript{\dag} & \textbf{1.00}\,$\pm$\,0.00\,\textsuperscript{\dag} \\
 & BPIC~2013 & \begin{tabular}[c]{@{}l@{}}\texttt{Participant}\\(Org line C)\end{tabular} & 62{,}030 & 0.42 & \underline{0.57}\,$\pm$\,0.01 & 0.64\,$\pm$\,0.01 & 0.63\,$\pm$\,0.01 & 0.61\,$\pm$\,0.01 & \textbf{0.65}\,$\pm$\,0.01 \\
\midrule
\multirow{5}{*}{\begin{tabular}[c]{@{}l@{}}OB-M\\ ($\mathrm{F1}_{\mathrm{macro}}$)\end{tabular}}
 & Healthcare & \begin{tabular}[c]{@{}l@{}}\texttt{Message}\\(Communicate disease)\end{tabular} & 1{,}350 & 1.00\,\textsuperscript{\ddag} & \textbf{1.00}\,$\pm$\,0.00\,\textsuperscript{\dag} & \textbf{1.00}\,$\pm$\,0.00\,\textsuperscript{\dag} & \textbf{1.00}\,$\pm$\,0.00\,\textsuperscript{\dag} & \textbf{1.00}\,$\pm$\,0.00\,\textsuperscript{\dag} & \textbf{1.00}\,$\pm$\,0.00\,\textsuperscript{\dag} \\
 & Artificial1 & \begin{tabular}[c]{@{}l@{}}\texttt{Message}\\(Activity CZ)\end{tabular} & 700 & 0.39 & \textbf{0.92}\,$\pm$\,0.02 & \textbf{0.92}\,$\pm$\,0.02 & \textbf{0.92}\,$\pm$\,0.00 & \underline{0.91}\,$\pm$\,0.01 & \textbf{0.92}\,$\pm$\,0.01 \\
 & Artificial5 & \begin{tabular}[c]{@{}l@{}}\texttt{Message}\\(Activity AA)\end{tabular} & 2{,}260 & 0.49 & \textbf{1.00}\,$\pm$\,0.00\,\textsuperscript{\dag} & \textbf{1.00}\,$\pm$\,0.00\,\textsuperscript{\dag} & 0.96\,$\pm$\,0.08 & \underline{0.91}\,$\pm$\,0.17 & \textbf{1.00}\,$\pm$\,0.00\,\textsuperscript{\dag} \\
 & Real4 & \begin{tabular}[c]{@{}l@{}}\texttt{Message}\\(Send info. to the ZooClub dept.)\end{tabular} & 1{,}700 & 0.47 & \textbf{1.00}\,$\pm$\,0.00\,\textsuperscript{\dag} & \textbf{1.00}\,$\pm$\,0.00\,\textsuperscript{\dag} & \textbf{1.00}\,$\pm$\,0.00\,\textsuperscript{\dag} & \textbf{1.00}\,$\pm$\,0.00\,\textsuperscript{\dag} & \textbf{1.00}\,$\pm$\,0.00\,\textsuperscript{\dag} \\
 & BPIC~2013 & \begin{tabular}[c]{@{}l@{}}\texttt{Message}\\(Queued)\end{tabular} & 62{,}030 & 0.41 & 0.65\,$\pm$\,0.00 & \textbf{0.71}\,$\pm$\,0.01 & \underline{0.64}\,$\pm$\,0.02 & \textbf{0.71}\,$\pm$\,0.01 & \textbf{0.71}\,$\pm$\,0.01 \\
\bottomrule
\end{tabular}\\[2pt]
}
\end{widesidewaystable}

\newpage

%=====================================================================
\section*{CRediT authorship contribution statement}
\textbf{[Daniel Calegari]:} Conceptualization, Methodology, Investigation, Formal analysis, Software, Validation, Writing -- original draft, Writing -- review \& editing.
\textbf{[Andrea Delgado]:} Conceptualization, Investigation, Validation, Writing -- review \& editing.
\textbf{[Leonel Peña]:} Software, Validation.
\textbf{[Martín Rubio]:} Software, Validation.

\section*{Declaration of competing interest}
Given her role as Guest Editor of the special issue 
``Collaboration Process Mining for Distributed Systems'', 
Andrea Delgado had no involvement in the peer-review of this article and has no access to information regarding its peer-review. Full responsibility for the editorial process for this article was delegated to another journal editor.  Other authors declare that they have no known competing financial interests or personal relationships that could have appeared to influence the work reported in this paper.

\section*{Declaration of generative AI and AI-assisted technologies in the manuscript preparation process}
During the preparation of this work, the authors used Claude, ChatGPT, and Gemini in order to assist with drafting and language refinement of the manuscript, to support the structuring of its conceptual and methodological argumentation, and to assist with code generation. After using these tools, the authors reviewed and
edited the content as needed and took full responsibility for the content of the published article.

\section*{Data availability}
The source code, experimentation scripts, and results are publicly available \cite{OCPPMcollab}; the original event logs are those from \cite{delgado2025predictcollab} and \cite{DBLP:conf/bpm/2013bpic}.

\bibliographystyle{elsarticle-num}
\bibliography{bibliography}
 
\end{document}